\newcommandx{\change}[2][1=]{\todo[linecolor=blue,backgroundcolor=blue!25,bordercolor=blue,#1]{#2}}
\newcommand{\err}[1]{\mathrm{err}_m\xspace\br{#1}}
\newcommand{\fair}[1]{\Gamma_m\xspace \br{#1}}
\newcommand{\errinf}[1]{\mathrm{err}\xspace\br{#1}}
\newcommand{\fairinf}[1]{\Gamma\xspace \br{#1}}
\newcommand{\dist}{\Pi_{p,N}}
\newcommand{\Sell}{S^\ell}
\newcommand{\Soell}{S_{2}^\ell}
\newcommand{\Stell}{S_{1}^\ell}
\newcommand{\Slabel}{S_f}
\newcommand{\hsample}[1]{h\sim\cA\br{S_{#1}}}
\newcommand{\clblpriorx}[1]{F_{\vert S\setminus\bc{#1},f}}
\newcommand{\clblprior}{F_{\vert S\setminus\bc{x'},f}}
\newcommand{\Qlblprior}{\cF_{\vert S\setminus\bc{x},Q}}
\newcommand{\Qlblset}{F_{\vert S\setminus\bc{x},Q}}
\newcommand{\Qxlblprior}{\cF_{\vert \zeta,Q}}
\newcommand{\lblprior}{\cF}
\newcommand{\lblset}{F}
\newcommand{\cifar}{CIFAR-10\xspace}
\newcommand{\indic}[1]{\mathbbm{I}\bc{#1}}
\newcommand{\pmvc}[1]{\mathrm{PMVC}\br{#1}}
\newcounter{relctr} %
\everydisplay\expandafter{\the\everydisplay\setcounter{relctr}{0}} %
\newcommand\labelrel[2]{%
  \begingroup
    \refstepcounter{relctr}%
    \stackrel{\textnormal{(\alph{relctr})}}{\mathstrut{#1}}%
    \originallabel{#2}%
  \endgroup
}
\def\@fnsymbol#1{\ensuremath{\ifcase#1\or \dagger\or \ddagger\or
   \mathsection\or \mathparagraph\or \|\or **\or \dagger\dagger
   \or \ddagger\ddagger \else\@ctrerr\fi}}
\title{How unfair is private learning?}
\author[1,3]{\href{mailto:<amartya18x@gmail.com>?Subject=fairness
privacy paper}{Amartya Sanyal}\textsuperscript{*}}
\author[2]{Yaxi Hu\textsuperscript{*}}
\author[3]{Fanny Yang}
\affil[1]{%
    ETH AI Center\\
    ETH Z\"urich\\
    Z\"urich, Switzerland.
}
\affil[2]{%
    Department of Mathematics\\
    ETH Z\"urich\\
    Z\"urich, Switzerland.
}
\affil[3]{%
    Department of Computer Science\\
    ETH Z\"urich\\
    Z\"urich, Switzerland.
  }
\date{}
\begin{document}
\maketitle

\begin{abstract}
    As machine learning algorithms are deployed on sensitive data in
    critical decision making processes, it is becoming increasingly
    important that they are also private and fair. In this paper, we show
    that, when the data has a long-tailed structure, it is not possible to
    build accurate learning algorithms that are both private and results
    in higher accuracy on minority subpopulations. We further show that
    relaxing overall accuracy can lead to good fairness even with strict
    privacy requirements. To corroborate our theoretical results in
    practice, we provide an extensive set of experimental results using a
    variety of synthetic, vision~(\cifar and CelebA), and tabular~(Law School)
    datasets and learning algorithms.
    \end{abstract}

\section{Introduction}
\label{sec:intro}
In recent years, reliability of machine learning algorithms have
become ever more important due to their widespread use in daily life.
Fairness and privacy are two instances of such reliability traits that
are desirable but often absent in modern machine learning
algorithms~\citep{shokri2017membership,de2019does,buolamwini2018gender}%
As a result, there has
been a flurry of recent works that aim to improve these properties in
commonly used learning algorithms.  However, most of these works
discuss these two properties individually with relatively less
attention paid to how they affect each other.

There is a multitude of definitions for privacy and fairness in their
respective literatures. Perhaps the most widespread statistical notion
of privacy is that of {\em Differential Privacy}~\citep{Dwork_2006}
and its slightly relaxed variant, {\em Approximate Differential
Privacy}~\citep{DMNS2006}.  Despite its marginally weaker privacy
guarantees, {\em Approximate Differential Privacy} enjoys better
theoretical guarantees in terms of statistical complexity for
learning~\citep{feldman2014sample,beimel2013private}. It is also more
widely used in practice~\citep{opacus,Abadi_2016}. Thus, we always use
approximate differential privacy in this paper and for the sake of
brevity, refer to it as differential privacy~(DP). Intuitively, DP
limits the amount of influence any single data point has on the output
of the DP algorithm. This ensures that DP algorithms do not leak
information about whether any particular data point was given as input
to the algorithm. While DP was initially popular as a theoretical
construct, it has recently been put to practical use by large
companies~\citep{googledp,appledp} and governments~\citep{uscensus}
alike. Its popularity is largely due to its strong privacy guarantees,
ease of implementation, and the quantitative nature of differential
privacy. 

\looseness=-1
There are many notions of fairness in machine
learning~\citep{hardt2016equality,hebert2018multicalibration,dwork2012fairness}.
Minority or worst group accuracy~\citep{koh2021wilds,du2021fairness}
and its difference from the overall accuracy is a common notion of
fairness used in recent works. We define this difference as {\em
accuracy discrepancy} and use it to measure the degree of unfairness
in this paper. However, we expect our results to translate to other
related fairness metrics as well.~\citet{Sagawa2020Distributionally}
observed that robust optimisation methods~(under appropriate
regularisations) obtain higher minority group accuracy but at the cost
of a lower overall accuracy compared to vanilla training. Several
other works have also observed this
behavior in practice thereby suggesting a possible trade-off between
overall accuracy and fairness metrics. Subsequent works
including~\citet{goel2020} and~\citet{menon2021overparameterisation}
have tried to minimise this trade-off.

While there have been a large body of works that aim to minimising the
trade-off between accuracy and
fairness~\citep{goel2020,Sagawa2020Distributionally,du2021fairness}
and between accuracy and privacy~\citep{DMNS2006,ghazi2021}, there is
relatively few works~\citep{fioretto2022differential} that investigate
the intersection of privacy, fairness, and accuracy. In this paper, we
provide theoretical and experimental results to show that private and
accurate algorithms are necessarily unfair. We further show that
achieving privacy and fairness simultaneously leads to inaccurate
algorithms. 

\noindent\textbf{Contributions}
Our main contributions can be stated as ---
\begin{itemize}[leftmargin=5.5mm]
    \item In~\Cref{thm:thm_2,thm:thm_1}, we provide asymptotic lower
    bounds for unfairness~(accuracy discrepancy) of DP algorithms,
    that are accurate, showing that privacy and accuracy comes at the
    cost of fairness.
    \item In~\Cref{thm:thm_3}, we show that in a very strict privacy
    regime, fairness can be achieved at the cost of accuracy.
    \item In~\Cref{sec:exp}, we conduct experiments using
    multiple architectures on synthetic and real world
    datasets~(CelebA, \cifar, and Law School)
    to validate our theory.
\end{itemize}

\subsection*{Related works}

It is now well understood that by imposing these additional conditions
of DP more data are required to achieve high accuracy. A string of
theoretical
works~\citep{feldman2014sample,Beimel_2010,bun2020equivalence}
have shown that the sample complexity of learning certain concept
classes privately and accurately can be arbitrarily larger than
learning the same classes non-privately~(i.e. with high accuracy but
without privacy). On the other hand, it is easy to guarantee any
arbitrary level of differential privacy if high accuracy is not
desired. This can be achieved by simply composing the output of an
accurate classifier with a properly calibrated {\em randomised
response} mechanism~\citep{Warner1965}. This allows for a tradeoff
between differential privacy and accuracy.

One of the most popular notions of fairness is {\em group fairness}
that compares the performance of the algorithm on a minority group
with other groups in the data. A popular instantiation of this,
especially for deep learning algorithms, is comparing the accuracy on
the minority group against the entire
population~\citep{Sagawa2020Distributionally,koh2021wilds,du2021fairness,
goel2020}. In the fairness
literature,~\citet{buolamwini2018gender,Raz_2021} shows extensively
that this discrepancy is large between different groups of people for
popular facial recognition systems.

DP-SGD~\citep{Abadi_2016} is a widely used algorithm for implementing
differentially private deep learning
models.~\citet{bagdasaryan2019differential} provides some experimental
evidence that DP-SGD can have disparate impact on
accuracy.~\citet{tran2021differentially} provides theoretical
explanation for why DP-SGD and output perturbation based DP methods
suffer from fairness issues. Conversely,~\citet{Chang_2021} shows,
experimentally, that fairness aware machine learning algorithms suffer
from less privacy. However, unlike these works, we provide theoretical
results that are model agnostic and that discuss the dependance of the
trade-off on the subpopulation sizes and frequencies.

\citet{Cummings2019} and~\citet{Agarwal2021} were one of the first to
consider the impact of privacy on fairness theoretically. They
construct a distribution where any algorithm that is always fair and
private will necessarily output a trivial constant classifier, thereby
suggesting a tradeoff between fairness and privacy. However, there are
multiple drawbacks with their work. First, their work only discusses
pure differential privacy which is not only theoretically more
restrictive than approximate differential
privacy~\citep{beimel2013characterizing,beimel2013private,feldman2014sample}
but also rarely used in practice. Second, their proof heavily relies
on it being {\em pure} differential privacy and the algorithm being
{\em always} fair; and their proofs are not amenable to relaxations of
these assumptions. Third, their result does not show how unfairness
increases with stricter privacy or different properties of the
distribution. Further, they do not provide experiments to corroborate
their theory perhaps due to the unrealistic requirements of pure DP.
On the other hand, we look at approximate DP~(which is a stronger
result than pure DP), construct bounds for both fairness and error,
and provide experimental results to support our theory.  Perhaps, most
closely related to our work is that of~\citet{Feldman2019}, who
studies, mainly, the impact of memorisation on test accuracy for
long-tailed distributions. However, neither does their work focus on
differential privacy nor fairness.

\section{Theoretical results}
\label{sec:theory}
The main contribution of our work is to provide a qualitative
explanation for why and when differentially private algorithms cannot
be simultaneously accurate and fair. Real world data distributions
often contain a large number of subpopulations with very few examples
in each of them and a few subpopulations with a large number of
examples.

For example,~\Cref{fig:intro-assump}~(left) depicts the distribution
of subpupoluations in CelebA. Using the 40 attributes of the CelebA
dataset~\citep{liu2015faceattributes}, we partition the training set~(of
size \(m=160k\)) into \(2^{40}\) subpopulation bins. The blue shaded
area shows the group of the subpopulations with large number of
examples~(probability mass greater than \(\frac{1}{m}\)) and
the red shaded area corresponds to subpopulations which contain just
one example in them. We refer to the subpopulations in the red area as
{\em minority subpopulations} and the subpopulations in the blue area
as {\em majority subpopulations}. 
This long-tailed structure over subpopulations have also been observed
in~\citet{zhu2014capturing} in other vision datasets like the
SUN~\citep{xiao2010sun} and PASCAL~\citep{everingham2010pascal}
datasets.~\citet{babbar2019data} observes this structure in extreme multilabel
classification datasets like Amazon-670K~\citep{mcauley2013hidden} and
Wikipedia-31k~\citep{Bhatia16} datasets. Various other
works~\citep{van2017devil,krishna2017visual,wang2017learning,cui2018large}
have shown this structure in a range of datasets including
eBird~\citep{sullivan2009ebird}, Visual
Genome~\citep{krishna2017visual}, Pasadena
trees~\citep{wegner2016cataloging}, and
iNaturalist~\citep{van2018inaturalist}.

\subsection{Problem setup}
\label{sec:prelim}

Mathematically, the large number of small subpopulations discussed
above constitute the {\em long tail} of the distribution. We use this
structure of data distributions to illustrate the tension between
accuracy and fairness of private algorithms. For our theoretical
results, we view each subpopulation as an element of a discrete set
\(X\) without any intrinsic structure such as distance. This
distribution is inspired by the use of a similar distribution
in~\citet{Feldman2019}. Next, we define a distribution over the subpopulations in \(X\) that reflect
the {\em long-tailed} structure.

\begin{defn}[\(\br{p,N,k}\)-long-tailed distribution on \(X\)]
    \label{defn:comp-unif}
    Given \(p\in\br{0,1},~N\in\bN\), and \(1<k\ll N\), define two groups
    \begin{enumerate*}[label=(\roman*)]\item~the group of majority subpopulations~\(X_1\subset X\) where \(\abs{X_1}=\br{1-p}k\) and %
    \item~the group of minority subpopulations~\(X_2\subset X\setminus X_1 \), where  \(|X_2| = N\)\end{enumerate*}
    \footnote{WLOG we will assume that \(k\) is such that
    \(\br{1-p}k\) is an integer and if not, replace \(k\) with the
    closest number such that \(\br{1-p}k\) is an integer.}.   Now, define the distribution \(\Pi_{p, N, k}\) as  
    \begin{equation}
        	\Pi_{p,N, k}(x) = \left\{\begin{array}{lr}
		\frac{1}{k}& x\in X_1\\
		\frac{p}{N} & x\in X_2.
    		\end{array}
    \right.
    \end{equation}
    We use the terms {\em group of majority subpopulations} and {\em majority
group} interchangeably to denote \(X_1\) and the terms {\em group of
minority subpopulations} and {\em minority group} to refer to \(X_2\)
respectively. 
\end{defn}

\begin{figure}[t]\centering
    \begin{subfigure}[c]{0.45\linewidth}
        \centering
        \def\svgwidth{0.99\columnwidth}
        \input{./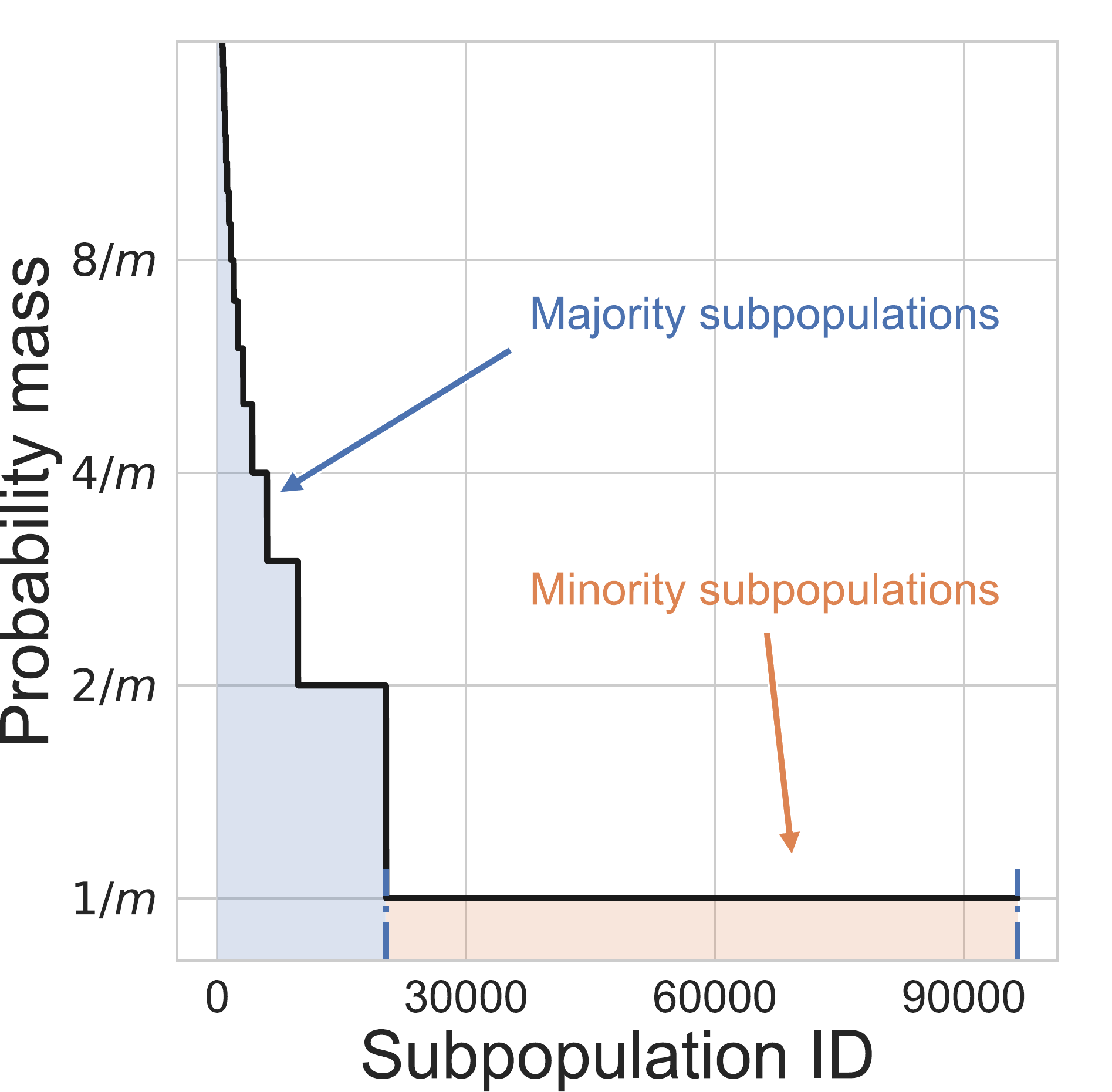_tex}%
        \end{subfigure}
    \begin{subfigure}[c]{0.54\linewidth}
        \centering
        \def\svgwidth{0.99\columnwidth}
        \input{./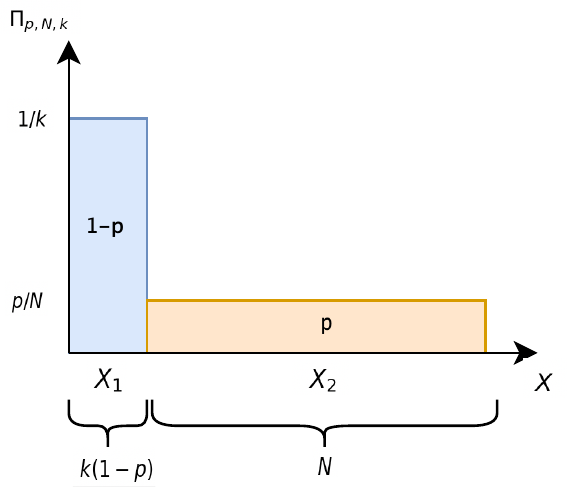_tex}%
        \end{subfigure}
    \caption{{\bfseries (Left)} Illustration of the distribution of
    majority and minority subpopulations of CelebA. Here \(m=160k\)
    is the total size of the training set of CelebA. {\bfseries (Right)}~Illustration of \(\Pi_{p,N,k}\).}
    \label{fig:intro-assump}
  \end{figure}

We provide an illustration of the distribution
in~\Cref{fig:intro-assump}~(right).  Here, \(p\) denotes the total probability mass
of the group of minority subpopulations under \(\Pi_{p,N,k}\) and
\(N\) denotes the number of minority subpopulations.  We let \(N\) go
to \(\infty\) and treat~\(k\) as a constant. Thus, for the sake of
simplicity, we remove \(k\) from the notation of the distribution.
Note that each minority subpopulation i.e. each element in \(X_2\) has a
probability mass of the order of \(\bigO{\frac{1}{N}}\) which is much
smaller than \(\frac{1}{k}=\Omega\br{1}\), i.e. the probability mass of each
element in \(X_1\).

Note that in the distribution for CelebA~(\Cref{fig:intro-assump}~(left))
the probability masses of the majority subpopulations not exactly
equal to \(\frac{1}{k}\) as in the distribution
of~\Cref{defn:comp-unif}. %
However, all our results hold true even if the majority subpopulations
have different probability masses, as long as
\(\dist\br{x}=\Omega\br{\frac{1}{k}}\) for some \(k=\bigO{1}\) for all
\(x\in X_1\). We set them to \(\frac{1}{k}\) only for simplicity of
the theoretical results. 

As we deal with a multiclass classification setup, we also define a
label space \(\cY\) and a function space \(\lblset\) of labelling
functions. We use \(\lblprior\) to represent a distribution on the
function space \(\lblset\) and refer to this distribution as the label
prior. Our results do not restrict the size of \(\cY\) and hence,
cover both binary and multi-class classification settings. We further
use the notation $\lblprior\br{x}$ to denote the probability mass
function induced over \(\cY\) as
\(\lblprior\br{x}\bs{y}=\bP_{f\sim\lblprior}\bs{f\br{x}=y}\).

For any \(\zeta\subset X\) and labelling \(Q\in\cY^{\abs{\zeta}}\),
let \(\Qxlblprior\) denote the marginal distribution over the subset
of labelling functions that satisfy the labelling \(Q\) on \(\zeta\).
For each \(x\in X\setminus \zeta\), this induces a probability mass
function over \(\cY\) as \(\Qxlblprior\br{x}\bs{y}=\bP_{f\sim
\Qxlblprior}\bs{f\br{x}=y}\). We say a label prior  \(\cF\) is {\em
subpopulation-wise independent} if the distribution
\(\Qxlblprior\br{x}\) is independent of the choice of \(Q\) and
\(\zeta\) for all \(x\in X\) i.e. \(\lblprior\br{x} =
\Qxlblprior\br{x}\) for all \(x\). Intuitively, this means that the
uncertainty in labelling a subpopulation \(x\in X\) does not change
when the labelling of any other group of subpopulations is fixed.
Finally, define
\begin{equation}\label{eq:un-entropy}
    \norm{\cF}_\infty = \max_{x\in X,y\in\cY}\cF\br{x}\bs{y}.
\end{equation}

To generate a dataset of size \(m\) from a
\(\br{p,N}\)-long-tailed distribution \(\Pi_{p,N}\) on \(X\), first
sample an unlabelled dataset  \(S=\bc{x_1,\cdots,x_m}\) of size \(m\)
from \(\dist\). Then, generate the labelled dataset
\(\Slabel=\bc{\br{x_1,f\br{x_1}},\ldots,\br{x_m,f\br{x_m}}}\) using a
labelling function \(f\sim\lblprior\). In all our theoretical results,
we consider an asymptotic regime where \(\frac{N}{m}\to c\) as
\(N,m\to\infty\). This is common in high-dimensional statistics where
the number of dimensions often grows to \(\infty\) along with the
sample size.  Intuitively, \(c\) quantifies the hardness of the
learning problem as it is inversely proportional to the number of data
points observed per minority subpopulation. 

Next, we define the error and fairness measure of an algorithm on the
distribution defined above. Consider a domain \(X\), a label space
\(\cY\), a set of labelling functions \(\lblset\), a label prior
\(\lblprior\), and a distribution \(\Pi_{p,N}\) on \(X\).%

\subsection{Privacy, error, and fairness}
In the context of this paper, a differentially private~(randomised)
learning algorithm generates similar distributions over classifiers
when trained on {\em neighbouring datasets}. Two datasets are
neighbouring when they differ in one entry. Formally, 
\begin{defn}[Approximate Differential Privacy~\citep{DMNS2006,Dwork_2006}]
    Given any two neighbouring datasets
    \(S,S^\prime\),~\(\epsilon>0\), and \(\delta\in\br{0,1}\) an algorithm \(\cA\) is called
    \(\br{\epsilon,\delta}\)-differentially private if for all sets of
    outputs \(\mathcal{Z}\), the following holds
    \[\bP\bs{\cA\br{S}\in\mathcal{Z}}\leq
    e^\epsilon\bP\bs{\cA\br{S^\prime}\in\mathcal{Z}}+\delta.\]
\end{defn}
    
    Next, we define the error of an algorithm in our problem setup.
    For a randomised learning algorithm \(\cA\), a distribution
    \(\dist\), a label prior \(\lblprior\), we can define the error of
    the algorithm as follows
\begin{defn}[Error measure on \(\Pi_{p,N}\)]\label{defn:err_dist} The
    error of the algorithm \(\cA\) trained on a dataset of size \(m\)
    from the distribution \(\Pi_{p,N}\) with respect to a label prior
    \(\lblprior\) is
    \begin{equation}
        \err{\cA,\Pi_{p,N},\lblprior} = \bE\bs{\indic{h\br{x}\neq f\br{x}}}
    \end{equation}
    where \(\indic{\cdot}\) is the indicator function and the
    expectation is over
    \(S\sim\Pi_{p,N}^m,f\sim\lblprior,\hsample{f}\), and~\(x\sim\Pi_{p,N}\).
\end{defn}

Note that the error metric with an expectation over~\(\lblprior\), was
previously used in~\citet{Feldman2019}.  In fact, for the purpose of
lower bounds on unfairness, this is a stronger notion than the worst
case \(f\in\lblset\) as, here, the lower bound is on the expectation
which is stronger than a lower bound on the worst case.  Next, we
define the  {\em accuracy discrepancy} of an algorithm, represented by
\(\Gamma\) over the distribution \(\dist\). For this purpose, for any
\(\dist\), define the marginal distribution on the group of  {\em
minority} subpopulations \(X_2\) as
\begin{equation}
    \label{eq:minority-marginal-dist}
        \enskip \dist^2\br{x}=\begin{cases} 
        \dfrac{\Pi_{p,N}\br{x}}{\sum_{x\in X_2}\Pi_{p,N}\br{x}}=\dfrac{\Pi_{p,N}\br{x}}{p} & x\in X_2 \\
        0 & x\not\in X_2
     \end{cases}
\end{equation}

\begin{defn}[Accuracy discrepancy on
    \(\Pi_{p,N}\)]\label{defn:fairness} For \(X, \lblprior,
    \Pi_{p,N},\) and \(\dist^2\) as defined above, the accuracy
    discrepancy of the algorithm \(\cA\) trained on a dataset of size
    \(m\)  on the distribution \(\Pi_{p,N}\), with respect to the
    label prior \(\lblprior\), is
    \begin{equation}\label{eq:defn-fair}
        \fair{\cA,\Pi_{p,N},\lblprior} = \err{\cA,\dist^2,\lblprior} - \err{\cA,\Pi_{p,N},\lblprior}.
    \end{equation} where \(\err{\cdot}\) is as defined in~\Cref{defn:err_dist}.
\end{defn}
This notion of group fairness is similar to the notion of subgroup
performance gap used in~\citet{goel2020} and has also been implicitly
used in multiple
works~\citet{Sagawa2020Distributionally,koh2021wilds,du2021fairness}
as discussed before. It has also been used in works related to the
privacy~\citep{bagdasaryan2019differential,Chang_2021} and fairness
literature~\citep{buolamwini2018gender,Raz_2021}.

\renewcommand{\arraystretch}{1.2}
\begin{table*}\centering
    \begin{tabular}{||c@{\quad\quad}l||}\toprule
        Notation&Description\\\midrule
       \(m\)& Size of dataset\\
        \(N\)& Number of minority subpopulations\\
        \(p\)& Probability of minority group\\
        \(k\)& Reciprocal of the probability of individual
        subpopulations in \(X_2\)\\
        \(c\)& Ratio of \(N\) and \(m\)\\
        \(\epsilon,\delta\)& Privacy parameters of Approximate
        Differential Privacy\\
        \(p_1\) & Parameter in Assumption~\ref{assump:A1}\\
        \(\cA\)& Randomised learning algorithm\\
        \(X, X_1,X_2\) & Entire data domain, majority, and minority group respectively\\
        \(\dist, \dist^2\)&Data distribution on \(X\) and marginal
        distributions on~\(X_2\) respectively \\
        \(S, \Slabel\)& m-sized unlabelled and labelled~(with \(f\in F\)) dataset respectively\\
        \(\Sell\)& All points that appear \(\ell\) times in \(S\)\\
        \(\lblset,\lblprior\) & Set of labelling functions and distribution over the set respectively\\\bottomrule
    \end{tabular}
    \caption{A table of notations frequently used in the text}
    \label{tab:notation-table}
\end{table*}
\renewcommand{\arraystretch}{1.}

\subsection{Privacy and accuracy at the cost of fairness}
\label{sec:main-theory}

The theoretical results below use the definitions and notations
described above and summarised in~\Cref{tab:notation-table}.  First,
in~\Cref{thm:thm_1}, we show that there are distributions~(within the
family of distributions defined in~\Cref{defn:comp-unif}) where any
accurate and approximately differentially private algorithm~(with
additional assumptions) is necessarily unfair.  In~\Cref{thm:thm_2},
we relax some of the stronger assumptions and present a more general
result. As discussed above, the dataset size \(m\) and the number of
minority subpopulations \(N\) both simultaneously go to \(\infty\) and
the ratio \(\frac{N}{m}\) asymptotically  approaches \(\frac{N}{m}\to
c\). Throughout this section, we also use the notation \(\errinf{\cA,
\Pi_{p,N}, \lblprior}=\lim_{m,N\to\infty} \errinf{\cA, \Pi_{p,N},
\lblprior}\) and \(\fairinf{\cA, \Pi_{p,N},
\lblprior}=\lim_{m,N\to\infty} \fairinf{\cA, \Pi_{p,N}, \lblprior}\)
to denote the asymptotic limit for the error and the accuracy
discrepancy metrics as \(m,N\to\infty\).
\begin{thm}
    \label{thm:thm_1}
For \(\epsilon\in\br{0,1.11}\) and  \(\delta\in\br{0,0.01}\), consider
any \(\br{\epsilon,\delta}\)-DP algorithm \(\cA\) that does not make mistakes on subpopulations occurring
  more than once in the dataset. Then, there exists a family of label priors \(\cF\) where for any \(\alpha\in\br{0,0.025}\), there exists \(p\in\br{0,\nicefrac{1}{2}},c>0\) such that, 
\[\errinf{\cA, \Pi_{p,N}, \lblprior} \leq
    \alpha~\quad\mathrm{and}\quad\fairinf{\cA, \Pi_{p,N}, \lblprior}\geq
    0.5.\] where \(\frac{N}{m}\to c\) as \(N,m\to\infty\).
\end{thm}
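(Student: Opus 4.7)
The strategy is to split the expected error into contributions from the majority group $X_1$ and the minority group $X_2$, show that the majority contribution vanishes asymptotically, and lower bound the minority contribution via differential privacy. Writing $E_j = \bE[\indic{h(x)\neq f(x)}\mid x\in X_j]$, we have $\err{\cA,\dist,\lblprior} = (1-p)E_1 + p E_2$ and $\fair{\cA,\dist,\lblprior} = (1-p)(E_2 - E_1)$. Since every $x\in X_1$ has mass $1/k=\Omega(1)$, a standard Chernoff bound shows that the probability a given $x\in X_1$ appears at most once in $S$ is exponentially small in $m$; combined with the hypothesis that $\cA$ is always correct on points appearing $\geq 2$ times, this yields $E_1 = o(1)$ as $m,N\to\infty$.

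The core of the argument is a DP-based lower bound on $E_2$. For a fixed $x\in X_2$, decompose by the number of times $x$ appears in $S$. The case $\geq 2$ contributes zero error by hypothesis. In the case of $0$ appearances, subpopulation-wise independence makes $f(x)$ independent of $\Slabel$, so averaging over $f(x)\sim\lblprior(x)$ gives $\bP[h(x)=f(x)]\leq \norm{\lblprior}_\infty$. In the case of exactly one appearance, replace the single labelled entry $(x,f(x))$ in $\Slabel$ by a fixed pair $(x_0,y_0)$ to obtain a neighbouring dataset $\Slabel'$ that is independent of $f(x)$. Approximate DP yields $\bP[\cA(\Slabel)(x)=y]\leq e^\epsilon\bP[\cA(\Slabel')(x)=y]+\delta$ for every $y$, and averaging both sides over $y=f(x)\sim\lblprior(x)$ (which, crucially by subpopulation-wise independence, remains the correct conditional law for $f(x)$ given the rest of the labels) gives $\bP[h(x)=f(x)\mid x\text{ appears once in }S]\leq e^\epsilon \norm{\lblprior}_\infty + \delta$. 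Taking $m,N\to\infty$ with $N/m\to c$, the Binomial occurrence counts $\binom{m}{j}(p/N)^j(1-p/N)^{m-j}$ converge to a $\mathrm{Poisson}(p/c)$ distribution, so
\[
E_2 \geq e^{-p/c}\br{1-\norm{\lblprior}_\infty} + (p/c)\,e^{-p/c}\br{1 - e^\epsilon\norm{\lblprior}_\infty - \delta}.
\]

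The final step is to choose $\lblprior$ and the parameters. Take $\lblprior$ to be uniform over a sufficiently large discrete label set $\cY$ so that $\norm{\lblprior}_\infty=1/\abs{\cY}$ makes $\eta := 1 - e^\epsilon \norm{\lblprior}_\infty - \delta$ a constant bounded away from zero; the stated ranges on $\epsilon$ and $\delta$ determine the smallest workable $\abs{\cY}$. Then pick $c$ with $p/c$ small so $E_2 \geq \eta - o(1)$, and pick $p \leq \alpha/\eta$; this delivers $\errinf{\cA,\dist,\lblprior} = pE_2 + o(1) \leq \alpha$ while $\fairinf{\cA,\dist,\lblprior} = (1-p)E_2 \geq (1-p)\eta \geq 0.5$ whenever $\alpha$ lies in the stated range.

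The main obstacle will be the DP-plus-averaging step: because $\Slabel$ and $f(x)$ are correlated through the single entry $(x,f(x))$, one cannot simply take expectations inside the DP inequality without subpopulation-wise independence, and the argument has to set up the neighbouring dataset $\Slabel'$ in a way that decouples $f(x)$ from the algorithm's input. A secondary technicality is tracking the two asymptotic limits $m,N\to\infty$ with $N/m\to c$ uniformly over the parameter choices so that the error bound and the fairness bound can be chained together to yield the quantitative conclusions.
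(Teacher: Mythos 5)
Your argument follows essentially the same route as the paper's proof: decompose the error and the accuracy discrepancy over the two groups and over occurrence counts, show that the majority contribution vanishes because each majority subpopulation has mass $\Omega(1)$ and is therefore seen at least twice except with probability exponentially small in $m$, lower-bound the minority error on singletons via the neighbouring-dataset argument (your bound $e^{\epsilon}\norm{\lblprior}_\infty+\delta$ is exactly the paper's constant $c_2$), and close by choosing $p$ small and $p/c$ small. The Poisson limit you invoke for the occurrence counts is interchangeable with the paper's direct evaluation of $\bE\bs{\abs{S_2^0}}$ and $\bE\bs{\abs{S_2^1}}$, and your explicit handling of the decoupling of $f(x)$ from $\Slabel$ via subpopulation-wise independence is if anything more careful than the paper's.

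The one slip is in the final parameter choice. You set $p\leq\alpha/\eta$ and claim $\errinf{\cA,\dist,\lblprior}=pE_2+o(1)\leq\alpha$, but this would require the upper bound $E_2\leq\eta$, whereas you have only established the lower bound $E_2\geq\eta-o(1)$; for a worst-case algorithm in the stated class, $E_2$ can be as large as $1$, and since $\alpha/\eta>\alpha$ the inequality does not follow as written. The repair is immediate: bound $E_2\leq 1$ so that $\errinf{\cA,\dist,\lblprior}\leq p$, and take $p\leq\alpha\leq 0.025$; then $\fairinf{\cA,\dist,\lblprior}\geq(1-p)\eta\geq 0.975\,\eta\geq 0.5$ still holds provided $\abs{\cY}$ is large enough that $\eta=1-e^{1.11}\abs{\cY}^{-1}-0.01\geq 0.52$ (e.g.\ $\abs{\cY}\geq 7$). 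With that one-line fix the argument is complete.
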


An immediate consequence of unfairness~\(\Gamma_m\) being greater than
\(0.5\), coupled with the very small error \(\alpha\), is that the
algorithm essentially behaves worse than random chance on the minority
subpopulations thereby rendering the algorithm useless for these
subpopulations. Below, we present a proof sketch and discuss the
results. A detailed version of the theorem along with its full proof
is relegated to~\Cref{app:thm_1}. 

\noindent\textbf{Proof sketch}  By~\Cref{defn:comp-unif}, the
probability mass of each majority subpopulation is~\(\Omega\br{1}\)
whereas the probability mass of each minority subpopulation
is~\(\bigO{\frac{1}{m}}\). Thus, for a large enough dataset~(i.e.
large \(m\)), we show that almost all majority subpopulations appear
more than once and consequently, the algorithm in~\Cref{thm:thm_1}
makes fewer mistakes on the majority subpopulations. As a result, the
error and the accuracy discrepancy are both caused by mistakes,
primarily, on the minority subpopulations.  We, then, count the number
of subpopulations that do not appear or appear just once among the
minority subpopulations and use that to provide the upper bounds for
error and lower bound for unfairness~(accuracy discrepancy). As these
bounds are expressed in terms of \(p\) and \(c\), the proof then
follows by showing the existence of \(p,c\) that satisfy the
inequalities in the theorem. 

While~\Cref{thm:thm_1} shows the existence of distributions under
which private and accurate algorithms are necessarily unfair,
in~\Cref{thm:thm_2}, we provide a quantitative lower bound for
unfairness of private algorithms. In addition, we also
generalise~\Cref{thm:thm_1} to include a much broader set of
algorithms. For this, we state an assumption below that we refer to as
the \emph{accuracy assumption}. For any \(\ell\in\bN\), define
\(\Sell\) to denote the set of examples that appear exactly \(\ell\)
times in \(S\).

 \paragraph{Accuracy Assumption} Given \(s_0\in\bN\) and
 \(p_1\in\br{0,1}\), we state that the algorithm \(\cA\) satisfies
 Assumption~\ref{assump:A1} with parameters \(p_1\) and \(s_0\) if for
 all datasets \(S\), for all $\ell > s_0$, and for all \(x\in \Sell\),
    \begin{equation}\label{assump:A1}\tag{A1}\bP_{f\sim \lblprior,\hsample{f}}[h(x) \neq f(x)]
    \leq p_1.\end{equation}

Assumption A1 essentially requires the algorithm to have small overall
train error.  Note that since our domain is discrete, high training
accuracy translates to high test accuracy in particular as the sample
size approaches infinity.  When \(p_1\) is small, algorithm \(\cA\)
obtains low training~(and hence test) error on frequently occurring or
{\em typical} data ~(i.e. \(\ell \ge s_0\)) where  $s_0$ can be
interpreted as the minimum frequency with which {\em typical data}
appears.

We note that the accuracy assumption in~\Cref{thm:thm_1}, that the
algorithm does not make mistakes on subpopulations that appear more
than once is an instantiation of Assumption~\ref{assump:A1} with the
parameters~\(s_0=1\) and \(p_1=0\). In the next theorem, we present a
detailed result showing how the unfairness of a DP algorithm varies
with a wide range of \(s_0,p_1\) in Assumption~\ref{assump:A1},
privacy parameters~\(\epsilon,\delta\), and distributional
parameters \(p,c\). For easier interpretation, we show a simplified version
in~\Cref{thm:thm_2} and highlight the key takeaways, and provide a
detailed version in~\Cref{app:thm_2}.  Consider $X, \lblprior$ as
defined in~\Cref{sec:prelim}.

\begin{thm}\label{thm:thm_2}

    For any \(p\in \br{0, \frac{1}{2}}\), \(c> 0\) such that
    $\frac{p}{c}\leq 1$, consider the distribution \(\Pi_{p, k, N}\)
    where \(\nicefrac{N}{m}\to c\) as \(N,m\to\infty\) and
    $k=\bigO{1}$. For any $\epsilon>0, \delta\in\br{0,\frac{1}{2}}$,
    such that
    \(\frac{1}{\epsilon},\log{\br{\frac{1}{\delta}}}=o\br{m}\),
    consider an \(\br{\epsilon,\delta}\)-DP algorithm $\cA$ that
    satisfies Assumption~\ref{assump:A1} with $s_0=
    \floor{\min\br{\frac{1}{\epsilon}\log{2},\log{\br{\frac{1}{2\delta}}}}}$
    and some \(p_1\in\br{0,1}\). Also assume that the label prior
    \(\lblprior\) is subpopulation wise independent. Then, the
    accuracy discrepancy is lower bounded as 
    \[\fairinf{\cA, \dist, \lblprior} \geq
    \frac{\br{1-\norm{\lblprior}_\infty}\br{1-p}}{3}\gamma_0\] where
    $\gamma_0$ is some constant depending on \(c, p, \epsilon\), and
    \(\delta\). Further, the error of the algorithm is upper-bounded
    by \(\errinf{\cA, \dist, \lblprior} \leq (1-p_1)p\alpha_0+p_1\)
    and \(\alpha_0\) depends on \(c,\epsilon,\delta,\) and \(p\). In the asymptotic
    limit \(c \to \infty\) and \(\epsilon\to 0\),~\(\gamma_0\)
    increases as \(1 - \bigO{\frac{\epsilon
    e^{-\nicefrac{c}{\epsilon^2}}}{\sqrt{c}}}\). As $p\to 0$,
    $\alpha_0$ increases as \(1 -
    \bigO{\sqrt{p}e^{-\nicefrac{1}{p}}}\).
\end{thm}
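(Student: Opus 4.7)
The strategy is to partition subpopulations by their multiplicity $N_x$ in the random dataset $S$: on atoms with $N_x>s_0$ Assumption~\ref{assump:A1} controls the error directly, while on atoms with $N_x\le s_0$ group privacy ensures the DP learner cannot extract enough information to beat the label prior $\lblprior$. The threshold $s_0=\floor{\min\br{\frac{1}{\epsilon}\log 2,\log\br{\frac{1}{2\delta}}}}$ is calibrated precisely so that the multiplicative group-privacy factor $e^{s_0\epsilon}$ is at most $2$ and the accumulated additive slack $s_0 e^{s_0\epsilon}\delta$ is $o\br{1}$. Because every minority atom carries mass $p/N=\bigO{1/m}$, a constant fraction of them will satisfy $N_x\le s_0$; this fraction is exactly the quantity $\gamma_0$ (respectively $\alpha_0$) that appears in the two bounds.

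\textbf{DP on rare atoms.} Fix $x\in X_2$ with $N_x=\ell\le s_0$, and let $S^{-x}$ be obtained from $S$ by replacing the $\ell$ copies of $x$ with a canonical reference point. Then $S$ and $S^{-x}$ differ in $\ell\le s_0$ entries, so iterating approximate DP along a neighbouring chain of length $\ell$ gives, for every event $\mathcal{Z}$,
\[\bP\bs{\cA\br{S}\in\mathcal{Z}}\le e^{s_0\epsilon}\bP\bs{\cA\br{S^{-x}}\in\mathcal{Z}}+s_0 e^{\br{s_0-1}\epsilon}\delta.\]
Since $x$ is absent from $S^{-x}$ and $\lblprior$ is subpopulation-wise independent, conditionally on $S^{-x}$ together with the labels of the remaining atoms the label $f\br{x}$ is still distributed as $\lblprior\br{x}$ and independent of $h\sim\cA\br{S^{-x}}$. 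Hence $\bE_{f,h}\bs{\indic{h\br{x}=f\br{x}}\mid S^{-x}}\le\norm{\lblprior}_\infty$, and transferring back through the displayed inequality yields a pointwise mistake probability of at least $\br{1-\norm{\lblprior}_\infty}/e^{s_0\epsilon}-s_0 e^{\br{s_0-1}\epsilon}\delta\ge\br{1-\norm{\lblprior}_\infty}/3$ once the sub-dominant terms are absorbed.

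\textbf{Counting rare minority mass.} For each $x\in X_2$, $N_x\sim\mathrm{Bin}\br{m,p/N}$, and as $N/m\to c$ this converges to $\mathrm{Poi}\br{p/c}$. By exchangeability of the minority atoms, the expected $\dist^2$-mass of $\bc{x\in X_2:N_x\le s_0}$ converges to $\gamma_0:=\bP\bs{\mathrm{Poi}\br{p/c}\le s_0}$. Combining the pointwise bound of the previous step with this mass, and observing that the majority marginal of $\dist$ contributes only $\bigO{p_1}+o\br{1}$ to $\err{\cA,\dist,\lblprior}$ (by Assumption~\ref{assump:A1} together with a Chernoff bound showing almost every majority atom satisfies $N_x>s_0$), one obtains
\[\fairinf{\cA,\dist,\lblprior}\ge\frac{\br{1-\norm{\lblprior}_\infty}\br{1-p}}{3}\gamma_0,\]
the $\br{1-p}$ factor arising from subtracting the majority-weighted error when forming the discrepancy. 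An entirely parallel argument with the roles of upper and lower bounds swapped yields the matching error upper bound $\br{1-p_1}p\alpha_0+p_1$, where $\alpha_0$ is the analogous Poisson-tail quantity inflated by the DP correction.

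\textbf{Asymptotic rates and main obstacle.} As $c\to\infty$ and $\epsilon\to 0$, the threshold $s_0=\Theta\br{1/\epsilon}$ grows while the Poisson mean $p/c\to 0$, so a Stirling/Gaussian approximation of the Poisson tail at level $s_0\sim 1/\epsilon$ gives $1-\gamma_0=\bigO{\epsilon e^{-c/\epsilon^2}/\sqrt c}$; the analogous estimate with $s_0=\Omega\br{1}$ and $p/c\to 0$ as $p\to 0$ gives $1-\alpha_0=\bigO{\sqrt p\,e^{-1/p}}$. The main technical obstacle is precisely this quantitative tail bookkeeping: one must control the Poisson tail uniformly in the coupled limits of $\br{c,\epsilon}$ and of $p$, and simultaneously verify that the $\delta$-term in group privacy remains $o\br{1}$ --- which is why the calibration uses $s_0\le\log\br{1/\br{2\delta}}$ in addition to $s_0\epsilon\le\log 2$, giving $s_0 e^{s_0\epsilon}\delta\le 2s_0\delta=o\br{1}$ under the stipulated $\log\br{1/\delta}=o\br{m}$ regime.
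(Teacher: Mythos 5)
Your overall architecture matches the paper's: split subpopulations by their multiplicity in \(S\), invoke Assumption~\ref{assump:A1} above the threshold \(s_0\), use privacy below it, and count the infrequent minority mass. Two of your steps genuinely diverge from the paper, one legitimately and one with a real gap. For the claim that rare points are misclassified with probability \(\Omega\br{1-\norm{\lblprior}_\infty}\), you compare \(\cA\br{S}\) with \(\cA\br{S^{-x}}\) in which the \(\ell\) copies of \(x\) are replaced, and argue that \(\cA\br{S^{-x}}\) cannot correlate with \(f\br{x}\) by subpopulation-wise independence. The paper instead compares two labelled datasets that differ only in the label of \(x\)~(\Cref{lem:two_fn_disagree}), concludes that at least one function of every such pair errs badly at \(x\), and converts this into a probability-mass bound via a minimum-vertex-cover computation on a complete \(\abs{\cY}\)-partite graph over labelling functions~(\Cref{lem:large_error_infreq_sample_app_v2,lem:graph_mult_prob}). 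Your route is more direct and is a valid alternative in principle, but your closing inequality \(\br{1-\norm{\lblprior}_\infty}/e^{s_0\epsilon}-s_0e^{\br{s_0-1}\epsilon}\delta\geq\br{1-\norm{\lblprior}_\infty}/3\) is not automatic: with \(e^{-s_0\epsilon}\geq\nicefrac{1}{2}\) it requires \(s_0\delta\lesssim\br{1-\norm{\lblprior}_\infty}/6\), which can fail for moderate \(\delta\) and \(\norm{\lblprior}_\infty\) close to \(1\), so the ``absorbed'' step needs an actual argument. Also, your description of \(\alpha_0\) as ``inflated by the DP correction'' is off: the error upper bound uses only Assumption~\ref{assump:A1} and counting, with no privacy input.

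The concrete gap is in the tail bookkeeping. You take the Poisson limit \(N_x\Rightarrow\mathrm{Poi}\br{p/c}\) and set \(\gamma_0=\bP\bs{\mathrm{Poi}\br{p/c}\leq s_0}\), then assert that a Stirling/Gaussian approximation of this tail recovers \(1-\gamma_0=\bigO{\epsilon e^{-\nicefrac{c}{\epsilon^2}}/\sqrt{c}}\) and \(1-\alpha_0=\bigO{\sqrt{p}e^{-\nicefrac{1}{p}}}\). It does not: the upper Poisson tail at level \(s_0\approx\log 2/\epsilon\) with mean \(p/c\) is of order \(\br{ep\epsilon/\br{c\log 2}}^{\log 2/\epsilon}\), which decays only polynomially in \(c\)~(like \(c^{-\Theta\br{1/\epsilon}}\)) rather than like \(e^{-c/\epsilon^2}\), and the analogous computation as \(p\to 0\) gives \(1-\alpha_0=\bigO{p^{s_0+1}}\) rather than anything resembling \(e^{-1/p}\). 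The rates stated in the theorem come from the paper's different route: conditioning on the event \(\cE=\bc{p/2\leq m_2/m\leq 3p/2}\), applying a normal approximation to \(\mathrm{binom}\br{m_2,1/N}\), and then invoking the Gaussian tail bounds of Inequality~\ref{ineq:gaussian_tail_bound}. As written, your argument therefore proves a version of the theorem with different asymptotics for \(\gamma_0\) and \(\alpha_0\); to land on the statement as given you would need to reproduce the paper's Gaussian-tail computation, since the Poisson tail quantities you define do not satisfy the claimed rates.
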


{\color{black} The detailed  expressions of \(\alpha_0\) and
\(\gamma_0\)~(including its dependance on \(\epsilon,\delta,c,\) and
\(p\)) can be found in~\Cref{thm:thm_2_detailed} in~\Cref{app:thm_2}.
We now briefly discuss how the theorem characterizes the effect of
privacy and accuracy~(via \(\epsilon\) and \(s_0\) respectively) of
the algorithm and the distribution and number of of subpopulations
(via $p,c$ respectively) on the accuracy discrepancy.
~\Cref{thm:thm_2} indicates that as \(c\) increases and \(\epsilon\)
decreases, $\gamma_0$ and hence unfairness increases. Intuitively,
this is because minority subpopulations appear infrequently and the
algorithm is more likely to be incorrect on infrequent subpopulations.
Therefore, the lower bound on unfairness increases with \(c\) and
decreases with \(\epsilon\). Further, as \(\norm{\lblprior}_\infty\)
decreases, which can be interpreted as an increase in the inherent
uncertainty of the labelling function, the unfairness
increases.\footnote{While the discussion here assumes the asymptotic
limit for \(c,\) and \(p\) as \(m\to\infty\), our results
in~\Cref{app:thm_2} shows non-asymptotic dependence on these terms.} }

\paragraph{Proof Sketch} Next, we provide a proof sketch
of~\Cref{thm:thm_2} and relegate the full proof to~\Cref{app:thm_2}.
First, note that when the ratio \(c\) of the number of minority
subpopulations with respect to the sample size is large, most minority
subpopulations appear infrequently in the observed dataset.
Second, a key component of the proof is showing that when \(\epsilon\)
and \(\norm{\cF}_\infty\) is relatively small, most infrequently
observed subpopulations are misclassified with a non-trivial
probability. This is formalised
by~\Cref{lem:large_error_infreq_sample} below which shows that
subpopulations with frequency less than
\(\frac{\log{2}}{\epsilon}\)~(for sufficiently small \(\delta\)) is
misclassified with probability greater than
\(\frac{\br{1-\norm{\lblprior}_\infty}}{3}\).

\begin{restatable}{lem}{infreqLemma}\label{lem:large_error_infreq_sample}
       Let \(\cF\) be any subpopulation-wise independent label prior over
    all labelling functions and \(S\) be any dataset. For any
    \(\br{\epsilon,\delta}\)-differentially private algorithm~\(\cA\),
    and for all subpopulations \(x \in X\) that appear fewer than
    \(s_0 =
    \floor{\min\br{\frac{1}{\epsilon}\log{2},\log{\br{\frac{1}{2\delta}}}}}\)
    times in the dataset \(S\), we have that
    \[\ \bP_{f\sim\cF,\hsample{f}}\bs{h\br{x}\neq f\br{x}}>
    \frac{\br{1-\norm{\lblprior}_\infty}}{3}.\] A detailed version
    of this lemma along with its proof is presented in~\Cref{lem:large_error_infreq_sample_app_v2} in the
    Appendix.   
\end{restatable}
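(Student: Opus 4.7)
My plan is to combine the subpopulation-wise independence of \(\cF\) with the group privacy of \(\cA\) to show that when \(x\) appears \(\ell<s_0\) times in \(S\), the algorithm's output at \(x\) is nearly independent of the true label \(f(x)\), capping the attainable accuracy near \(\|\cF\|_\infty\).

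First, I would use subpopulation-wise independence to rewrite \(f\sim\cF\) as an independent pair: sample the restriction \(g\) of \(f\) to \(X\setminus\{x\}\) from its marginal, and independently sample \(y:=f(x)\sim\cF(x)\). Conditional on \(g\), any two choices \(y,y'\in\cY\) produce labelled datasets \(S_{g,y}\) and \(S_{g,y'}\) that differ in exactly the \(\ell\) entries where \(x\) appears. Group privacy for \((\epsilon,\delta)\)-DP then implies, for every event \(E\),
\[
\bP_{h\sim\cA(S_{g,y})}\bs{h\in E} \,\le\, e^{\ell\epsilon}\,\bP_{h\sim\cA(S_{g,y'})}\bs{h\in E} \,+\, \ell\,e^{(\ell-1)\epsilon}\,\delta.
\]

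Next, I would specialise the above to the event \(E=\{h:h(x)=z\}\) and average the right-hand side over an independent draw \(y'\sim\cF(x)\), obtaining \(\bP_{h\sim\cA(S_{g,y})}\bs{h(x)=z} \le e^{\ell\epsilon}\mu_g(z) + \delta'\), where \(\mu_g(z) := \bE_{y'\sim\cF(x)}\bP_{h\sim\cA(S_{g,y'})}\bs{h(x)=z}\) is a probability distribution on \(\cY\) and \(\delta':=\ell e^{(\ell-1)\epsilon}\delta\). Weighting by \(\cF(x)\bs{y}\), summing over \(y\), and taking the outer expectation over \(g\) then yields
\[
\bP_{f,h}\bs{h(x)=f(x)} \,\le\, e^{\ell\epsilon}\,\bE_g\sum_y \cF(x)\bs{y}\,\mu_g(y) \,+\, \delta' \,\le\, e^{\ell\epsilon}\|\cF\|_\infty + \delta',
\]
where the last step uses \(\cF(x)\bs{y}\le\|\cF\|_\infty\) and that \(\mu_g\) sums to one.

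Finally, I would plug in the two constraints hidden in the definition of \(s_0\). From \(\ell<\log 2/\epsilon\) we get \(e^{\ell\epsilon}<2\); combined with \(\ell<\log(1/(2\delta))\) and \(e^{(\ell-1)\epsilon}<2\), the slack satisfies \(\delta'\le 2\ell\delta\), which is small for the allowed range of \(\delta\). Substituting produces \(\bP\bs{h(x)\ne f(x)} > 1-2\|\cF\|_\infty-\delta'\); the target bound \((1-\|\cF\|_\infty)/3\) then follows by a short algebraic rearrangement that absorbs \(\delta'\) using the second part of the \(s_0\) constraint. I expect the main technical subtlety to be exactly this final rearrangement: the bound \(1-2\|\cF\|_\infty\) is directly useful only when \(\|\cF\|_\infty\) is not too close to \(1\), and the constant \(1/3\) in the lemma is precisely calibrated so that the strict inequality survives the residual \(\delta'\) in the regime where the claim is nontrivial.
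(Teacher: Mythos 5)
There is a genuine gap in your final step, and it is not merely a matter of tightening constants. Your symmetric averaging argument yields
\[
\bP_{f,h}\bs{h(x)=f(x)} \;\le\; e^{\ell\epsilon}\norm{\cF}_\infty + \delta',
\]
hence an error lower bound of \(1 - e^{\ell\epsilon}\norm{\cF}_\infty - \delta' \ge 1 - 2\norm{\cF}_\infty - \delta'\). No algebraic rearrangement can convert this into the claimed \(\br{1-\norm{\cF}_\infty}/3\): the inequality \(1-2t-\delta' > \br{1-t}/3\) requires \(t < \nicefrac{2}{5}\) (up to the \(\delta'\) slack), whereas \(\norm{\cF}_\infty \ge \nicefrac{1}{\abs{\cY}}\) always, so in the binary case \(\norm{\cF}_\infty \ge \nicefrac{1}{2}\) and your bound is identically vacuous. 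This is precisely the regime the paper cares about (it explicitly evaluates the lemma at \(\norm{\cF}_\infty = 0.5\) to get the constant \(\nicefrac{1}{6}\)). The loss comes from the multiplicative factor \(e^{\ell\epsilon}\) sitting in front of \(\norm{\cF}_\infty\): averaging over a resampled label \(y'\) treats all labels symmetrically and can only certify that the success probability is within a factor \(e^{\ell\epsilon}\) of what a label-oblivious predictor achieves, which is useless once \(\norm{\cF}_\infty\) exceeds \(e^{-\ell\epsilon}\).

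The paper's proof avoids this by an asymmetric, pairwise argument. Its key lemma (\Cref{lem:two_fn_disagree}) shows via group privacy that for \emph{any two} labelling functions agreeing off \(x\) and disagreeing at \(x\), \emph{at least one} of them must be misclassified at \(x\) with probability exceeding \(1-q\), where \(q = \nicefrac{2}{3}\) under the stated \(s_0\). The set of labelling functions that escape this conclusion therefore forms an independent set in a complete \(\abs{\cY}\)-partite graph (functions partitioned by their value at \(x\)), so it is contained in a single colour class and has prior mass at most \(\norm{\cF}_\infty\) by subpopulation-wise independence (\Cref{lem:graph_mult_prob}). Conditioning on the complementary event of mass at least \(1-\norm{\cF}_\infty\) gives \(\bP\bs{h(x)\neq f(x)} \ge \br{1-q}\br{1-\norm{\cF}_\infty} = \br{1-\norm{\cF}_\infty}/3\), with no factor of \(e^{\ell\epsilon}\) multiplying \(\norm{\cF}_\infty\). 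To repair your argument you would need to replace the averaging step with this ``at least one of every disagreeing pair errs'' dichotomy and then count the mass of the bad set; the group-privacy inequality you invoke is the right tool, but it must be applied to a pair of concrete labellings rather than to an average over the label prior.
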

\begin{figure*}[t]\centering
    \begin{subfigure}[c]{0.8\linewidth}
    \def\svgwidth{0.99\columnwidth}
    \input{./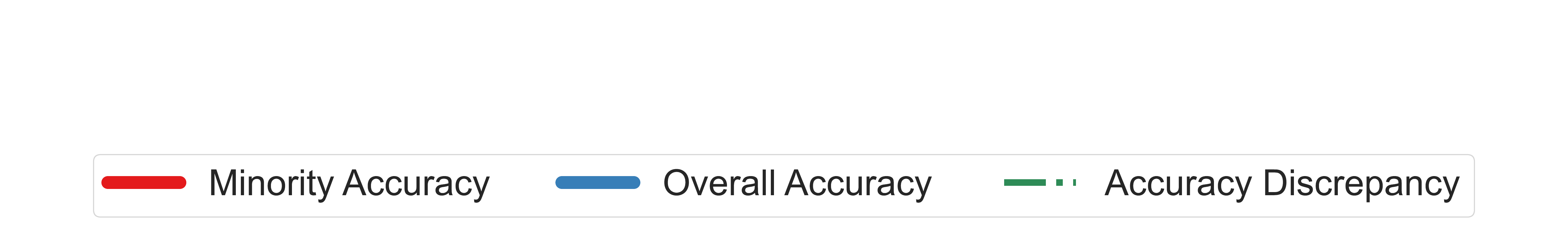_tex}%
\end{subfigure}\\
    \begin{subfigure}[c]{0.9\linewidth}
    \def\svgwidth{0.99\columnwidth}
    \input{./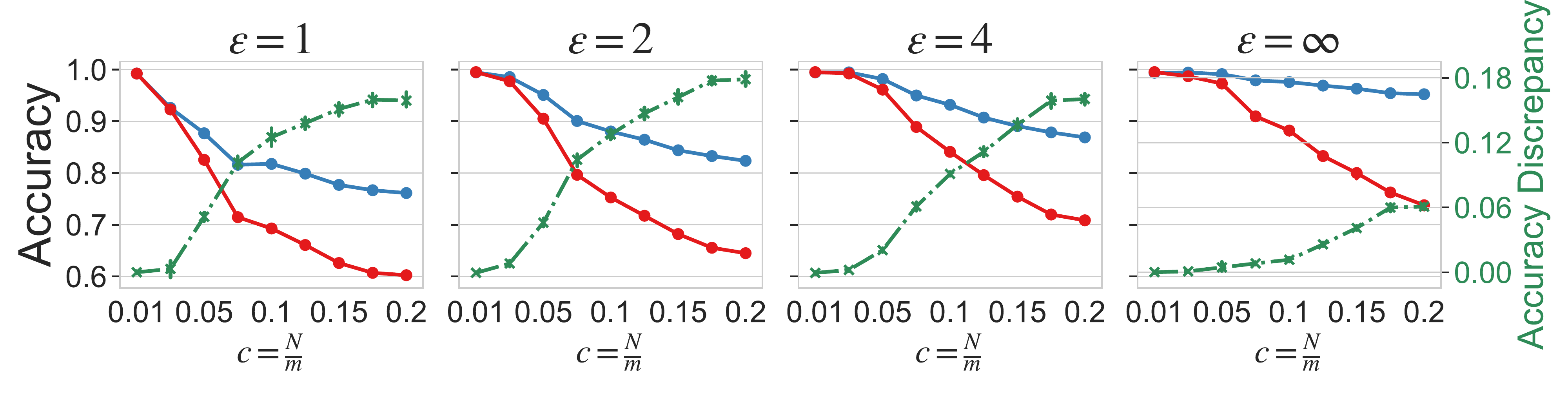_tex}%
\end{subfigure}
    \caption{%
    Each figure plots the accuracy discrepancy~(\(\Gamma\);
    higher is less fair) in {\color{OliveGreen} green dashed line}, the
    accuracy of the minority group with {\color{red} red}, and the
    overall accuracy with {\color{NavyBlue} blue} on the y-axis and the
    parameter \(c\) in the X-axis. The left most~\(\br{\epsilon=1}\)
    achieves the strictest level of privacy and the right
    most~\(\br{\epsilon=\infty}\) is vanilla training without any
    privacy constraints. The two figures in between achieve intermediate
    levels of privacy. Here \(p=0.2\). Experiment for \(p=0.5\) is
    in~\Cref{app:more-p}}
    \label{fig:incr-C-hurts-fairness}
\end{figure*}
\noindent Combining this with Assumption~\ref{assump:A1} and using standard
inequalities in probability theory gives
us~\cref{thm:thm_2}.

\paragraph{An illustrative example} We now provide an example of a
simple learning problem and a differentially private learning
algorithm for the problem that satisfies Assumption~\ref{assump:A1}.
The sole purpose of this is to help explain the concepts in the
preceding sections with a simple constructive example. In particular,
consider a binary classification problem on the domain \(X\). Let
\(\cA_\eta\) be an algorithm that accepts an \(m\)-sized dataset
\(\Slabel\in\br{X\times\bc{0,1}}^m\) and a noise rate
\(\eta\in\br{0,\frac{1}{2}}\) as input and outputs a dictionary
matching every subpopulation in \(X\) to a label in \(\bc{0,1}\).  The
algorithm first creates a dictionary where the set \(X\) is the set of
keys.  In order to assign values to every key, it first randomly flips
the label of every element in \(\Slabel\) with probability \(\eta\),
then for every unique key in \(\Slabel\), the algorithm computes the
majority label of that key in the flipped dataset and assigns that
majority label to the corresponding key. For elements in \(X\) not
present in \(\Slabel\), it assigns a random label.
~\Cref{lem:noisy-majority} provides privacy and accuracy guarantees
for this algorithm.

\begin{restatable}{lem}{noisyMajority}
\label{lem:noisy-majority}
The algorithm \(A_\eta\) is
\(\br{\log{\br{\frac{1-\eta}{\eta}}},0}\)-DP. Further, for any dataset
\(\Slabel\) and \(s_0\in\bN\),
\begin{itemize}
    \item if a subpopulation \(x\) appears more than \(s_0\) times in \(S\), \(\bP_{h\sim\cA_{\eta}\br{\Slabel}}\bs{h(x)\neq f(x)}\leq
e^{-\nicefrac{s_0\br{1-2\eta}^2}{8\br{1-\eta}}}\) and \\
\item if a subpopulation \(x\) appears less than \(s_0\) times in \(S\),  \(\bP_{h\sim\cA_{\eta}\br{\Slabel}}\bs{h(x)\neq f(x)}\geq \frac{1}{\sqrt{2s_0}}{\br{4\eta\br{1-\eta}}}^{\nicefrac{s_0}{2}}\).
\end{itemize}
Equivalently, algorithm \(A_\eta\) satisfies Assumption~\ref{assump:A1} with \(p_1=e^{-\nicefrac{s_0\br{1-2\eta}^2}{8\br{1-\eta}}}\).%
\end{restatable}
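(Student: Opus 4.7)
The plan is to handle the three assertions in the lemma separately. For the privacy claim, the key observation is that the label-flipping step is an independent randomized response with parameter $\epsilon_0 = \log((1-\eta)/\eta)$ applied to every label, and every subsequent operation (grouping by feature, per-group majority vote, and fresh uniform label for features absent from $S$) is post-processing. Conditioned on the feature vector, the output labels $h(x)$ are independent across $x\in X$. For neighbouring datasets $\Slabel,\Slabel'$ differing in the single entry $(x_i,y_i)\mapsto(x_i',y_i')$, I would write the likelihood ratio of $h$ as a product of per-$x$ ratios; only the at most two groups touched by the differing entry give nontrivial factors. In the case $x_i=x_i'$, a single label in the multiset of $x_i$ is toggled, and randomized response plus post-processing gives a ratio at most $e^{\epsilon_0}$. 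In the case $x_i\neq x_i'$, one group loses a label while the other gains one, and a direct computation on the randomized-response kernel shows that their combined likelihood ratio remains at most $e^{\epsilon_0}$.

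For the upper bound on error when $x$ appears $\ell>s_0$ times in $\Slabel$, every copy of $x$ carries the label $f(x)$, so the number of flipped labels agreeing with $f(x)$ follows $\mathrm{Bin}(\ell,1-\eta)$ and the error event is that this count falls at or below $\ell/2$. Applying the multiplicative Chernoff bound with deviation parameter $\delta=(1-2\eta)/(2(1-\eta))$ yields an error probability of at most $\exp(-\ell(1-2\eta)^2/(8(1-\eta)))$, which is monotone decreasing in $\ell$ and hence bounded by the claimed quantity for every $\ell>s_0$. Assumption~\ref{assump:A1} with the stated $p_1$ is then immediate.

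For the lower bound when $x$ appears $\ell<s_0$ times, the case $\ell=0$ is trivial because $h(x)$ is a uniform random label and the error equals $1/2$, which already dominates the claim. For $\ell\geq 1$, I lower bound the error probability by the probability that more than half of the $\ell$ flips land on the wrong label, which is at least a single central binomial term. Using the standard inequality $\binom{\ell}{\lfloor\ell/2\rfloor}\geq 2^\ell/\sqrt{2\ell}$ together with $\eta(1-\eta)\leq 1/4$ reduces the bound to the form $(4\eta(1-\eta))^{\ell/2}/\sqrt{2\ell}$. Since $4\eta(1-\eta)\leq 1$ and $\ell<s_0$, both factors are monotone in the direction that lets me replace $\ell$ by $s_0$, delivering the stated bound.

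The main obstacle is the privacy case $x_i\neq x_i'$: naive parallel composition over the two touched features would lose a factor of two in $\epsilon$, and recovering the tight $\epsilon_0$ bound requires combining the two affected likelihood ratios and exploiting the observation that only one of them can be large at any given output. The accuracy bounds are by comparison routine concentration and anti-concentration arguments for sums of independent Bernoullis.
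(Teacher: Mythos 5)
Your proposal follows essentially the same route as the paper's proof: privacy via the randomized-response likelihood ratio $\frac{1-\eta}{\eta}$ on the affected group(s), the upper bound via a multiplicative Chernoff bound on $\mathrm{Bin}(\ell,1-\eta)$ falling to $\ell/2$ (yielding the identical exponent $\frac{(1-2\eta)^2}{8(1-\eta)}\ell$), and the lower bound via binomial anti-concentration at the central term, which is exactly what the paper's cited inequality $\binom{\ell}{\lfloor\ell/2\rfloor}$-style bound reduces to. If anything you are slightly more careful than the paper on the neighbouring case where the differing entries have distinct features (the paper simply asserts the singleton label-change is worst-case), so this is a faithful and correct reconstruction.
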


{\color{black} ~\Cref{lem:noisy-majority} shows that for all \(\epsilon
> 0\), we can find an \(\eta=\frac{1}{1+e^{\epsilon}}\) such that
\(\cA_\eta\) is $(\epsilon, 0)$-differentially private. For example,
\(\eta\approx 0.475\) provides \(\br{0.1,0}\) differential privacy.
Further, this algorithm is more accurate on frequently occurring
subpopulations and inaccurate on rare subpopulations.  For example,
with \(\eta=0.475\),~\(\bP_{h\sim\cA_{\eta}\br{\Slabel}}\bs{h(x)\neq
f(x)}\geq 0.35\)\footnote{This is possibly a loose lower bound.
Running a simulation shows that the lower bound is at least \(0.17\)
which matches the lower bound
from~\Cref{lem:large_error_infreq_sample} with
\(\norm{\lblprior}_{\infty}=0.5\)(equal probability for both classes).} for subpopulations occurring less
than \(5\) times.  Similarly, for subpopulations occurring more than
\(6000\) times, ~\(\bP_{h\sim\cA_{\eta}\br{\Slabel}}\bs{h(x)\neq
f(x)}\leq 0.05\). If we assume that the data is distributed according
to the long-tailed distribution defined in~\ref{defn:comp-unif} with
\(N=10^5,~k=10,p=0.2\) and a dataset of size \(m=10^5\) is drawn then
almost all subpopulations either occur more than \(6000\) times or
less than \(5\) times.}

\subsection{Privacy and fairness at the cost of accuracy}\label{sec:strict-priv}

So far we have shown that under strict privacy and high average accuracy requirements on the algorithm, fairness necessarily suffers. 
A natural question to ask is whether it is possible to sacrifice
accuracy for fairness. We present a simplified theorem statement here for easier interpretation and prove
a more precise version in~\Cref{app:thm_3} along with a discussion.
In words, the theorem states that for very strict privacy parameters, fairness can be achieved at the cost of accuracy.

{\color{black}
\begin{thm}\label{thm:thm_3} For any $p\in (0,
    \nicefrac{1}{2}), c> 0$ such that $p/c\leq 1$, consider the
    distribution \(\Pi_{p,N}\) where \(N\) is the number of minority
    subpopulations. For any $\alpha >
    0$, \(\epsilon = \frac{\log{2}}{\alpha\sqrt{m} +
    \br{\frac{2-3p}{2k\br{1-p}}}m}\) and
    \(\delta<2^{-\frac{1}{\epsilon}-1}\) consider any
    \(\br{\epsilon,\delta}\)-DP algorithm $\cA$. Further, let \(\nicefrac{N}{m}\to c\)~as
    \(m,N\to\infty\). Then,

    \[\errinf{\cA, \dist, \lblprior} \geq \br{\frac{1-\norm{\lblprior}_\infty}{3}}\br{1-\br{1-p}\br{e^{-c_1\alpha^2}}}\]  
        \noindent and
        \[\fairinf{\cA, \dist, \lblprior} \leq \br{1-p}\bs{1 - \frac{\br{1-\norm{\lblprior}_\infty}}{3}\br{1-e^{-c_1\alpha^2}}}\]
        where \(c_1=\frac{4(2-p)}{(2-3p)^2}\).
    \end{thm}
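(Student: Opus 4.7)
The plan is to invoke~\Cref{lem:large_error_infreq_sample} with the threshold $s_0=\lfloor\log(2)/\epsilon\rfloor$. The assumption $\delta<2^{-1/\epsilon-1}$ guarantees $\log(1/(2\delta))>1/\epsilon>\log(2)/\epsilon$, so this is indeed the active argument of the minimum defining $s_0$ in the lemma. The particular $\epsilon$ stated in the theorem is calibrated so that
\[s_0\approx\alpha\sqrt{m}+\frac{(2-3p)m}{2k(1-p)},\]
a threshold positioned close to the mean $\bE\bs{N_x}=m/k$ of a majority subpopulation count, with the slack tuned by $\alpha$.

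First, I would lower-bound the error. Writing
\[\errinf{\cA,\dist,\lblprior}=\sum_{x\in X_1}\tfrac{1}{k}\bE_S\bs{\bP\bs{h(x)\neq f(x)\mid S}}+\sum_{x\in X_2}\tfrac{p}{N}\bE_S\bs{\bP\bs{h(x)\neq f(x)\mid S}},\]
I would restrict each inner expectation to the event $\{N_x<s_0\}$ on which~\Cref{lem:large_error_infreq_sample} yields $\bP\bs{h(x)\neq f(x)\mid S}>\tfrac{1-\norm{\lblprior}_\infty}{3}$. For minority $x\in X_2$, $N_x\sim\mathrm{Bin}(m,p/N)$ has mean $p/c=O(1)$ while $s_0\to\infty$, so $\bP\bs{N_x<s_0}\to 1$ as $m,N\to\infty$. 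For majority $x\in X_1$, $N_x\sim\mathrm{Bin}(m,1/k)$ and a careful binomial tail estimate using the specific calibration of $s_0$ delivers $\bP\bs{N_x<s_0}\geq 1-e^{-c_1\alpha^2}$ with $c_1=4(2-p)/(2-3p)^2$. Summing the two contributions and applying the algebraic identity $p+(1-p)(1-e^{-c_1\alpha^2})=1-(1-p)e^{-c_1\alpha^2}$ produces the stated error bound.

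For the fairness upper bound, I would use the partition $\err{\cA,\dist,\lblprior}=p\err{\cA,\dist^2,\lblprior}+(1-p)\err{\cA,\dist^1,\lblprior}$, where $\dist^1$ is the marginal on $X_1$ defined analogously to~\eqref{eq:minority-marginal-dist}. Substituting into~\Cref{defn:fairness} gives $\fair{\cA,\dist,\lblprior}=(1-p)\br{\err{\cA,\dist^2,\lblprior}-\err{\cA,\dist^1,\lblprior}}$. Upper-bounding the minority error trivially by $1$ and lower-bounding the majority error by $\tfrac{1-\norm{\lblprior}_\infty}{3}(1-e^{-c_1\alpha^2})$ (exactly as produced by the concentration step above) yields the claimed inequality after taking $m,N\to\infty$.

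The main obstacle is the concentration step for $X_1$: reproducing the precise constant $c_1=4(2-p)/(2-3p)^2$ requires the right choice of tail inequality (e.g.\ a Bernstein-type bound on $\mathrm{Bin}(m,1/k)$, using $\sigma^2=m(k-1)/k^2$) together with careful bookkeeping of how the $\alpha\sqrt{m}$ slack and the linear-in-$m$ term inside $s_0$ combine relative to the mean $m/k$ so as to yield the target exponent. Once this estimate is in hand, the rest is routine algebra and passing to the $m,N\to\infty$ limit.
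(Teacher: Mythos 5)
Your overall skeleton matches the paper's proof: instantiate \Cref{lem:large_error_infreq_sample} with $s_0=\floor{\log 2/\epsilon}$ (the condition $\delta<2^{-\frac{1}{\epsilon}-1}$ does make this the active branch of the minimum, though it only gives $\log\frac{1}{2\delta}>\frac{\log 2}{\epsilon}$, not $>\frac{1}{\epsilon}$), decompose the error over subpopulations appearing at most $s_0$ times, observe that asymptotically all minority subpopulations qualify, and bound the discrepancy by $(1-p)\br{1-\err{\cA,\dist^1,\lblprior}}$, reusing the same majority-group concentration estimate to lower bound $\err{\cA,\dist^1,\lblprior}$. The combination $p+(1-p)(1-e^{-c_1\alpha^2})=1-(1-p)e^{-c_1\alpha^2}$ is exactly how the two contributions are assembled in~\Cref{thm:thm_3_detailed}.

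The genuine gap is the majority-group concentration step, which you flag as ``the main obstacle'' but do not carry out, and which fails in the form you set it up. You model the count of a majority subpopulation as $N_x\sim\mathrm{Bin}(m,1/k)$ and describe $s_0$ as positioned ``close to the mean $m/k$''. It is not: the linear part of $s_0$ is $\frac{2-3p}{2k(1-p)}m=\frac{2-3p}{2-2p}\cdot\frac{m}{k}$, which for every $p>0$ lies \emph{below} $m/k$ by $\frac{p}{2k(1-p)}m=\Theta(m)$. Consequently $\bP\bs{N_x<s_0}\to 0$ exponentially fast in your parametrisation, and no Bernstein- or Hoeffding-type inequality can recover a bound of the form $\bP\bs{N_x<s_0}\geq 1-e^{-c_1\alpha^2}$, because the event itself is asymptotically negligible. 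The paper instead conditions on $\cE=\bc{\frac{p}{2}\leq\frac{m_2}{m}\leq\frac{3p}{2}}$ and treats the majority counts as $\mathrm{Bin}\br{m_1,\frac{1}{k(1-p)}}$; the linear term of $s_0$ is calibrated to equal the conditional mean $\frac{m_1}{k(1-p)}$ at the endpoint $m_1=\br{1-\frac{3}{2}p}m$ of $\cE$, so that $s_0$ sits exactly $\alpha\sqrt{m}$ above that mean and a single Hoeffding upper-tail bound produces the exponent $c_1\alpha^2=\frac{4(2-p)\alpha^2}{(2-3p)^2}$ (see~\Cref{eq:maj_upper_bound_hoeffding_thm6}). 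Any completion of your argument must pass through this conditional distribution (or an equivalent device); the unconditional $\mathrm{Bin}(m,1/k)$ route cannot yield the stated constant, or indeed any nontrivial bound of this shape.
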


A detailed version of the theorem with the full proof can be found
in~\Cref{thm:thm_3_detailed} in~\Cref{app:thm_3}. We now briefly
discuss how the theorem characterises the effect of privacy and
accuracy~(via~\(\alpha\)) on fairness. When \(\alpha\) is large,
\(\epsilon\) is small, therefore making the algorithm more private.
From~\Cref{lem:large_error_infreq_sample_app_v2}, we know that a small
\(\epsilon\) renders the algorithm inaccurate on frequent
subpopulations~(i.e. majority subpopulations). Thus, \(\alpha\),
essentially, characterises what fraction of majority subpopulations
the algorithm is incorrect on.~First,~\Cref{thm:thm_3} shows that when
\(\alpha\) is large, the overall error increases.  Intuitively, this
is because, with increasing \(\alpha\), the algorithm is incorrect,
not only on  minority subpopulations, but also on {\em majority
subpopulations}. Second,~\Cref{thm:thm_3} shows that as \(\alpha\)
increases, i.e. privacy increases, the unfairness decreases. This
shows that by making the algorithm more private~(by increasing
\(\alpha\)), fairness can be achieved but at the cost of overall
accuracy.  }
\vspace{-10pt}

\section{Experimental results}
\label{sec:exp}

In this section, we conduct experiments to support our theoretical
results from~\Cref{sec:theory}. We note that our theoretical
results are model-agnostic and to demonstrate the universality of our
result, we conduct a broad set of experiments on both
synthetic~(in~\Cref{sec:synth}), and real world
datasets~(in~\Cref{sec:vision,sec:tab-data}), using multiple machine
learning models including deep neural networks and random forests. 

\vspace{-5pt}
\subsection{Synthetic experiments}
\label{sec:synth}

First, we look at a synthetic data distribution that closely emulates
the data distribution we use in our theoretical results
in~\Cref{thm:thm_1,thm:thm_2}.  Given \(N,k\in\bN,c\in\reals_+\), and \(p\in\br{0,0.5}\),  we construct a continuous version of the long-tailed
distribution~\(\Pi_{p,N,k}\)~(\Cref{defn:comp-unif}) on a domain
\(X\).
First of all, since the domain \(X\) is discrete, we can place each
element on a vertex of a \(\bigO{\log\br{N}}\)-dimensional hypercube.
 The continuous distribution we use in our experiments is a 
mixture of Gaussians where each Gaussian is centered around the vertices of the hypercube.
\begin{figure}[t]\centering
  \begin{subfigure}[c]{0.35\linewidth}
  \centering
  \def\svgwidth{0.99\columnwidth}
  \input{./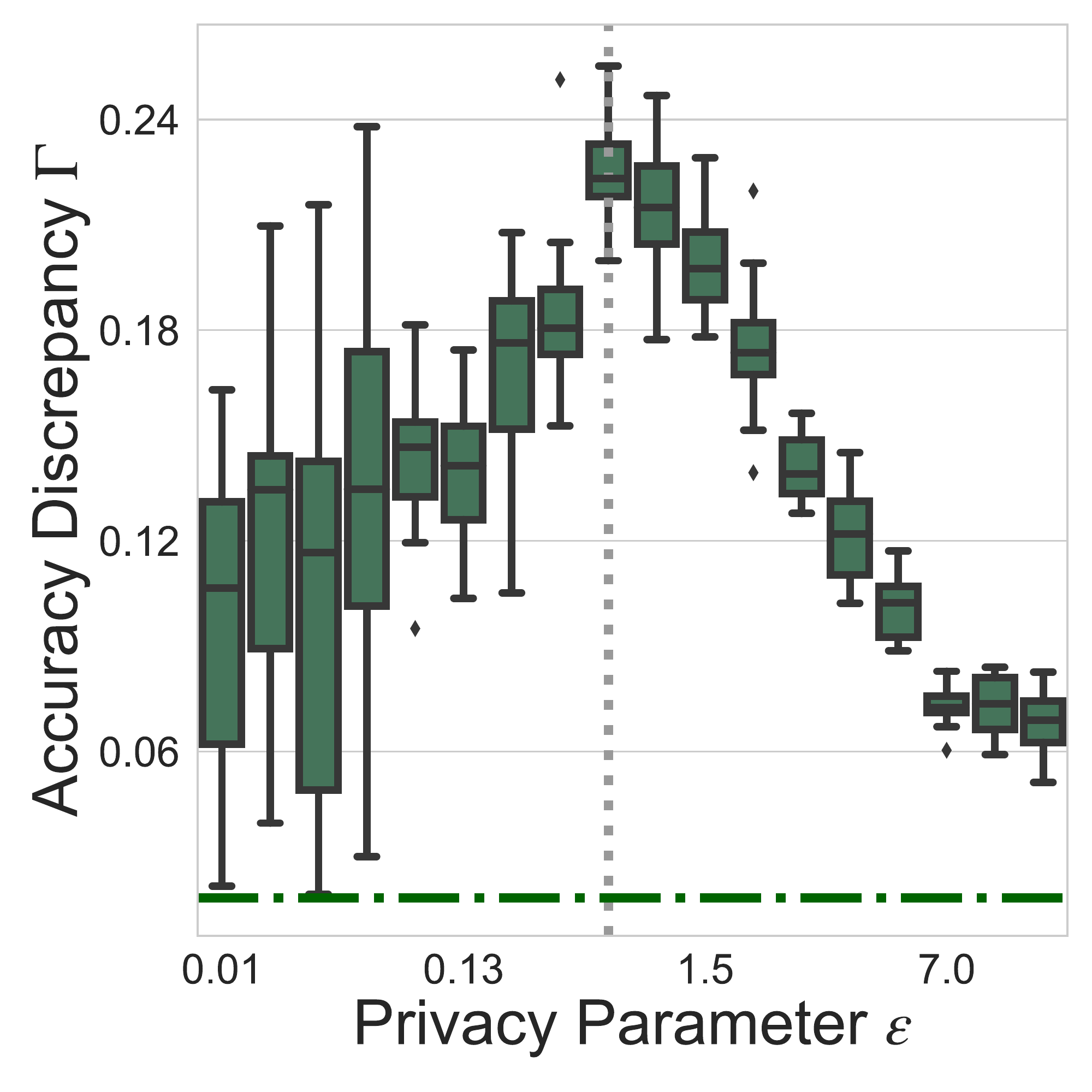_tex}%
  \end{subfigure}
  \begin{subfigure}[c]{0.35\linewidth}
      \centering
      \def\svgwidth{0.99\columnwidth}
      \input{./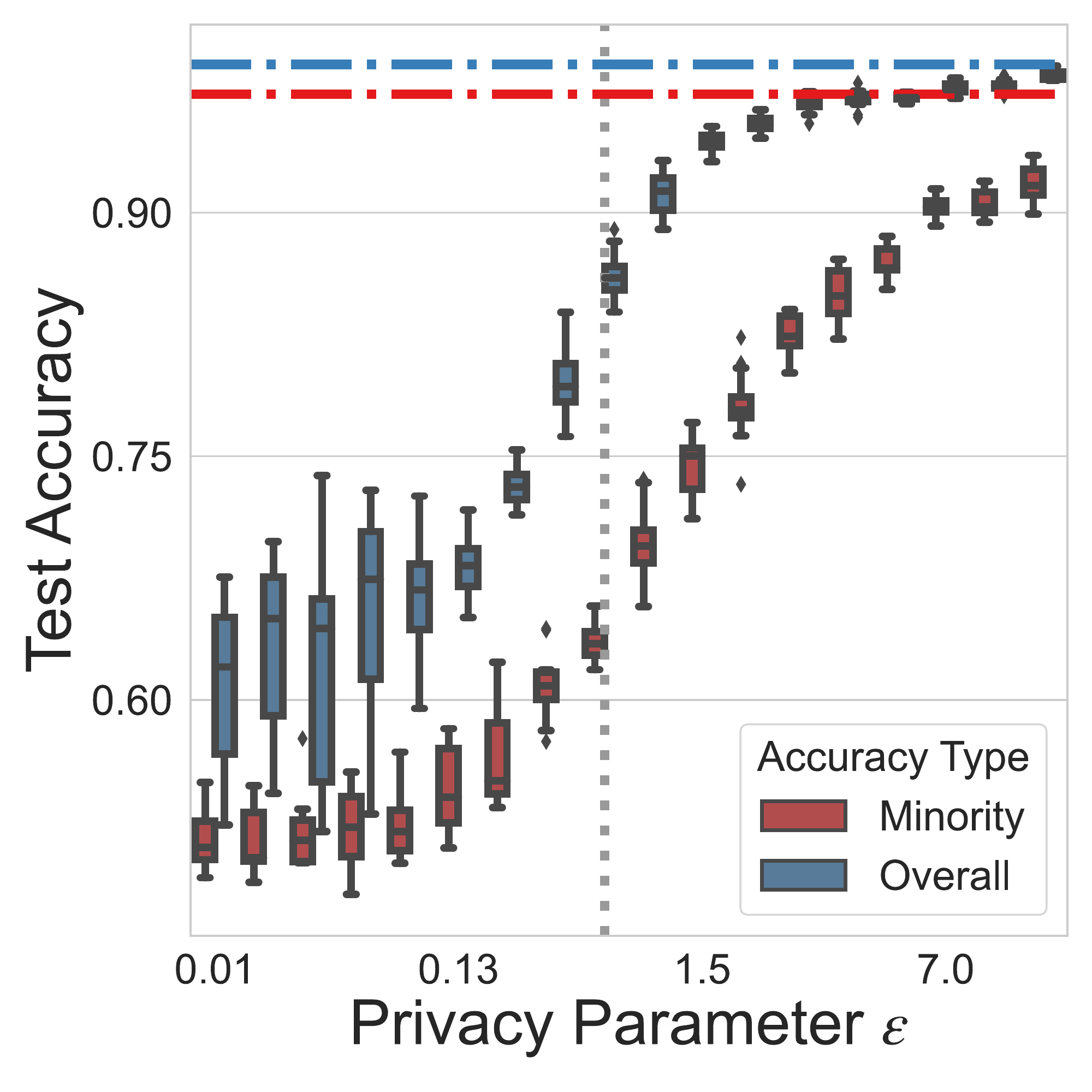_tex}%
      \end{subfigure}
\caption{\textbf{Left:} {\color{OliveGreen} Accuracy discrepancy
(green)} where the box plots reflect the variance when run several
times. \textbf{Right:} Overall {\color{NavyBlue} (blue boxes)} and
minority {\color{red} (red boxes)} accuracies for varying
\(\epsilon\). The horizontal dashed line of different colors show the
respective metrics for vanilla training without privacy constraints.
The gray vertical dashed line marks the privacy parameter for which
significant~(\(\geq 80\%\)) overall test accuracy is achieved.
}
\label{fig:synth-fairn-acc-with-eps}
\end{figure}
In the experiments, we choose \(k=64,m=10^4,\) vary the ratio \(c\)
from \(0.01\) to \(0.2\), set the number of minority subpopulations to
\(N = mc\), and choose \(p\in\bc{0.2,0.5}\). We train a five-layer
fully connected neural network with ReLU activations using
DP-SGD~\citep{Abadi_2016} for
varying levels of \(\epsilon\) while setting
\(\delta=10^{-3}\). We refer
to~\Cref{app:exp-details} for a more detailed description of the data
distribution and the training algorithm.

\noindent\textbf{Unfairness aggravates with increasing number of minority subpopulations}
As discussed in~\Cref{sec:prelim}, increasing the number of
subpopulations compared to the number of samples via \(c\)
decreases accuracy on the minority subpopulations while
the majority subpopulations remain unaffected.
\Cref{fig:incr-C-hurts-fairness} shows how increasing \(c\)
hurts fairness since the accuracy discrepancy (green dashed line)
increases, most pronounced for small values $\epsilon$ (i.e. more private algorithms).
This corroborates our theoretical results
from~\Cref{thm:thm_2} regarding the dependence of accuracy discrepancy
on \(c\).
We further observe that the
increase in unfairness is almost entirely due to the drop in the
minority accuracy (red solid) whereas the overall accuracy (blue) stays relatively constant.
This highlights our claim that, in the presence of strong privacy, fairness can be poor even when overall error is low.

\noindent\textbf{Privacy constraints hurt fairness for accurate models}
In this section, we analyse the dependence of fairness on the privacy
parameter \(\epsilon\) for a
fixed \(c\). In~\Cref{fig:synth-fairn-acc-with-eps}~(left), we plot
the disparate accuracies~\(\Gamma\) for varying privacy
parameter~\(\epsilon\) and~\Cref{fig:synth-fairn-acc-with-eps}~(right)
depicts the minority and overall accuracy as a function
of \(\epsilon\).

There are two distinct phases in the development of
the accuracy discrepancy with increasing~\(\epsilon\) separated by the
gray dashed line:
For a very small \(\epsilon\), the learned classifier is essentially a
trivial classifier as evidenced by the very low overall
accuracy~\(\br{\approx 60\%}\). This is a trivial way of achieving
fairness without learning an accurate classifier and is explained
by~\Cref{thm:thm_3} in our theoretical section. As the privacy
restrictions are relaxed, the classifier becomes more accurate and
less fair in the first phase.

The interesting regime is when classifier obtains decent overall
accuracy~(\(\approx 80\%\)) and is marked by the vertical gray dashed
line. In the region to the right of the vertical dashed line,
assumption~\(\frac{1}{\epsilon}=o\br{m}\) is fulfilled and
~\Cref{fig:synth-fairn-acc-with-eps}~(left) reflects the behavior as
predicted in ~\Cref{thm:thm_2}: {\em loosening privacy increases
fairness or smaller \(\epsilon\) implies larger accuracy discrepancy}.
\vspace{-10pt}
\subsection{Experiments on vision datasets}
\label{sec:vision}

\begin{figure}[t]\centering
  \begin{subfigure}[c]{0.35\linewidth}
  \centering
  \def\svgwidth{0.99\columnwidth}
  \input{./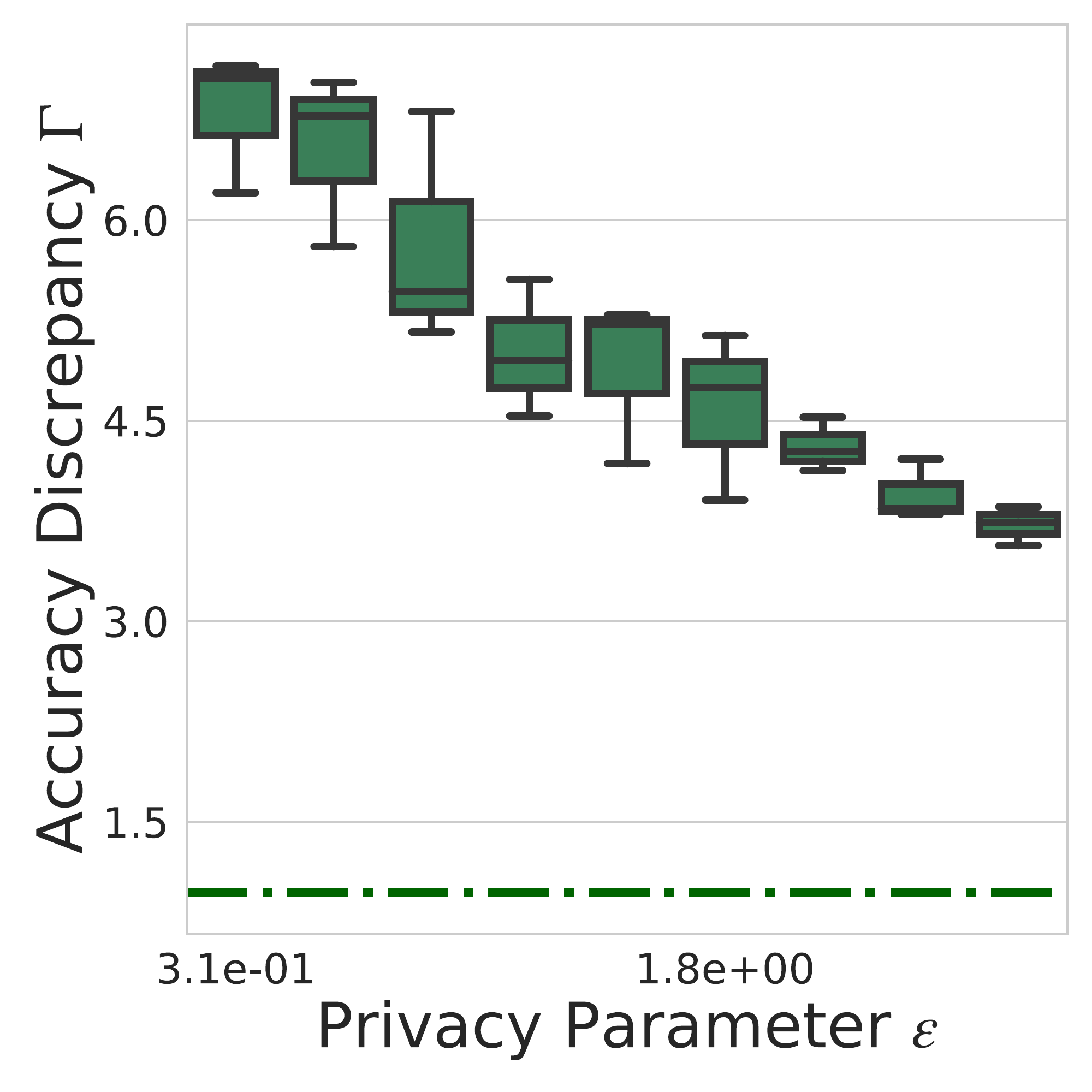_tex}%
  \end{subfigure}
  \begin{subfigure}[c]{0.35\linewidth}
      \centering
      \def\svgwidth{0.99\columnwidth}
      \input{./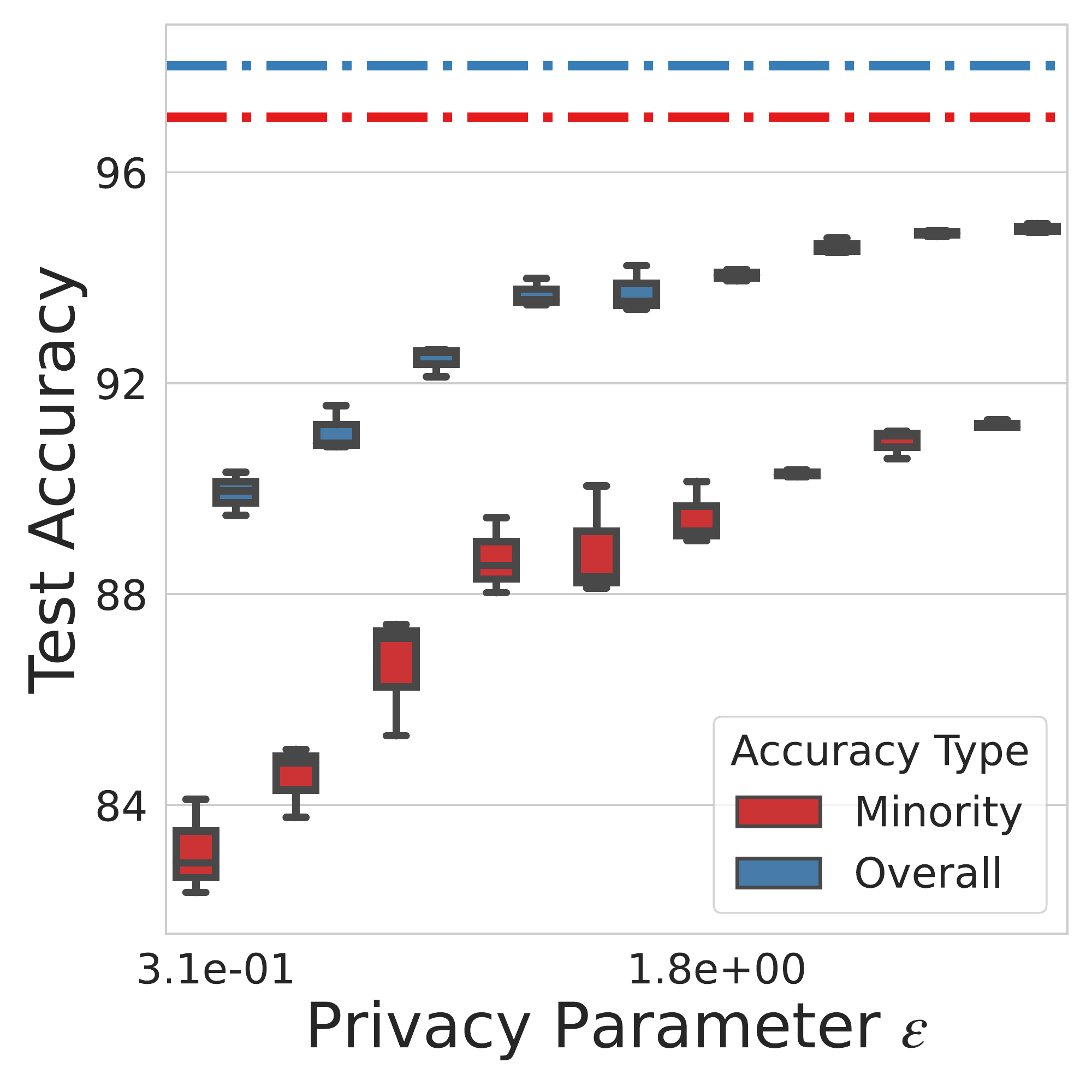_tex}%
      \end{subfigure}
          \caption{CelebA: \textbf{Left:} {\color{OliveGreen} Accuracy
          discrepancy (green)} where the box plots reflect the
          variance when run several times. \textbf{Right:} Overall
          {\color{NavyBlue} (blue boxes)} and minority {\color{red}
          (red boxes)} accuracies for varying \(\epsilon\). The
          horizontal dashed line of different colors show the
          respective metrics for vanilla training without privacy
          constraints.
          }
          \label{fig:celeb-fairness-acc}
\end{figure}

In this section, we show that our claims resulting
from~\Cref{thm:thm_2,thm:thm_3} %
do not only hold in
synthetic settings but can also be observed in real-world computer
vision datasets. In particular, we conduct experiments on two
popular computer vision datasets ---
CelebA~ and
\cifar. CelebA is a dataset of
approximately \(160k\) training images of dimension \( 178\times 218\)
and another \(20k\) of the same dimension for testing. \cifar is a
10-class classification dataset where there are \(50k\) training
images and \(10k\) test images of dimension \(3\times 32\times 32\).
For \cifar, we use a ResNet-\(18\) and for CelebA,
we use a ResNet-\(50\) architecture.

\subsubsection{Minority and majority subpopulations}
In practice, datasets like CelebA and \cifar often do not come with a
label of what constitutes a subpopulation. In this section, we
describe how we define the minority and majority subpopulations for
\cifar and CelebA.

\noindent\textbf{CelebA}~The CelebA dataset provides \(40\) attributes
for each image including characteristics like gender,
hair color, facial hair etc. We create a binary classification problem
by using the gender attribute as the target label. In addition, we use
\(11\) of the remaining \(39\) binary attributes to create \(2^{11}\)
subpopulations and categorise each example into one of these
\(2^{11}\) subpopulations. Then, we create various
groups of minority subpopulations
by aggregating the samples of all the unique
subpopulations that appear less than \(s\in\bc{5,10,20,40,60,80,100}\) times in the test set.
The remaining examples constitute the majority group. In this section, we run experiments
using \(s=40\). We report results for the other values of $s$
in~\Cref{app:more-p-celebA}.

\noindent\textbf{\cifar}~Unlike the synthetic distribution and CelebA as described above,
\cifar cannot be readily grouped into subpopulations using explicit attributes.
However, recent works~\citep{Zhang2020,Sanyal2020Benign} have shown
the presence of subpopulations in CIFAR-10 in the context of influence
functions and adversarial training respectively. We use the influence
score estimates from~\citet{Zhang2020} to create the minority and
majority subpopulations. Intuitively, we treat examples that are
atypical i.e.  unlike any other examples in the dataset as minority
examples belonging to minority subpopulations; and examples that are
{\em typical} i.e. similar to a significant number of other examples
in the dataset as examples belonging to majority
subpopulations.%

To define these subpopulations, first, we sort the examples in the
training set according to their self-influence~\citep{Zhang2020}. We
define all of those that surpass a threshold $\rho$ as minority
populations. In order to find the samples belonging to each
subpopulation $x$ in the test set, we search for images that are
heavily influenced~(influence score is greater than the threshold) by
at least one of the samples in $x$ in the training set.
In this section, we report results with \(\rho=0.1\). Other values of \(\rho\)
show a similar trend and we plot results using \(\rho=0.01\)
in~\Cref{app:more-p-CIFAR}.

\begin{figure}[t]\centering
    \begin{subfigure}[c]{0.35\linewidth}
      \centering
      \def\svgwidth{0.99\columnwidth}
      \input{./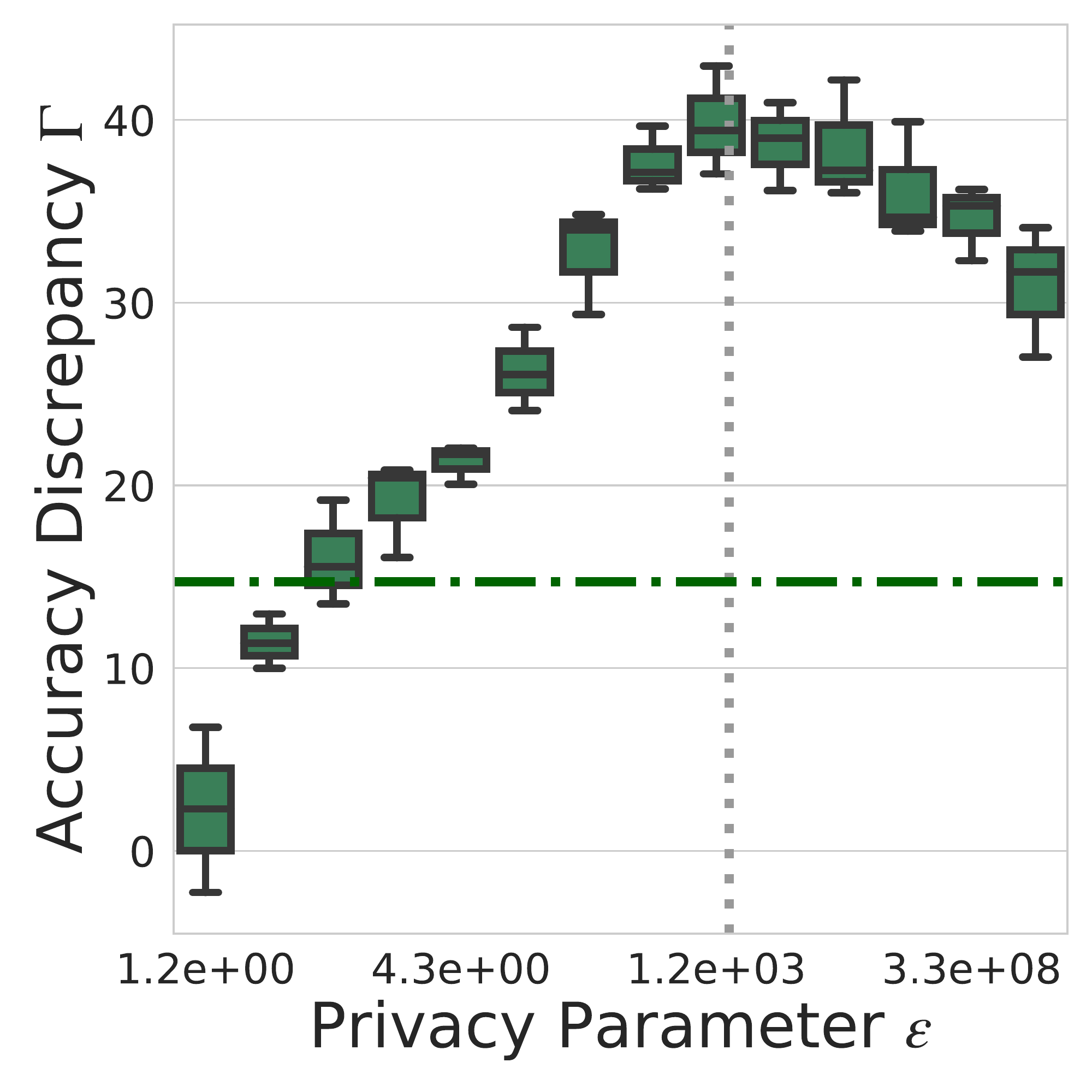_tex}%
      \end{subfigure}
      \begin{subfigure}[c]{0.35\linewidth}
          \centering
          \def\svgwidth{0.99\columnwidth}
          \input{./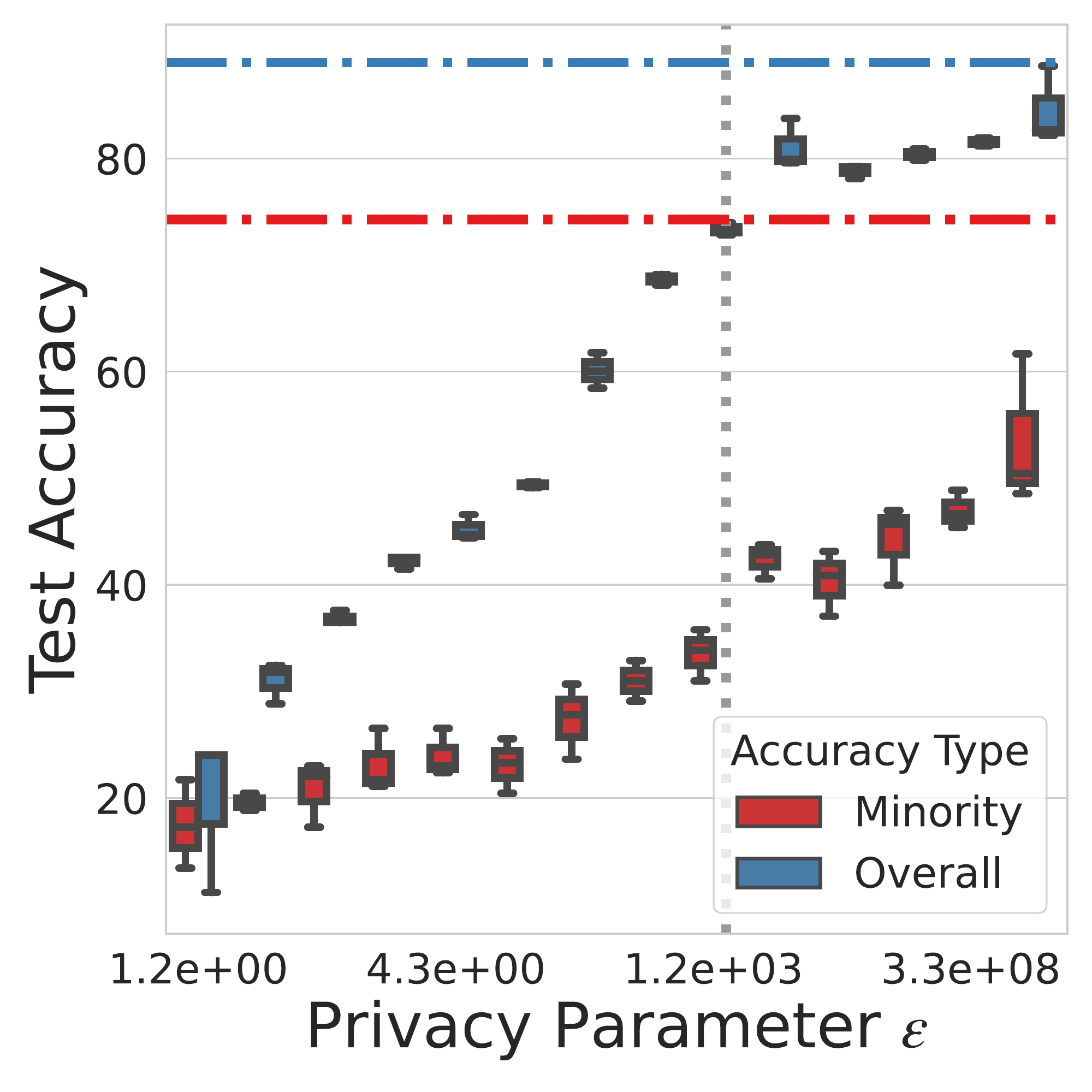_tex}%
          \end{subfigure}
      \caption{\cifar\textbf{Left:} {\color{OliveGreen} Accuracy
      discrepancy (green)} where the box plots reflect the variance when
      run several times. \textbf{Right:} Overall {\color{NavyBlue} (blue
      boxes)} and minority {\color{red} (red boxes)} accuracies for
      varying \(\epsilon\). The horizontal dashed line of different
      colors show the respective metrics for vanilla training without
      privacy constraints. The vertical dashed line marks the
      \(\epsilon\) for which significant~(\(\geq 75\%\)) overall test
      accuracy is achieved.}
      \label{fig:cifar10-fair-acc}
  \end{figure}

\subsubsection{Privacy leads to worse fairness for accurate models}

In this section, we use the above  definitions of minority and
majority groups to measure the impact of privacy on
fairness~(using~\Cref{defn:fairness}). Like~\Cref{sec:synth}, we
measure both the accuracy discrepancy and the individual minority and
overall accuracies.

\noindent\textbf{CelebA} ~\Cref{fig:celeb-fairness-acc} plots the change in
accuracy discrepancy, with respect to the \(\epsilon\) parameter of
differential privacy~(smaller \(\epsilon\) indicates stricter
privacy).~\Cref{fig:celeb-fairness-acc}~(left) shows that smaller
\(\epsilon\), with high
accuracy~(see~\Cref{fig:celeb-fairness-acc}~(right)) implies a larger
accuracy discrepancy. This aligns with our theoretical results
from~\Cref{sec:theory}. Note that
unlike~\Cref{fig:synth-fairn-acc-with-eps}~(left), the accuracy
discrepancy here monotonically decreases with increasing \(\epsilon\)
without exhibiting a two-phase
behavior.~\Cref{fig:celeb-fairness-acc}~(right) shows that, the reason
why we do not observe the two-phase behavior is that throughout the
range of observed \(\epsilon\), %
we are in the regime of high accuracy.

\noindent\textbf{\cifar}
\Cref{fig:cifar10-fair-acc}~(left) plots the change of accuracy
discrepancy \(\Gamma\) with respect to the privacy parameter
\(\epsilon\). Interestingly, the results here  exactly mimic those
from the synthetic experiments in~\Cref{fig:synth-fairn-acc-with-eps},
which are based on our theoretical setting. This indicates that our
theoretical setting is indeed relevant for real world observations.
Similar to the synthetic experiments, we observe two distinct phases
in how the accuracy discrepancy changes with \(\epsilon\). 

For small values of \(\epsilon\),~\Cref{fig:cifar10-fair-acc}~(right)
shows that the learned classifier is highly inaccurate. As discussed
in~\Cref{thm:thm_3}, this is a trivial way to achieve fairness and
this is reflected in~\Cref{fig:cifar10-fair-acc}~(left).
However, if we restrict ourselves to classifiers with high average
accuracy, marked by the area to the right of the vertical gray dashed
line,~\Cref{fig:cifar10-fair-acc}~(left) shows that accuracy
discrepancy increases with decreasing \(\epsilon\). This corresponds
to the setting in~\Cref{thm:thm_2}.

\subsection{Experiments on tabular data}
\label{sec:tab-data}
\looseness=-1
To show that our observations hold across a wider range of publicly
used datasets, we next conduct similar experiments using tabular data.
We run our experiments on the the Law school
dataset~\citep{wightman1998lsac} that has previously been used in
fairness-awareness studies like~\citet{Quy2021}. It is a binary
classification dataset with \(21k\) data points and 12 dimensional
features. Out of the 12 attributes, two binary attributes are used to
obtain the minority group as defined in~\citet{Quy2021}.

In contrast to previous experiments, we use
random forest model from~\citet{fletcher2017differentially} instead of
neural networks as in~\Cref{sec:synth,sec:vision}. For our
implementation, we use the publicly available code
in~\citet{diffprivlib} with \(10\) trees and of a maximum depth
\(50\). The results are plotted in~\Cref{fig:fair-acc-law} and they
show a similar two phase behavior as in our previous experiments with \cifar. 

\begin{figure}[t]\centering
    \begin{subfigure}[c]{0.35\linewidth}
    \centering
    \def\svgwidth{0.99\columnwidth}
    \input{./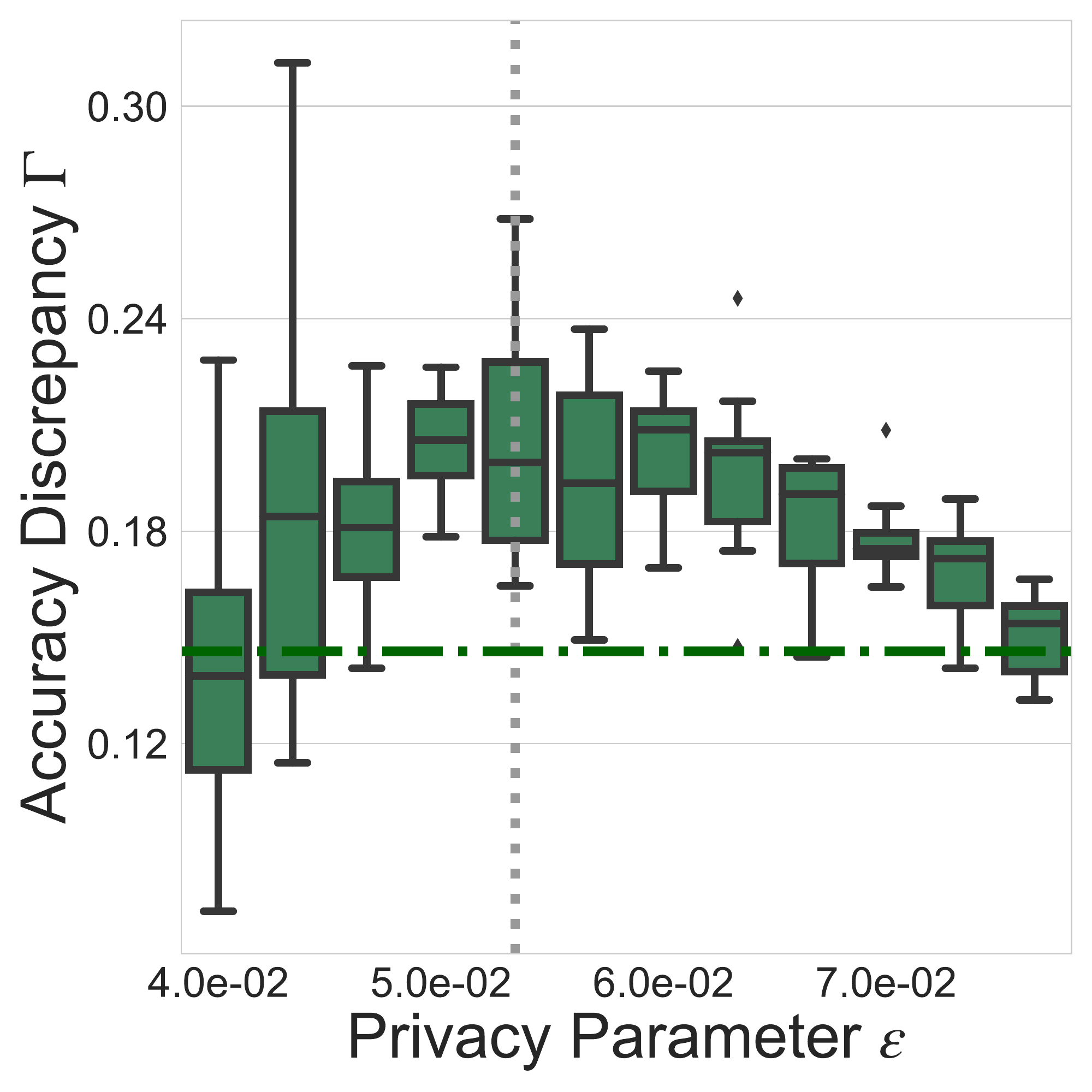_tex}%
    \end{subfigure}
    \begin{subfigure}[c]{0.35\linewidth}
        \centering
        \def\svgwidth{0.99\columnwidth}
        \input{./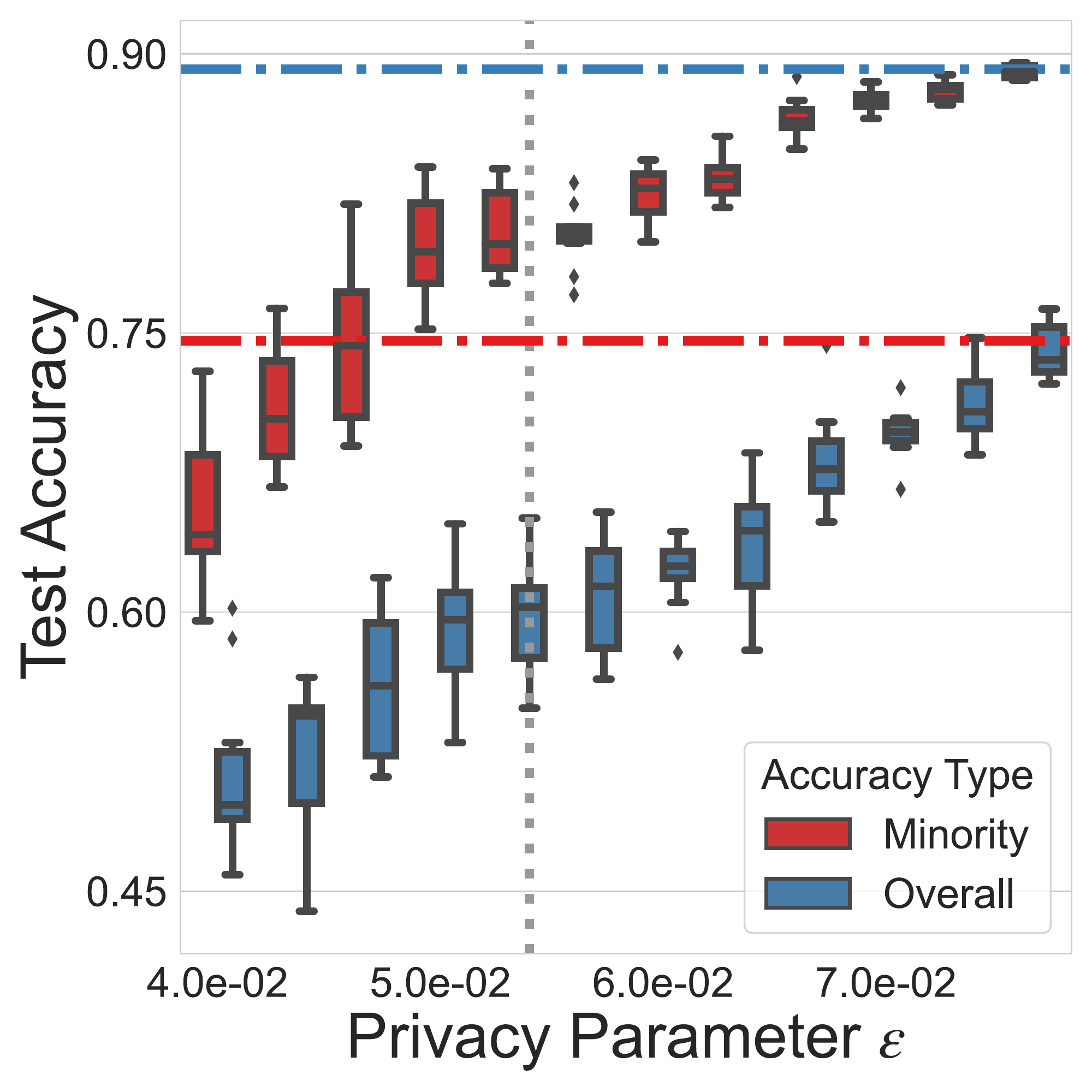_tex}%
        \end{subfigure}
      \caption{Law School---\textbf{Left:} {\color{OliveGreen} Accuracy
      discrepancy (green)} where the box plots reflect the variance when
      run several times. \textbf{Right:} Overall {\color{NavyBlue} (blue
      boxes)} and minority {\color{red} (red boxes)} accuracies for
      varying \(\epsilon\). The horizontal dashed line  show the respective metrics for vanilla training without
      privacy constraints. The vertical dashed line marks the
      \(\epsilon\) for which the test accuracy is largr~(\(\geq 80\%\)).}
      \label{fig:fair-acc-law}
      \vspace{-10pt}
  \end{figure}

Thus, all our experiments provide empirical evidence in support of the
theoretical arguments in~\Cref{sec:theory}. The behavior is
consistent across multiple kinds of datasets, machine learning models,
and learning algorithms.

\section{Future work}
\label{sec:future}
\begin{figure}[t]\centering
    \begin{subfigure}[c]{0.48\linewidth}
    \centering
    \def\svgwidth{0.99\columnwidth}
    \input{./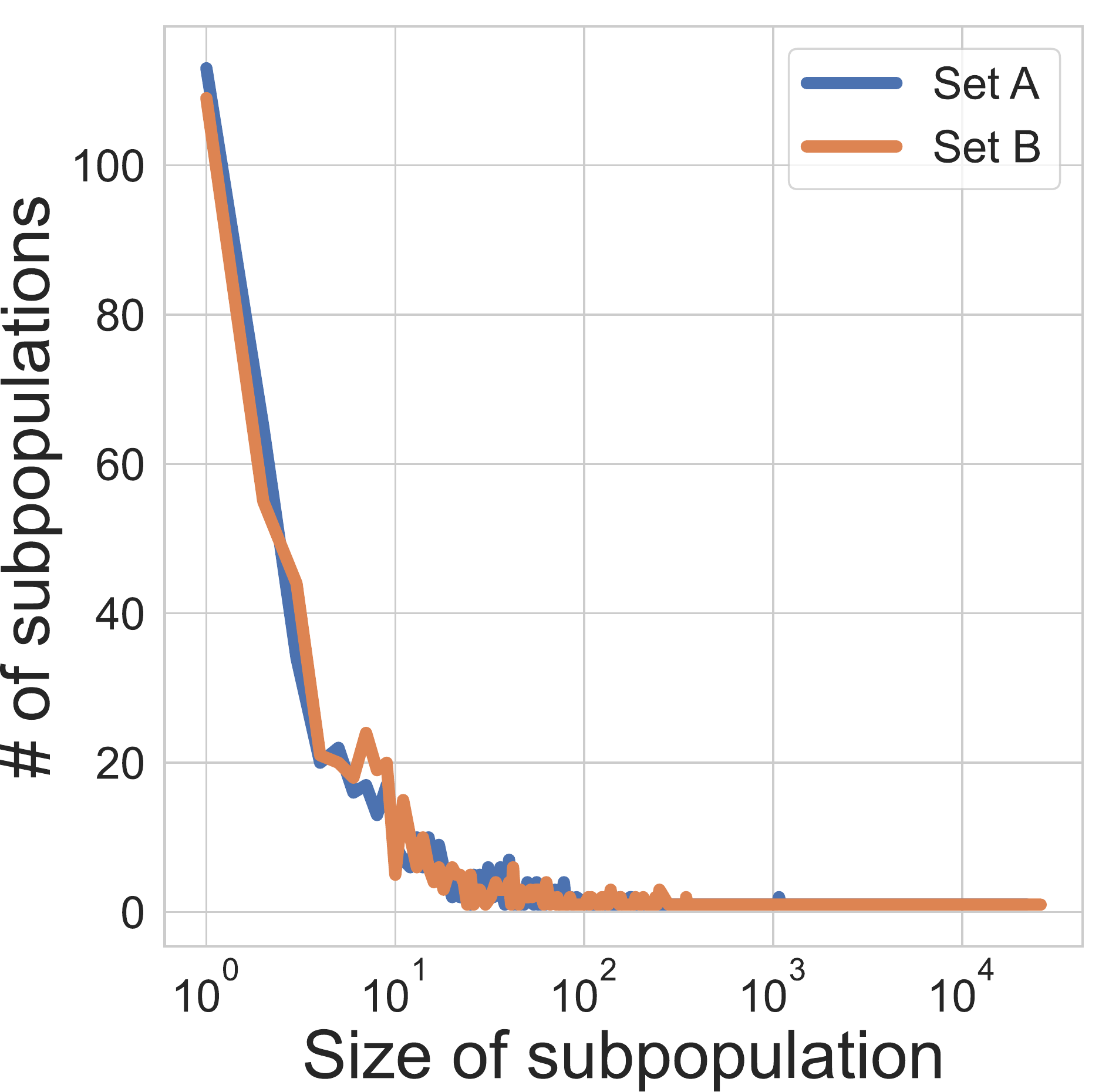_tex}%
    \end{subfigure}
    \begin{subfigure}[c]{0.48\linewidth}
        \centering
        \def\svgwidth{0.99\columnwidth}
        \input{./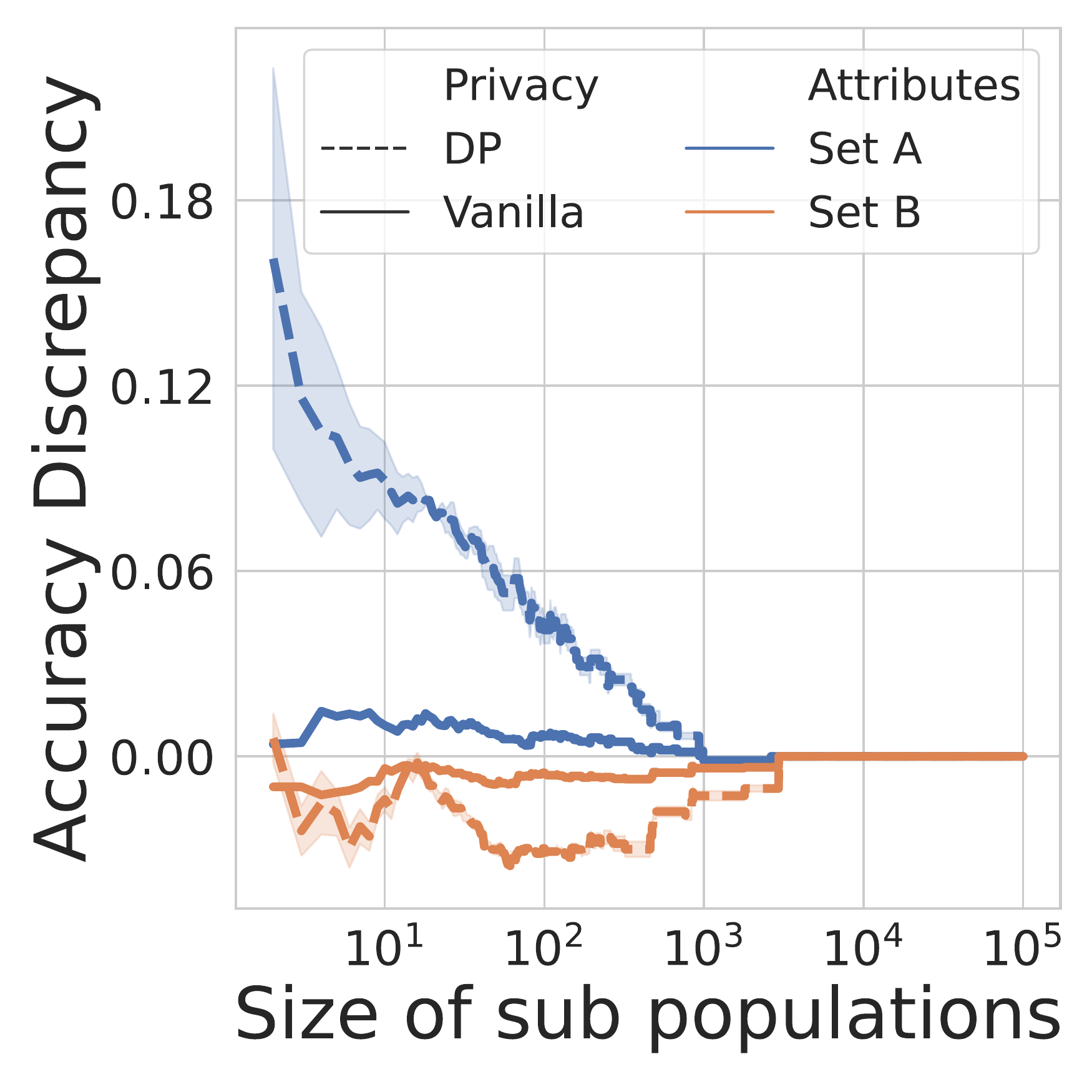_tex}%
        \end{subfigure}
        \caption{{\bfseries Left} Both sets of attributes induces a
        similar distribution over sizes of subpopulations on CelebA.
        {\bfseries Right} Set A has high accuracy discrepancy for
        small sized subpopulations whereas Set B does not.}
        \label{fig:fair-subpop-size}
      \end{figure}

  The experimental results on CelebA in~\Cref{sec:vision} shows that
  when the minority group is composed of small sized subpopulations,
  differential privacy requirements hurt the fairness of the algorithm.
  Here, we highlight that not all small-sized subpopulations are hurt
  equally in this process.~\Cref{fig:fair-subpop-size} shows that
  a different partition of CelebA composed of similar sized populations
  do not show similar behaviours in terms of how accuracy discrepancy
  changes with sizes of subpopulations. We refer to the \(11\)
  attributes we chose to partition the testset for our experiments so
  far as {\em Set A} and, here, we choose another set of \(11\) attributes and refer to them as {\em Set B}.~\Cref{fig:fair-subpop-size}~(left)
  shows that both Set A and Set B induces a very similar distribution
  over sizes of subpopulations on the test set.
  However,~\Cref{fig:fair-subpop-size}~(right) shows that while the
  group of minority subpopulations induced by Set A suffers very high
  accuracy discrepancy from private training compared to vanilla
  training, Set B does not~(see~\Cref{app:more-p-celebA} for
  more details on Set A and Set B). This indicates that, irrespective of
  sizes, private training hurts fairness disproportionately more for
  certain subpopulations compared to others. In particular, an
  interesting direction of further research is to investigate where
  these minority subpopulations that are worse-affected by private
  training intersects with the subpopulations that are relevant for the
  specific domain.  While most past
  works~\citep{bagdasaryan2019differential,Chang_2021} have also used
  sizes of subpopulations to differentiate between disparately impacted
  subpopulations, this suggests that that is not always the case. %
  
  In this paper, we have shown theoretically that when the minority
  group in the data is composed of multiple subpopulations, a DP
  algorithm can achieve very low error but necessarily incurs worse
  fairness. Further, we corroborated our theoretical results with
  experimental evidence on synthetic and real world computer vision
  datasets. 
  However, our model-agnostic results, that shed a rather pessimistic
  light on algorithmic fairness and differential privacy, only apply
  under certain distributional assumptions. It is possible that in some
  real-world datasets there are fair and private algorithms that achieve
  a more optimistic trade-off. This begs further research to develop
  fair and private algorithms that are closer to the pareto optimal
  frontier.

\section{Acknowledgements} 
AS is supported by the ETH AI Center and Hasler Stiftung.

\bibliographystyle{abbrvnat}
\bibliography{fairness}

\onecolumn
\appendix

\section{Proofs}
\label{sec:app_proofs}

\subsection{Proof for Theorem 1}
\label{app:thm_1}

\begin{thm}[Detailed version of~\Cref{thm:thm_1}]
    \label{thm:thm_1_detailed}
For any $\alpha \in (0, 0.025)$, $\epsilon \in (0, 1.1)$, $\delta \in (0,
0.01)$,  consider any \(\br{\epsilon,\delta}\)-DP
  algorithm \(\cA\) that does not make mistakes on points occurring
  more than once in the dataset. 
Then, there exists \(p\in\br{0,\nicefrac{1}{2}},c>0\) such that 

\[\err{\cA, \dist, \lblprior} \leq \alpha~\] and 
 \[\fair{\cA, \dist, \lblprior}\geq
    0.5\]

where \(\nicefrac{N}{m}\to c\) as \(N,m\to\infty\) and $\lblprior$
belongs to a family of label priors such that $\max_{x\in
X_2}||\lblprior(x)||_{\infty} \leq 0.1$ for $||\lblprior(x)||_{\infty} =
\max_{y\in \mathcal{Y}}\bP_{f\sim \lblprior}[f(x)= y]$. 
\end{thm}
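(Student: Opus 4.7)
The plan is to exploit the long-tailed structure of $\dist$: each majority subpopulation has mass $1/k = \Omega(1)$, so it is hit many times by an $m$-sized sample and is therefore classified correctly by assumption; each minority subpopulation has mass $p/N = \bigO{\frac{1}{m}}$, so a constant fraction of $X_2$ is entirely absent from $S$. To control the error on these unseen points, I would exploit the freedom to choose the label prior. Concretely, I would take the family of priors to consist of \emph{subpopulation-wise independent} priors $\lblprior$ with $\|\lblprior(x)\|_\infty = 0.1$ for every $x \in X_2$ (for example, $f(x)$ drawn independently and uniformly over a ten-label alphabet for each minority $x$). Then for any $x\in X_2$ absent from $S$, the hypothesis $h$ depends only on $S$ and on $f|_S$, which under subpopulation-wise independence is independent of $f(x)$; hence $\bP_{f,h}[h(x)=f(x)]\leq \|\lblprior(x)\|_\infty = 0.1$, giving an error of at least $0.9$ on such points.

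\textbf{Upper bound on the overall error.} Decomposing $\errinf{\cA,\dist,\lblprior} = (1-p)\,e_1 + p\,e_2$, where $e_i$ is the (asymptotic) conditional error on $X_i$, I would argue $e_1 \to 0$ as follows. For each $x\in X_1$, $\mathrm{mult}_S(x) \sim \mathrm{Binomial}(m, 1/k)$ with $k = \bigO{1}$, so $\bP[\mathrm{mult}_S(x)\leq 1]$ decays exponentially in $m$. Since there are only $(1-p)k$ majority subpopulations (a constant in $m$), a union bound combined with the assumption of no mistakes on repeated points yields $e_1\to 0$. Hence $\errinf{\cA,\dist,\lblprior} \leq p + o(1)$, and taking $p \leq \alpha$ gives the error conclusion in the asymptotic limit.

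\textbf{Lower bound on the accuracy discrepancy.} For each $x\in X_2$ with $\dist(x) = p/N$, $\bP[\mathrm{mult}_S(x) = 0] = (1 - p/N)^m \to e^{-p/c}$ as $m, N \to \infty$ with $N/m\to c$. Combining this with the $0.9$ bound from the prior gives $e_2 \geq 0.9\, e^{-p/c}$ in the limit. Since $e_1 \to 0$,
\[
\fairinf{\cA,\dist,\lblprior} \;=\; (1-p)(e_2 - e_1) \;\geq\; 0.9\,(1-p)\,e^{-p/c} - o(1).
\]
Given $\alpha\in(0, 0.025)$, set $p=\alpha$ and choose $c$ large enough that $0.9\,(1-\alpha)\,e^{-\alpha/c} \geq 0.5$; since $e^{-\alpha/c}\to 1$ as $c\to\infty$ and $0.9\,(1-0.025) = 0.8775 > 0.5$, this is achievable (in fact any $c > \alpha / \log(0.8775/0.5)$ suffices).

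\textbf{Main obstacle and remarks.} The principal technical subtlety is carefully translating the per-$x$ probability estimates into the expectations defining $\err{\cdot}$ and $\fair{\cdot}$: this goes through by linearity and dominated convergence since all quantities are in $[0,1]$, but requires careful bookkeeping over the joint randomness of $S$, $f$, and $h$, and over the double limit $m, N\to\infty$ with $N/m\to c$. Notably, the DP constraints $\epsilon \leq 1.11$, $\delta \leq 0.01$ do not actively feature in this argument; they merely delineate the algorithmic family of interest, within which the ``no mistakes on repeated points'' condition becomes non-trivial. One could tighten the fairness bound by additionally invoking a DP argument (comparing $S$ with its neighbour $S\setminus\{x\}$) to show that minority points appearing \emph{exactly once} are also misclassified with probability $\geq 1 - (e^\epsilon\cdot 0.1 + \delta) \geq 0.68$, contributing an extra $(p/c)e^{-p/c}$ term to $e_2$, but this refinement is not needed to meet the stated threshold of $0.5$.
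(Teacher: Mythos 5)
Your proposal is correct and follows the same skeleton as the paper's proof --- decompose error and discrepancy over majority/minority subpopulations, show the majority contribution vanishes because each of the $(1-p)k$ majority subpopulations is seen more than once with probability $1-e^{-\Omega(m)}$, and lower-bound the minority error by counting rarely-seen minority subpopulations ($N(1-1/N)^{m_2}\to Ne^{-\Theta(p/c)}$) --- but it diverges in two places that are worth recording. First, for the overall error you use only the trivial bound $e_2\leq 1$ and shrink $p$ to $\alpha$, whereas the paper keeps $p$ as large as $0.26$ and instead claims the per-point error on unseen minority subpopulations is at most a constant $c_1\leq 0.1$; that step in the paper is delicate (with $\norm{\lblprior(x)}_\infty\leq 0.1$ on $X_2$, no predictor can have error below $0.9$ on a subpopulation it has not seen, so a small upper bound on the minority error is in tension with the large lower bound needed for the discrepancy), and your choice of $p=\alpha$ with $c$ moderately large sidesteps it entirely. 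Second, your discrepancy lower bound uses only the unseen minority subpopulations together with subpopulation-wise independence of the prior, so the differential-privacy hypothesis plays no role; the paper instead derives a constant $c_2 = \min_x 1-e^{\epsilon}\norm{\lblprior(x)}_\infty-\delta$ via the DP inequality applied to neighbouring labelings, which lets it also count the once-seen subpopulations $S_2^1$ (exactly the refinement you mention and correctly observe is unnecessary to clear the $0.5$ threshold). The one thing you should make explicit is that your chosen family of priors must be declared as part of the theorem's ``there exists a family of label priors'' clause and must satisfy subpopulation-wise independence for the ``$h$ is independent of $f(x)$ when $x\notin S$'' step --- you do state this, so the argument is complete modulo the limit bookkeeping you already flag.
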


\begin{proof}
    Recall the definition of the majority and minority subpopulations
    \(X_1\) and \(X_2\) from~\Cref{defn:fairness}. Given a dataset
    \(S\), define \(S_1\) to be the partition of \(S\) that belongs to
    \(X_1\) and \(m_1=\abs{S_1}\) to be the number of examples in
    \(S\) that belong to the majority subpopulation. Similarly, define
    \(S_2\) and \(m_2 = m - m_1\) as the set of minority examples and
    the size of the set of minority examples respectively. We also use
    \(S_i^\ell\) to denote the set of \(x\in X\) that appears
    \(\ell\) times in \(S_i\).

    First, we expand the expression for error,~defined in~\Cref{defn:err_dist} as follows.
    
\begin{equation}
       \begin{aligned}\label{eq:expand_error}
        \err{\mathcal{A}, \dist, \mathcal{F}} =&~\bE_{S,h,f}\bs{\sum_{x\in X}\dist(x)\mathbbm{1}\{h(x) \neq f(x)\}}\\
        =&\bE_{S,f}\bs{\sum_{\ell=0,i = 0}^{\ell=m,i=1} \sum_{x \in S_{i+1}^{\ell}}\dist(x)\bP_{\hsample{f}}[h(x)\neq f(x)]}\\
           =&\bE_{S,f}\bs{\sum_{\ell,i = 0}^1 \sum_{x \in S_{i+1}^{\ell}}\dist(x)\bP_{\hsample{f}}[h(x)\neq f(x)]}\\
            \leq&   \dfrac{1}{k} \bE_S\bs{\abs{S_{1}^0} + \abs{S_1^1}} + \dfrac{c_1p}{N} ~\bE_S\bs{\abs{S_2^0} + \abs{S_2^1}}\\
            \end{aligned}
\end{equation}
            where \( \bP_{f\sim\lblprior,\hsample{f}}\bs{h(x)\neq
            f(x)}\leq \max_{x\in X_2}\max_{y\in
            \mathcal{Y}}\bP_{f\sim \lblprior}[f(x)\neq
            y]:=c_1\leq 0.1\).%
            To see why, note that this upper bound can be achieved by
            an algorithm that ignores the labels in the dataset
            \(\Slabel\) and returns a deterministic classifier that
            predicts a fixed label for an example, possibly different
            for different examples. Similarly, we decompose the
            expression of accuracy discrepancy. For this purpose, for
            any \(\dist\), define the marginal distribution on the
            group of  {\em majority} subpopulations \(X_2\) as
            \begin{equation}
                \label{eq:minority-marginal-dist}
                    \enskip \dist^1\br{x}=\begin{cases} 
                    \dfrac{\Pi_{p,N}\br{x}}{\sum_{x\in X_1}\Pi_{p,N}\br{x}}=\dfrac{\Pi_{p,N}\br{x}}{1-p} & x\in X_1 \\
                    0 & x\not\in X_1
                 \end{cases}
            \end{equation}
        Now, we decompose the definition of accuracy discrepancy
        from~\Cref{defn:fairness} as follows
\begin{equation}\label{eq:expand_fairness}
    \begin{aligned}
        \fair{\mathcal{A}, \dist, \mathcal{F}} =&~\err{\mathcal{A}, \dist^2, \mathcal{F}} - \err{\mathcal{A}, \dist, \mathcal{F}}\\
        =&~(1-p) \left[\err{\mathcal{A}, \dist^2,\mathcal{F}} - \err{\mathcal{A}, \dist^1, \mathcal{F}}\right]\\
        =& (1-p)\sum_{\ell,i = 0}^1 \bE_{S,f}\bs{\br{-1}^{i+1}\sum_{x\in S_{i+1}^{\ell}}\dist^{i+1}\br{x}\bP_{\hsample{f}}[h(x) \neq f(x)]} \\
        \geq& \frac{c_2(1-p)}{N}\bE_S\bs{\abs{S_2^1}+\abs{S_{2}^0}}  - \frac{\br{1-p}}{k}\bE_S\bs{\abs{S_1^1}+\abs{S_1^0}}
        \end{aligned}
        \end{equation}
where we define \(c_2\) as follows. Let $\tilde{f}$ be sampled from a
class of functions that only differ from $f$ at $x$, then \(S_f\) and
\(S_{\tilde{f}}\) are neighboring datasets differing only at $x$.  
As $\cA$ is $(\epsilon, \delta)$-differentially private, for all
\(S\sim{\br\dist}^m\) and for all $x\in S_2$,  we have the following
inequality by the definition of differential privacy

\begin{equation*}%
\begin{aligned}
\bP_{h\sim \cA(S, f)}[h(x)\neq f(x)] &= 1- \bP_{h\sim \cA(S, f)}[h(x)=f(x)]\\
&\geq 1- e^{\epsilon}\bP_{h\sim \cA(S, \tilde{f})}[h(x)=f(x)] - \delta\\
&\geq \min_{x\in X_2} 1- e^{\epsilon}\max_{y\in \cY} \bP_{f\sim \lblprior}[f(x) = y] - \delta \\
&= \min_{x\in X_2} 1- e^{\epsilon}\norm{\lblprior(x)}_{\infty} - \delta := c_2
\end{aligned}
\end{equation*}
where $\norm{\lblprior(x)}_{\infty} = \max_{y\in \mathcal{Y}}
\bP_{f\sim \lblprior}[f(x) = y]$.  Now, we will bound the expectation
of the terms \(\abs{S_1^0},\abs{S_1^1},\abs{S_2^0},\abs{S_2^1}\)
individually to obtain the relevant upper and lower bounds. First, we
define the following random event over the sampling of the \(m\)-sized
dataset from \(\dist\).
\[\cE = \bc{\frac{p}{2}\leq\frac{m_2}{m}\leq\frac{3p}{2}}\]

\noindent As $m_2$ is a binomial distribution with parameter $(m, p)$,
as \(m\to\infty\) it could be  estimated with a Gaussian distribution
with mean $mp$ and variance $mp(1-p)$ by Central Limit Theorem (CLT).
Using Hoeffding's inequality and the assumption \(p<\frac{1}{2}\), we
lower bound the probability of \(\cE\) as follows:

\begin{equation}\label{eq:high_prob_bin_pop}
\begin{aligned}
\bP\bs{\cE}=&= 1-\bP\bs{m_2 \leq \frac{mp}{2}} - \bP\bs{m_2 \geq \frac{3mp}{2}} \\
&= 1-\bP\bs{|m_2 - mp|\geq \frac{mp}{2}}\\
&\geq 1-2\exp\br{-\frac{1}{2}mp^2} \to 0
\end{aligned}
\end{equation}

\noindent Then, by law of total expectation, we have that
\begin{equation}
    \begin{aligned}
        \lim_{m\to\infty}\bE_S\bs{\abs{S_1^0}} & = \lim_{m\to\infty}\bE_S\bs{\abs{S_1^0}\vert\cE}\bP\bs{\cE} + \lim_{m\to\infty}\underbrace{\bE_S\bs{\abs{S_1^0}\vert\cE^c}}_{\bigO{m}}\underbrace{\bP\bs{\cE^c}}_{o\br{\frac{1}{m^2}}}\\
    &=\lim_{m\to\infty}\sum_{x \in X_1} \bP_{S_1\sim \br{\dist^{1}}^{m_1}} \bs{x\text{ occurs } 0 \text{ times in }S_1\vert\cE}\bP\bs{\cE} \\
    &\leq \lim_{m\to\infty}k(1-p)\br{1-\frac{1}{k(1-p)}}^{\br{1-\frac{3p}{2}}m}\bP\bs{\cE}=0 
    \end{aligned}
    \end{equation}
where the last step follows because on the event
\(\cE\),~\(m_1 = m - m_2 \geq\br{1-\frac{3p}{2}}m\) and
\(\frac{1}{k\br{1-p}}\in\br{0,1}\).  It follows from similar arguments
that \(\lim_{m\to\infty}\bE\bs{\abs{S_1^1}}=0\).

\noindent Next, we compute \(\bE_S\bs{\abs{S_2^0}}\) and \(\bE_S\bs{\abs{S_2^1}}\). By simple
counting argument, we have that \begin{equation}
    \label{eq:maj_infreq_ball_thm1}
    \bE_S\bs{S_2^\ell}=\sum_{x\in X_2}\bP_{S_2\sim \br{\dist^{2}}^{m_2}}\bs{x\text{ occurs
} \ell \text{ times in }S_2} =
N\binom{m_2}{\ell}\br{\frac{1}{N}}^\ell\br{1-\frac{1}{N}}^{m_2 - \ell}
\end{equation}

\noindent Plugging in \(\ell=1\) and \(\ell=0\), we get 
\begin{equation}
        \bE_S\bs{\abs{S_2^1}} = m_2\br{1 - \frac{1}{N}}^{m_2-1}\quad~\text{and}\quad~ \bE_S\bs{\abs{S_2^0}} = N\br{1 - \frac{1}{N}}^{m_2}
\end{equation} respectively.
Plugging these expressions back into the expression of error
in~\Cref{eq:expand_error}, we obtain the following upper bound
\begin{equation}\label{eq:thm1-final-err}
\begin{aligned}
\errinf{\cA,\dist,\cF}=&\lim_{m,N\to\infty}  \err{\mathcal{A}, \dist, \mathcal{F}}\\ 
\leq& \underbrace{\frac{1}{k}\lim_{m\to\infty}  \bE_S\bs{\abs{S_{1}^0} + \abs{S_1^1}}}_{\to 0} + \lim_{m,N\to\infty}\frac{c_1 p}{N}  \bE_S\bs{\abs{S_{2}^0} + \abs{S_2^1}} \\
=&\lim_{m,N\to\infty} c_1 p\br{\frac{3pm}{2N}\br{1-\frac{1}{N}}^{\nicefrac{pm}{2}-1} + \br{1-\frac{1}{N}}^{\nicefrac{pm}{2}}}\\
=&c_1p\br{\frac{3p}{2c}e^{-\nicefrac{p}{2c}} + e^{-\nicefrac{p}{2c}}}
\end{aligned}
\end{equation}
where the last step follows from limit rules because \(\frac{N}{m}\to
c\) as \(m,N\to\infty\).

Similarly, we simplify the expression of accuracy discrepancy
from~\Cref{eq:expand_fairness}.
\begin{equation}\label{eq:thm1-final-fair}
    \begin{aligned}
        \fairinf{\cA,\dist,\cF} &= \lim_{m,N\to\infty}\fair{\cA,\dist,\cF}\\
        &\geq \lim_{m,N\to\infty}\frac{c_2(1-p)}{N}\bE_S\bs{\abs{S_2^1}+\abs{S_{2}^0}}  - \underbrace{\lim_{m,N\to\infty}\frac{\br{1-p}}{k}\bE_S\bs{\abs{S_1^1}+\abs{S_1^0}}}_{= 0}\\
        &\geq c_2\br{1-p}\br{\frac{p}{2c}e^{-\nicefrac{3p}{2c}}+e^{-\nicefrac{3p}{2c}}}
    \end{aligned}
\end{equation}

\noindent Next, we show that for any $\alpha \in [0, 1]$, there is some $p,c$
such that $\errinf{\cA, \Pi_{p, N}, \lblprior}\leq \alpha$ and $ \fairinf{\cA,
\Pi_{p, N}, \lblprior}\geq 0.5$. Setting \(c=10,p=0.26\), the minority group
accuracy discrepancy evaluates to $\fairinf{\cA, \Pi_{p, N}, \lblprior}\geq 0.5$.
For $p\in (0,0.26)$, the lower bound of accuracy discrepancy increases
with decreasing $p$ as its derivative is negative. 
Therefore,  $\fairinf{\cA, \Pi_{p, N}, \lblprior}\geq 0.5$ is always
satisfied for all $p \leq 0.26$ and \(c=10\) 

Now, we show that for $\alpha \in \bs{0, 1}$, there exists a
distribution $\dist$ with $p \leq 0.26, c = 10$ such that
$\errinf{\cA, \Pi_{p, N}, \lblprior}\leq \alpha$. Specifically, we
first argue that the upper bound of error monotonically increases with
$p$ for $p\in \bs{0, 0.26}$ for \(c=10\), then we show that the upper
bound achieves its maximum and minimum values at $0$ and $0.26$
respectively and invoke the intermediate value theorem to complete the
proof.

Taking the derivative of the upper bound of the error, for $p \in [0,
0.26]$ and \(c=10\),
we observe that the upper bound on error monotonically increases from
\(p=0\) to \(p=0.26\) for \(c=10\). Plugging in \(p=0.26\) and
\(p=0\), we obtain the maximum and minimum values to be \(0.027\) and
\(0\) respectively. Invoking the intermediate value theorem on this
completes the proof.

\end{proof}

\subsection{Proof for Theorem 2}
\label{app:thm_2}
In this section, we prove the following more detailed version
of~\Cref{thm:thm_2}. 

{\color{black}

\begin{thm}[Detailed version of~\Cref{thm:thm_2}]\label{thm:thm_2_detailed}

    For any \(p\in \br{0, \frac{1}{2}}\), \(c> 0\) such that
    $\frac{p}{c}\leq 1$, consider the distribution \(\Pi_{p, k, N}\)
    where \(\nicefrac{N}{m}\to c\) as \(N,m\to\infty\), \(k =
    o\br{m}\), and \(k(1-p)\geq 2\). For any $\epsilon>0,
    \frac{1}{2}\geq\delta > 0$, such that
    \(\frac{1}{\epsilon},\log\br{\frac{1}{\delta}}=o\br{m}\), consider
    an \(\br{\epsilon,\delta}\)-DP algorithm $\cA$ that satisfies
    Assumption~\ref{assump:A1} with $s_0=
    \floor{\min\br{\frac{1}{\epsilon}\log{2},\log{\br{\frac{1}{2\delta}}}}}$
    and some \(p_1\in\br{0,1}\). Also, let \(\lblprior\) be
    subpopulation-wise independent. Then, for the limits
    \(\errinf{\cA, \dist, \lblprior}=\lim_{m,N\to\infty}\err{\cA,
    \dist, \lblprior}\) and \(\fairinf{\cA, \dist,
    \lblprior}=\lim_{m,N\to\infty}\fair{\cA, \dist, \lblprior} \), we
    have 
    \[\fairinf{\cA, \dist, \lblprior} \geq
    \frac{\br{1-\norm{\lblprior}_\infty}\br{1-p}}{3}\br{1-\sqrt{\frac{3p}{\pi c}}\frac{e^{-\frac{c}{3p}\br{s_0 - \frac{3p}{2c}}^2}}{2\br{s_0 - \frac{3p}{2c}}}}- 
        (1-p)p_1\] while the error is upper
   bounded by \[\errinf{\cA,
      \dist, \lblprior} \leq (1-p_1)p\br{1-\frac{(s_0 -
        \frac{p}{2c})^2\frac{2c}{p} -1}{\sqrt{2\pi}(s_0 -
        \frac{p}{2c})^3\br{\sqrt\frac{2c}{p}}^3}e^{-(s_0 -
        \frac{p}{2c})^2\frac{c}{p}}} + p_1\] where  $||\lblprior||_{\infty} =
    \max_{x\in X,y \in \cY}\bP_{f\sim\lblprior}\bs{f(x) = y}$.
\end{thm}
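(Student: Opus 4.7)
The key idea is to split every sum over $X$ into contributions from subpopulations that appear more than $s_0$ times in the sampled dataset $S$ versus those that appear at most $s_0$ times, since Assumption~\ref{assump:A1} controls the error on the former while Lemma~\ref{lem:large_error_infreq_sample} controls it on the latter. The quantitative Gaussian-tail factors in the theorem will come from Mill's-ratio estimates of binomial tail probabilities, after conditioning on a high-probability event controlling $m_2 = |S_2|$.

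First, I would condition on the event $\cE = \bc{mp/2 \leq m_2 \leq 3mp/2}$, which occurs except with probability $2\exp(-mp^2/2) = o(1/m^2)$ by Hoeffding; on $\cE$, for $x \in X_2$ the count $Y_x \sim \mathrm{Bin}(m_2, 1/N)$ has mean $m_2/N \in [p/(2c), 3p/(2c)]$ asymptotically, whereas for $x \in X_1$ the count $\mathrm{Bin}(m, 1/k)$ has mean $m/k \to \infty$. Since $s_0 = o(m)$ by the hypothesis $1/\epsilon, \log(1/\delta) = o(m)$, majority subpopulations almost surely exceed $s_0$ occurrences, so their error contribution is at most $(1-p)p_1$ by Assumption~\ref{assump:A1}. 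For minority subpopulations, I would upper-bound their contribution to the error by $p\cdot[p_1 + (1-p_1)\bP[Y_x \leq s_0]]$, applying Assumption~\ref{assump:A1} in the high-frequency regime and the trivial bound $\leq 1$ in the low-frequency regime. To match the stated form, I would apply the lower Mill's bound $1 - \Phi(z) \geq (z^2 - 1)\phi(z)/z^3$ together with a one-sided Gaussian-type tail estimate for $Y_x$ at $z = (s_0 - p/(2c))\sqrt{2c/p}$; the worst case $m_2 = mp/2$ maximises $\bP[Y_x \leq s_0]$ and yields exactly the claimed expression.

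For the fairness lower bound, I would start from $\fair{\cA, \dist, \lblprior} = (1-p)[\err{\cA, \dist^2, \lblprior} - \err{\cA, \dist^1, \lblprior}]$ as in the proof of Theorem~\ref{thm:thm_1}. The majority-only error $\err{\cA, \dist^1, \lblprior}$ is at most $p_1$ by the same argument as above, which will produce the $-(1-p)p_1$ term. For $\err{\cA, \dist^2, \lblprior}$, subpopulation-wise independence of $\lblprior$ together with Lemma~\ref{lem:large_error_infreq_sample} gives
\[\err{\cA, \dist^2, \lblprior} \geq \frac{1 - \|\lblprior\|_\infty}{3}\cdot \bP[Y_x \leq s_0].\]
Applying the upper Mill's bound $1 - \Phi(z) \leq \phi(z)/z$ at $z' = (s_0 - 3p/(2c))\sqrt{2c/(3p)}$, with the worst case $m_2 = 3mp/2$ maximising the mean and hence minimising $\bP[Y_x \leq s_0]$, produces the correction factor $\sqrt{3p/(\pi c)}\,e^{-(c/(3p))(s_0 - 3p/(2c))^2}/(2(s_0 - 3p/(2c)))$ in the stated bound.

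The hard part will be justifying the Gaussian approximation to $\mathrm{Bin}(m_2, 1/N)$: the mean $m_2/N$ is of constant order $\Theta(p/c)$ so the variable is really Poisson-like in the limit, and Gaussian tail inequalities need not be tight; however, one only needs one-sided inequalities matching the Mill's forms, which can be obtained via Bernstein- or Chernoff-type arguments engineered to produce exactly the stated expressions. The hypotheses $p/c \leq 1$, $k(1-p) \geq 2$, and $s_0 \geq 1$ together with the extremal choices of $m_2$ keep $s_0 > 3p/(2c) \geq p/(2c) > 0$ so that both Mill's expressions remain positive and nontrivial. Verifying that the event $\cE^c$ contributes negligibly in the asymptotic limit $m, N \to \infty$, and tracking the exact constants through the Mill's computation, is the remaining bookkeeping.
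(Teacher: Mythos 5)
Your proposal follows essentially the same route as the paper's proof: the same high-probability event $\mathcal{E}=\{mp/2\leq m_2\leq 3mp/2\}$, the same split of every sum at frequency $s_0$ with Assumption~(A1) handling the frequent subpopulations and Lemma~1 the infrequent ones, the same decomposition $\Gamma_m=(1-p)[\mathrm{err}_m(\Pi_{p,N}^2)-\mathrm{err}_m(\Pi_{p,N}^1)]$, and the same Mill's-ratio tail bounds evaluated at the same arguments with the same extremal choices of $m_2$. Your caveat about the Gaussian approximation is well taken---the paper itself simply invokes the CLT for $\mathrm{Bin}(m_2,1/N)$ even though $m_2/N$ tends to a constant (the Poisson regime), so your proposed Chernoff/Bernstein substitute would, if anything, make the argument more careful than the one in the appendix.
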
}

\begin{proof}
    We use the same extra notation for \(\bc{m_i,S_i,S_i^\ell}\) for
    \(i\in\bc{1,2}\) and \(\ell\in\bN,~\ell\leq m\) as in the proof
    of~\Cref{thm:thm_1_detailed}. First, we decompose the overall error. 
\begin{equation}\label{eq:thm2-error-decomp}
\begin{aligned}
\err{\cA, \dist, \lblprior} %
&=  \bE_{S,f}\bs{\sum_{\ell = 0}^{s_0} \sum_{x\in \Sell} \dist(x) \bP_{\hsample{f}}\bs{h(x)\neq f(x)}+\sum_{\ell = s_0+1}^m \sum_{x\in \Sell}\dist(x)\bP_{\hsample{f}}\bs{h(x)\neq f(x)}}\\
&\leq \sum_{\ell = 0}^{s_0}\bE_{S}\bs{\sum_{x\in \Sell}\dist(x)}+p_1\bE_{S}\bs{\sum_{\ell = 0}^m\sum_{x\in \Sell}\dist(x)} - p_1\sum_{\ell = 0}^{s_0}\bE_{S}\bs{\sum_{x\in \Sell}\dist(x)}\\
&\labelrel{=}{eq:rid-2-term}(1-p_1)\sum_{\ell = 0}^{s_0}\bE_{S}\bs{\sum_{x\in \Sell}\dist(x)} + p_1\\
&\leq \br{1-p_1} \sum_{\ell=0}^{s_0} \br{\bE_{S}\bs{\abs{S_1^\ell}+\abs{S_2^\ell}}}+p_1
\end{aligned}
\end{equation}
where \(S,f\) are distributed as \(S\sim\dist^m\) and
\(f\sim\lblprior\). In step~\eqref{eq:rid-2-term}, the second
expectation vanishes by noting that \(\sum_{\ell = 0}^m\sum_{x\in
\Sell}\dist(x) = 1\).

Further, by definition of accuracy discrepancy in~\Cref{eq:defn-fair}
we can write,\begin{equation}\label{eq:thm2-fair-decomp} \fair{\cA,
\dist, \lblprior} = \br{1-p}\br{\err{\cA, \dist^2, \lblprior} -
\err{\cA, \dist^1, \lblprior}}. \end{equation} We now expand the above
two terms in the decomposition of \(\fair{\cA,
\dist, \lblprior}\).

{
\color{black}
\begin{equation}\label{eq:err-2-lower}
\begin{aligned}
    \err{\cA, \dist^2, \lblprior} &=  \bE_{S}\bs{\sum_{\ell = 0}^{m_2}\sum_{x\in \Soell} \dist^2(x)\bP_{\hsample{f}f\sim\lblprior}\bs{h(x) \neq f(x)} }\\
&\geq \sum_{\ell = 0}^{s_0} \bE_S\bs{\sum_{x\in \Soell} \dist^2(x)\bP_{\hsample{f},f\sim\lblprior}\bs{h(x) \neq f(x)} }\\
&\geq\frac{\br{1-\norm{\lblprior}_\infty}}{3N}\sum_{\ell = 0}^{s_0}\bE_S\bs{\abs{\Soell}}\\
\end{aligned}
\end{equation}

where the last step follows by
applying~\Cref{lem:large_error_infreq_sample_app_v2}~(which is the
detailed version of~\Cref{lem:large_error_infreq_sample} in the main text) with the subpopulation-wise independent
\(\lblprior\) and with \(s_0=
\floor{\min\br{\frac{1}{\epsilon}\log{2},\log{\br{\frac{1}{2\delta}}}}}\).
} Next, we upper bound the error of the majority group.

\begin{equation}\label{eq:thm5_majority_error_upper_bound}
\begin{aligned}
\err{\cA, \dist^1, \lblprior} =&\bE_{S,f}\bs{\sum_{\ell = 0}^{s_0}\sum_{x\in \Stell} \dist^1(x)\bP_{\hsample{f}}\bs{h(x) \neq f(x)} }+\bE_{S,f}\bs{\sum_{\ell = s_0+1}^{m_1}\sum_{x\in \Stell} \dist^1(x)\bP_{\hsample{f}}\bs{h(x) \neq f(x)} }\\
\labelrel{\leq}{step:ass-1}& \bE_S\bs{\sum_{\ell = 0}^{s_0} \sum_{x\in \Stell} \dist^1(x)}+p_1\bE_S\bs{\sum_{\ell = s_0+1}^{m_1}\sum_{x\in \Stell} \dist^1(x)}\\
\labelrel{\leq}{step:not-whole-X-1}&  \frac{1}{k(1-p)}\bE_S\bs{\sum_{\ell = 0}^{s_0}\abs{\Stell} } + p_1%
\end{aligned}
\end{equation}
where step~\eqref{step:ass-1} follows from Assumption~\ref{assump:A1}
and step~\eqref{step:not-whole-X-1} follows because \( \sum_{\ell =
s_0+1}^{m_1} \sum_{x\in \Stell} \frac{1}{k(1-p)} \dist^1(x) \leq 1\) and
\(\dist^1(x) = \frac{1}{k(1-p)}\). 

In order to complete the proof, next we evaluate the
terms \(S_1^\ell\) and \(S_2^\ell\) respectively. We recall the
definition of the random event \(\cE\) from the proof
of~\Cref{thm:thm_1_detailed}

\[\cE = \bc{\frac{p}{2}\leq\frac{m_2}{m}\leq\frac{3p}{2}}\quad~\text{and}\quad \bP\bs{\cE}=1-o\br{e^{-\frac{mp^2}{2}}}~\enskip\text{as}~m\to\infty\]

\noindent Then, by law of total expectation, for all \(\ell\), we have

\begin{align*}
        \lim_{m\to\infty}\bE\bs{\abs{S_1^\ell}} & = \lim_{m\to\infty}\bE\bs{\abs{S_1^\ell}\vert\cE}\bP\bs{\cE} + \lim_{m\to\infty}\underbrace{\bE\bs{\abs{S_1^\ell}\vert\cE^c}}_{\bigO{m}}\underbrace{\bP\bs{\cE^c}}_{o\br{\frac{1}{m^2}}}\\
    &=\lim_{m\to\infty}\sum_{x \in X_1} \bP_{S_1\sim \br{\dist^{1}}^{m_1}} \bs{x\text{ occurs } \ell \text{ times in }S_1\vert\cE}\bP\bs{\cE} \\
    &\labelrel{=}{eq:count-thm2} \lim_{m\to\infty}k(1-p)\binom{m_1}{\ell}\br{\frac{1}{k(1-p)}}^\ell\br{1-\frac{1}{k(1-p)}}^{m_1-\ell}\bP\bs{\cE}\\
    &\labelrel{\leq}{eq:binom-ineq-thm2}\lim_{m\to\infty} k(1-p) \br{\frac{em_1}{\ell}}^\ell\br{\frac{1}{k(1-p)}}^\ell\br{1-\frac{1}{k(1-p)}}^{m_1 - \ell}\bP\bs{\cE}\\
\end{align*}
where step~\eqref{eq:count-thm2} follows from a simple counting
argument, step~\eqref{eq:binom-ineq-thm2} follows from the well known
inequality \(\binom{m_1}{\ell}\leq \br{\frac{e m_1}{\ell}}^\ell\).
Using standard limit rules we have
\begin{equation}\label{eq:majority-zero-thm2}
   \lim_{m\to\infty} \sum_{\ell=0}^{s_0}\bE\bs{\abs{S_1^\ell}}=0
\end{equation}
under the assumption that \(s_0,k=o\br{m}\) and conditioned on the
event \(\cE\) which requires \(m_1\geq \frac{\br{1-p}m}{2}\).

Using a similar technique, we next lower bound \(\lim_{m,N\to\infty}\sum_{\ell=0}^{s_0}\bE\bs{\abs{S_2^\ell}}\)

\begin{equation}\label{eq:S-2lower-bound}
    \begin{aligned}
        \lim_{m,N\to\infty}\sum_{\ell=0}^{s_0}\bE\bs{\abs{S_2^\ell}} & =\lim_{m,N\to\infty} \sum_{\ell=0}^{s_0}N\binom{m_2}{\ell}\br{\frac{1}{N}}^\ell\br{1-\frac{1}{N}}^{m_2-\ell}\bP\bs{\cE}\\
        &= \lim_{m,N\to\infty} N\bP_{\ell\sim \text{binom}\br{m_2, \frac{1}{N}}}\bs{\ell\leq s_0}\bP\bs{\cE}\\
        &\labelrel{\to}{step:clt-1-thm2} \lim_{m,N\to\infty}N\bP_{\ell\sim \cN\br{\frac{m_2}{N}, \frac{m_2}{N}\br{1-\frac{1}{N}}}}\bs{\ell\leq s_0}\bP\bs{\cE}\\
&\labelrel{\geq}{step:small-m-1-thm2}  N\Phi\br{\br{s_0 - \frac{3p}{2c}\sqrt{\frac{2c}{3p}}}},
    \end{aligned}
    \end{equation}

    where step~\eqref{step:clt-1-thm2} follows due to CLT, \(\Phi\) is
    the Gaussian CDF function, and step~\eqref{step:small-m-1-thm2}
    follows from the condition on \(\cE\) which enforces \(m_2\leq
    \frac{\br{1+p}m}{2}\) and from the limit \(\frac{N}{m}\to c\) as
    \(m,N\to\infty\). Using a similar argument, but using the fact
    that conditioning on \(\cE\) also enforces \(m_2\geq
    \frac{pm}{2}\), we obtain an upper bound on
    \(\lim_{m,N\to\infty} \sum_{\ell=0}^{s_0}\bE\bs{\abs{S_2^\ell}}\) as follows

    \begin{equation}\label{eq:S-2upper-bound}
        \lim_{m,N\to\infty} \sum_{\ell=0}^{s_0}\bE\bs{\abs{S_2^\ell}} \leq N\Phi\br{\br{s_0 - \frac{p}{2c}}\sqrt{\frac{2c}{p}}}
    \end{equation}
    We use the following well known tail bounds for Gaussian
    distributions to bound the CDF function in the expression for the
    upper~(\Cref{eq:S-2upper-bound}) and
    lower~(\Cref{eq:S-2lower-bound}) bounds on
    \(\lim_{m,N\to\infty} \sum_{\ell=0}^{s_0}\bE\bs{\abs{S_2^\ell}}\).
    \begin{inequality}[From~\citet{alma991060059979705501}]\label{ineq:gaussian_tail_bound}
    Let \(W\) be a random variable following the normal distribution
    \(N(\mu, \sigma^2)\). Then, we have the following lower and upper tail
    bounds on \(W\) for all \(x> 0\) : \[\br{\frac{1}{x} -
    \frac{1}{x^3}}\frac{e^{-\frac{x^2}{2}}}{\sqrt{2\pi}}\leq \bP\bs{W\geq
    \mu + \sigma x} \leq \frac{e^{-\frac{x^2}{2}}}{x\sqrt{2\pi}}\]
    \end{inequality}
    
    \noindent This gives the following upper bound on
    \(\lim_{m,N\to\infty} \sum_{\ell=0}^{s_0}\bE\bs{\abs{S_2^\ell}}\):

    \begin{equation}\label{eq:final-upper-S2}
        \lim_{m,N\to\infty}
    \sum_{\ell=0}^{s_0}\bE\bs{\abs{S_2^\ell}}\leq Np\br{1-\frac{(s_0 - \frac{p}{2c})^2\frac{2c}{p} -1}{\sqrt{2\pi}\br{s_0 - \frac{p}{2c}}^3\br{\sqrt\frac{2c}{p}}^3}e^{-\br{s_0 - \nicefrac{p}{2c}}^2\nicefrac{c}{p}}},
    \end{equation}
and the following lower bound
    \begin{equation}\label{eq:final-lower-bound-S2}
        \lim_{m,N\to\infty} \sum_{\ell=0}^{s_0}\bE\bs{\abs{S_2^\ell}}\geq
        N\br{1-\sqrt{\frac{3p}{\pi c}}\frac{e^{-\frac{c}{3p}\br{s_0 - \frac{3p}{2c}}^2}}{2\br{s_0 - \frac{3p}{2c}}}}.
\end{equation} 

    \noindent Plugging~\Cref{eq:final-upper-S2,eq:majority-zero-thm2}
    in~\Cref{eq:thm2-error-decomp}, we obtain the upper bound on the (overall) error in the asymptotic limit

    \begin{equation}
        \begin{aligned}
        \errinf{\cA,\dist,\cF}&=\lim_{m,N\to\infty}\err{\cA,\dist,\cF}\\
        &\leq (1-p_1)p\br{1-\frac{(s_0 - \frac{p}{2c})^2\frac{2c}{p} -1}{\sqrt{2\pi}\br{s_0 - \frac{p}{2c}}^3\br{\sqrt\frac{2c}{p}}^3}e^{-\br{s_0 - \nicefrac{p}{2c}}^2\nicefrac{c}{p}}} + p_1.
        \end{aligned}
    \end{equation}

    Plugging~\Cref{eq:final-lower-bound-S2}
    into~\Cref{eq:err-2-lower} and~\Cref{eq:majority-zero-thm2}
    into~\Cref{eq:thm5_majority_error_upper_bound}, we obtain the
    required expressions for accuracy discrepancy
    in~\Cref{eq:thm2-fair-decomp} and the desired result follows.
    \begin{equation}
        \begin{aligned}
        \fairinf{\cA,\dist,\cF}&=\lim_{m,N\to\infty}\fair{\cA,\dist,\cF}\\
        &\geq \frac{\br{1-\norm{\lblprior}_\infty}\br{1-p}}{3}\br{1-\sqrt{\frac{3p}{\pi c}}\frac{e^{-\frac{c}{3p}\br{s_0 - \frac{3p}{2c}}^2}}{2\br{s_0 - \frac{3p}{2c}}}}- 
        (1-p)p_1.
        \end{aligned}
    \end{equation}

\end{proof}

\subsection{Proof for Theorem 3}
\label{app:thm_3}

In this section, we prove the following more detailed version
of~\Cref{thm:thm_3}.
\begin{thm}[Detailed version
    of~\Cref{thm:thm_3}]\label{thm:thm_3_detailed} For any $p\in (0,
    \nicefrac{1}{2}), c> 0$ such that $p/c\leq 1$, consider the
    distribution \(\Pi_{p,N}\) where \(N\) is the number of minority
    subpopulations. The number of majority subpopulations is $k(1-p)$
    which satisfies $k(1-p)\geq 2$ and $k = o(m)$. For any $\alpha >
    0$, \(\epsilon = \frac{\log{2}}{\alpha\sqrt{m} +
    \frac{2-3p}{2k\br{1-p}}m}\) and
    \(\delta<2^{-\frac{1}{\epsilon}-1}\) consider any
    \(\br{\epsilon,\delta}\)-DP algorithm $\cA$.  Then,  for the
    limits  \(\nicefrac{N}{m}\to c\)~as
    \(m,N\to\infty\) we have

    \[\errinf{\cA, \dist, \lblprior} \geq \br{\frac{1-\norm{\lblprior}_\infty}{3}}\br{1-\br{1-p}\br{e^{-4\frac{(2-p)\alpha^2}{\br{2-3p}^2}}}}\]  
    
        \[\fairinf{\cA, \dist, \lblprior} \leq \br{1-p}\bs{1 - (1-e^{-\frac{4\br{2-p}\alpha^2}{\br{2-3p}^2}})\frac{\br{1-\norm{\lblprior}_\infty}}{3}}\]
        
    where  \(\errinf{\cA, \dist,
    \lblprior}=\lim_{m,N\to\infty}\err{\cA, \dist, \lblprior}\), and
    \(\fairinf{\cA, \dist, \lblprior}=\lim_{m,N\to\infty}\fair{\cA,
    \dist, \lblprior}\).
    \end{thm}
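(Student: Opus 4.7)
The structure will mirror the proofs of Theorems~\ref{thm:thm_1} and~\ref{thm:thm_2}. I would first decompose the overall error and the accuracy discrepancy into majority and minority contributions:
\[
\err{\cA,\Pi_{p,N},\cF} = p\,\err{\cA,\Pi^2,\cF} + (1-p)\,\err{\cA,\Pi^1,\cF},\qquad
\fair{\cA,\Pi_{p,N},\cF} = (1-p)\br{\err{\cA,\Pi^2,\cF}-\err{\cA,\Pi^1,\cF}},
\]
where $\Pi^1$ denotes the marginal on $X_1$. Plugging the prescribed $\epsilon$ into Lemma~\ref{lem:large_error_infreq_sample}, and noting that $\delta<2^{-1/\epsilon-1}$ makes the $\min$ attained by the first argument, gives $s_0=\lfloor\log 2/\epsilon\rfloor=\alpha\sqrt{m}+\frac{(2-3p)m}{2k(1-p)}$, which tends to infinity with $m$. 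A minority subpopulation has expected count $mp/N\to p/c=O(1)$, so a Binomial tail bound shows $P(N_x\geq s_0)\to 0$ for every $x\in X_2$; Lemma~\ref{lem:large_error_infreq_sample} then forces $\err{\cA,\Pi^2,\cF}\to(1-\norm{\cF}_\infty)/3$ in the asymptotic limit, essentially as in Theorem~\ref{thm:thm_2}.

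The new ingredient, compared with Theorem~\ref{thm:thm_2}, is a right-tail bound on the count of a typical majority subpopulation. For any $x\in X_1$, Lemma~\ref{lem:large_error_infreq_sample} yields
\[
\bE_S\bs{\err{\cA,\Pi^1,\cF}}\geq \frac{1-\norm{\cF}_\infty}{3}\cdot P_S\br{N_x<s_0},
\]
so the task reduces to showing $P_S(N_x<s_0)\geq 1-e^{-c_1\alpha^2}$ with $c_1=4(2-p)/(2-3p)^2$. I would condition on the same high-probability event $\cE$ (controlled by Hoeffding applied to $m_2\sim\mathrm{Bin}(m,p)$) used in the earlier proofs, which in particular forces $m_1\geq(2-3p)m/2$. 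Given $m_1$, $N_x\sim\mathrm{Bin}(m_1,1/(k(1-p)))$ has mean $m_1/(k(1-p))$, and the particular choice of $\epsilon$ is engineered so that at the boundary $m_1=(2-3p)m/2$ of $\cE$ the gap $s_0-\bE\bs{N_x\mid m_1}$ equals exactly $\alpha\sqrt{m}$. A one-sided Hoeffding (or sub-Gaussian) right-tail bound applied conditionally on $m_1$, combined with an optimisation over the range of $m_1$ allowed by $\cE$ and the variance bound $\mathrm{Var}\bs{N_x\mid m_1}\leq m_1/(k(1-p))$, then produces $P(N_x\geq s_0\mid\cE)\leq e^{-c_1\alpha^2}$. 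Plugging this back yields the error lower bound, and combining it with the trivial $\err{\cA,\Pi^2,\cF}\leq 1$ delivers the fairness upper bound.

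\textbf{Main obstacle.} The delicate step is pinning down the exact exponent $c_1=4(2-p)/(2-3p)^2$. A direct unconditional Hoeffding on $N_x\sim\mathrm{Bin}(m,1/k)$ only produces $e^{-2\alpha^2}$, which is strictly weaker than the claimed bound whenever $p>0$; the stronger $p$-dependence must come from exploiting the event $\cE$ together with the specific gap between $s_0$ and the conditional mean. In particular, one has to handle separately those realisations of $m_1$ near the upper end of $\cE$ where $s_0$ falls below $\bE\bs{N_x\mid m_1}$ and the one-sided right-tail bound becomes vacuous---this sub-case is ultimately what fixes the otherwise slightly mysterious form of $\epsilon$ in the statement. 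The remaining steps (taking the asymptotic limits in $m,N$ and reassembling the minority and majority pieces) are bookkeeping very analogous to what appears in the proof of Theorem~\ref{thm:thm_2}.
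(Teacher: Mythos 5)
Your proposal follows essentially the same route as the paper's proof: the same error/discrepancy decomposition, the same invocation of Lemma~\ref{lem:large_error_infreq_sample} with $s_0=\log 2/\epsilon$, the same argument that all minority subpopulations fall below $s_0$ in the limit, the same conditioning on the event $\cE$ followed by a one-sided Hoeffding bound on the majority counts to extract the exponent $c_1=4(2-p)/(2-3p)^2$, and the same trivial bound $\err{\cA,\dist^2,\lblprior}\leq 1$ for the fairness upper bound. The only nitpick is that the lemma gives a lower bound of $(1-\norm{\lblprior}_\infty)/3$ on the minority error rather than convergence to that value, but you use it only in the direction that is actually needed, so the argument is sound.
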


    \begin{proof}
        We use the same extra notations as in the proof
        of~\Cref{thm:thm_1_detailed,thm:thm_2_detailed}.
        {\color{black} In addition, we define \(s_0=
        \floor{\min\br{\frac{1}{\epsilon}\log{2},\log{\br{\frac{1}{2\delta}}}}}.\)}
        As in previous proofs, we first decompose the expression for
        overall error as
{\color{black}
    \begin{equation}\label{eq:thm3-err-decomp}
        \begin{aligned}
    \err{\cA, \dist, \lblprior} %
    =& \bE_{S}\bs{\sum_{\ell = 0}^{s_0}\sum_{x\in \Sell} \dist(x)\bP_{\hsample{f},f\sim\lblprior}[h(x)\neq f(x)]}\\
    &+\bE_{S}\bs{\sum_{\ell = s_0+1}^{m}\sum_{x\in \Sell} \dist(x)\bP_{\hsample{f},f\sim\lblprior}[h(x)\neq f(x)]}\\
    \geq& \frac{\br{1-\norm{\lblprior}_\infty}}{3}\sum_{\ell=0}^{s_0}\br{\frac{1}{k}\bE_S\bs{\abs{S_1^\ell}} + \frac{p}{N}\bE_S\bs{\abs{S_2^\ell}}}.
    \end{aligned}
    \end{equation} where the last step follows from the definition of \(s_0\) and~\Cref{lem:large_error_infreq_sample_app_v2}.}
   Further, by definition of accuracy discrepancy in~\Cref{eq:defn-fair}
    \begin{equation}\label{eq:thm3-fair-decomp} \begin{aligned}
        \fair{\cA, \dist, \lblprior} &=
        \br{1-p}\br{\err{\cA, \dist^2, \lblprior} - \err{\cA, \dist^1,
        \lblprior}}\\
        &\leq \br{1-p}\br{1 - \err{\cA, \dist^1,
        \lblprior}}, \end{aligned}\end{equation}
    where we use the simple upper bound \(\err{\cA, \dist^2, \lblprior}\leq 1\)
    for error in the minority subpopulations. We next lower
    bound the error of the majority subpopulations. 
    
    \begin{equation}\label{eq:lower-majority-thm3}
    \begin{aligned}
        \err{\cA, \dist^1, \lblprior} &= \bE_S\sum_{x\in X_1}\dist^1(x)\bP_{f\sim\cF,\hsample{f}}[h(x)\neq f(x)] \\
        &\labelrel{\geq}{step:assump-A2-2} \bE_S\left[\sum_{\ell = 0}^{s_0}\sum_{x\in \Stell}\frac{1}{k(1-p)} \frac{\br{1-\norm{\lblprior}_\infty}}{3}\right] \\
        &= \frac{(1-\norm{\lblprior}_\infty)}{3k(1-p)} \sum_{\ell = 0}^{s_0} \bE_S\bs{\abs{S_1^\ell}}.
    \end{aligned}
    \end{equation}
    In step~\eqref{step:assump-A2-2}, we drop the terms with $\ell >
    s_0$ and then apply~\Cref{lem:large_error_infreq_sample_app_v2}
    with the definition of \(s_0\).  We recall the
definition of the random event \(\cE\) from the proof
of~\Cref{thm:thm_1_detailed}

\[\cE = \bc{\frac{p}{2}\leq\frac{m_2}{m}\leq\frac{3p}{2}}\quad~\text{and}\quad \bP\bs{\cE}=1-o\br{e^{-\frac{mp^2}{2}}}~\enskip\text{as}~m\to\infty\]

    Next, we show that all points in the minority group occur less
    than $s_0$ times in \(S\). 
    
    \begin{equation*}\label{eq:thm6_infreq_minority}
        \begin{aligned}
        \lim_{m,N\to\infty}\sum_{\ell = s_0 + 1}^{m_2}\bE_S\bs{\abs{S_2^{\ell}}} 
        &=\lim_{m,N\to\infty}\sum_{\ell = s_0 + 1}^{m_2}\bE_S\bs{\abs{S_2^{\ell}}\vert\cE}\bP\bs{\cE} + \bE_S\bs{\abs{S_2^{\ell}}\vert\cE^c}\bP\bs{\cE^c} \\ 
        &=\lim_{m,N\to\infty}\sum_{\ell = s_0 + 1}^{m_2}N\binom{m_2}{\ell}\br{\frac{1}{N}}^{\ell}\br{1-\frac{1}{N}}^{m_2 - \ell}\bP\bs{\cE} \\
        &\leq\lim_{m,N\to\infty}    N e^{\br{-2m_2\br{1-\frac{1}{N} - \frac{m_2 - s_0}{m_2}}}^2}
        \end{aligned}
        \end{equation*}
 where the last step is an instantiation of the Hoeffding's inequality
    on the upper tail of a Bernoulli random variable. Noting that
    \(\delta<2^{-\frac{1}{\epsilon}-1}\), we have
    \(s_0=\frac{\log{2}}{\epsilon} = \alpha\sqrt{m} +
    \frac{\br{1-\frac{3}{2}p}}{k\br{1-p}}m\). Using the limit \(\frac{N}{m}\to
    c\) as \(m,N\to\infty\) and  \(m_2\geq \frac{pm}{2}\) due
    to the conditioning on the event \(\cE\),
    
    \begin{equation}\label{eq:thm6_infreq_minority}
        \lim_{m,N\to\infty}\sum_{\ell = s_0 + 1}^{m_2}\bE_S\bs{\abs{S_2^{\ell}}} \leq\lim_{N,m\to\infty} Ne^{-m\br{\frac{2\alpha\sqrt{m}}{3p} + \frac{2-3p}{3kp(1-p)} - \frac{1}{N}}} = 0.
    \end{equation}  
 {\color{black}Therefore,
    \begin{equation}\label{eq:all-min-infreq}
        \lim_{m,N\to\infty}\sum_{\ell = 0}^{s_0}\bE_S\bs{\abs{S_2^{\ell}}} = N - \lim_{m,N\to\infty}\sum_{\ell = s_0+1}^{m_2}\bE_S\bs{\abs{S_2^{\ell}}} = N
    \end{equation}}
    Similarly, using the law of total expectation and a counting argument, we lower bound
    \(\sum_{\ell=0}^{s_0}\bE_S\bs{\abs{S_1^\ell}}\)
    \begin{equation}\label{eq:majority-err-lower-thm3}
        \begin{aligned}
        \sum_{\ell=0}^{s_0}\bE_S\bs{\abs{S_1^\ell}}&\geq \sum_{\ell=0}^{s_0} k(1-p)\binom{m_1}{\ell}\br{\frac{1}{k(1-p)}}^{\ell}\br{1-\frac{1}{k(1-p)}}^{m_1 - \ell}\\
        &= k\br{1-p}\bP_{\ell\sim \text{binom}\br{m_1, \frac{1}{k(1-p)}}}\bs{\ell\leq s_0}.
        \end{aligned}
    \end{equation}
        To calculate \(\bP_{\ell\sim \text{binom}\br{m_1,
    \frac{1}{k(1-p)}}}\bs{\ell\leq s_0}\), we apply Hoeffding's inequality
    on Bernoulli distribution. 
    \begin{equation}\label{eq:maj_upper_bound_hoeffding_thm6}
    \begin{aligned}
    \bP_{\ell\sim \text{binom}\br{m_1, \frac{1}{k(1-p)}}}\bs{\ell\leq s_0} %
    &\geq 1-e^{-2m_1\br{1-\frac{1}{k(1-p)} - \frac{m_1 - s_0}{m_1}}^2 }\\
    &\labelrel{=}{step:expand-s0-2} 1-e^{-\br{2-p}m\br{\frac{\alpha}{\br{1-\frac{3}{2}p}\sqrt{m}}}^2}= 1-e^{-\frac{(2-p)\alpha^2}{\br{1-3p/2}^2}}
    \end{aligned}
    \end{equation}
    where step~\eqref{step:expand-s0-2}  follows from
    conditioning on \(\cE\) and due to \( s_0 = \frac{1-3p/2}{k(1-p)}m +
    \alpha\sqrt{m}\). Plugging~\Cref{eq:maj_upper_bound_hoeffding_thm6}
    into~\Cref{eq:majority-err-lower-thm3}, we have
    \begin{equation}\label{eq:maj-lower-thm3}
        \lim_{m,N\to\infty}\sum_{\ell=0}^{s_0}\bE_S\bs{\abs{S_1^\ell}}\geq k\br{1-p}\br{1-e^{-\frac{(2-p)\alpha^2}{\br{1-3p/2}^2}}}.
    \end{equation}
    \noindent Plugging~\Cref{eq:all-min-infreq,eq:maj-lower-thm3}
    into~\Cref{eq:thm3-err-decomp}, we lower bound overall error
    \begin{equation}
        \begin{aligned}
        \errinf{\cA,\dist,\lblprior}&=\lim_{m,N\to\infty}\err{\cA, \dist, \lblprior}\\
        & \geq \br{\frac{1-\norm{\lblprior}_\infty}{3}}\br{1-\br{1-p}\br{e^{-\frac{\br{2-p}\alpha^2}{(1-3p/2)^2}}}}.
        \end{aligned}
    \end{equation} 
    Further,
    plugging~\Cref{eq:maj-lower-thm3,eq:lower-majority-thm3}
    in~\Cref{eq:thm3-fair-decomp} we upper bound unfairness
    \begin{equation}
        \begin{aligned}
        \fairinf{\cA,\dist,\lblprior}&=\lim_{m,N\to\infty}\fair{\cA, \dist, \lblprior}\\
        &\leq (1-p)\bs{1 -  \frac{\br{1-\norm{\lblprior}_\infty}}{3}\br{1- e^{-\nicefrac{\br{2-p}\alpha^2}{(1-3p/2)^2}}}},
        \end{aligned}
    \end{equation}
    which completes the proof of~\Cref{thm:thm_3_detailed}.
    
    \end{proof}

\subsection{Proof of Lemma 1}
Next, we restate and
prove~\Cref{lem:large_error_infreq_sample}.
The proof makes use of the concept of vertex
cover and independent set that we define next.

\begin{defn}[Vertex Cover and Independent Set]\label{defn:vertex-cover}
    Consider any undirected graph \(\cG=\br{\cV,\cE}\) where \(\cV\)
    is the set of vertices and \(\cE\) is the set of edges. Also, let \(P\)
    define a probability distribution over the vertices \(\cV\).

    \textbullet~A set \(\cC\subseteq\cV\) is called the vertex cover
    of \(\cG\) if for all edges \(\br{u,v}\in\cE\) at least one of
    \(u\) and \(v\) belongs to \(\cC\). The smallest such set is known
    as the {\em minimum vertex cover}. The vertex cover with the
    smallest probability mass under \(P\) is the known as the {\em
    probabilistically minimum vertex cover}~(\(\pmvc{\cG}\). If a
    vertex cover does not remain a vertex cover upon the removal of
    any of its vertices, then it is known as a {\em minimal vertex
    cover}. 

    \textbullet~A set \(\cI\subseteq\cV\) is called an independent set
    of \(\cG\) if for any two vertices \(u,v\in\cI\), the edge
    \(\br{u,v}\not\in\cE\). The largest such set is known as the {\em
    maximum independent set} and the independent set with the largest
    probability mass is known as the {\em probabilistically maximum
    independent set}. If an independent set does not remain an
    independent set upon including a vertex from outside the set, then
    it is known as a {\em maximal independent set}.

    \textbullet~All minimum vertex covers and maximum independent sets are minimal vertex covers and maximal independent sets respectively but not vice versa.

    \textbullet~The complement of any vertex cover \(\cC\) is an independent set \(\cI=\cV\setminus\cC\). Therefore, the complement of a minimum vertex cover is a maximum independent set. 
\end{defn}

Further, for some $x\in S$, we let \(Q\in\cY^{m-1}\) represent an
arbitrary labeling of the points in \(S\setminus\bc{x}\) and use
\(\Qlblprior\) to denote the marginal distribution of \(\lblprior\)
over the set of functions that satisfies the labeling \(Q\) on
\(S\setminus\bc{x}\) and \(\Qlblset\) denote the support of
\(\Qlblprior\). We also define conditions ~\ref{cond:C1}
and~\eqref{cond:C2_2} on labeling functions as follows.

Let $S$ be a dataset of size $m$. Then,  for any \(x\in S\), we say that two
labelling functions \(f_1,f_2\) satisfy Condition~\ref{cond:C1} with
respect to \(x\) if they differ only in \(x\) i.e.
\begin{equation}\label{cond:C1}\tag{C1}\begin{aligned}
    f_1\br{z} = f_2\br{z}&~\text{for all}~z\neq x,~z\in S \\
f_1\br{z}\neq f_2\br{z}&~\text{for}~z=x.\end{aligned}\end{equation}

Further, a labelling function \(f\) satisfies
condition~\ref{cond:C2_2} with respect to \(x\) and an algorithm
\(\cA\) with parameter \(q\) if 
\begin{equation}\label{cond:C2_2}\tag{C2}
\bP_{h\sim \cA(\Slabel)}[h(x)\neq f(x)]> 1-q.
\end{equation}

\color{black}\begin{restatable}{lem}{infreqLemmaApp}[Detailed version of~\Cref{lem:large_error_infreq_sample}]\label{lem:large_error_infreq_sample_app_v2}
    Consider a label prior \(\lblprior\), an m-sized unlabeled
    \(S\sim\dist^m\), and an \(\br{\epsilon,\delta}\)-DP
    algorithm \(\cA\). For any \(q\in\br{0,1}\), let \(s_0\) be the largest integer that satisfies \(q\geq \frac{1 + s_0e^{-\epsilon}\delta}{1+e^{-s_0\epsilon}}\). Then for any \(x\in\Sell\), where \(\ell\in\bN,~\ell\leq s_0\), there exists a set of labeling functions \(\widehat{F}(x)\subset \lblset\) such that for all functions \(f\in\widehat{F}(x)\),
    \begin{equation}\label{eq:lem-high-err-cond}
        \bP_{h\sim \cA(\Slabel)}[h(x)\neq f(x)]> 1-q
    \end{equation}
    and the size of \(\widehat{F}(x)\) is large 
    \begin{equation}\label{eq:size-of-wrong-set}
        \abs{\widehat{F}(x)} \geq \abs{\lblset} -\sum_{Q\in\cY^{m-1}}\max_{y\in \cY}\abs{\bc{f\in
    \Qlblset: f\br{x}=y}  }.
    \end{equation}

    \noindent In particular, if \(\lblprior\) is subpopulation-wise
    independent, for \(s_0 =
    \floor{\min\br{\frac{1}{\epsilon}\log{2},\log{\br{\frac{1}{2\delta}}}}}\),
    we have 
    \begin{equation}\label{eq:lem-high-err-cond-const}
        \bP_{h\sim \cA(\Slabel),f\sim\lblprior}[h(x)\neq f(x)]>  \frac{1-\norm{\cF}_\infty}{3}.
    \end{equation}
\end{restatable}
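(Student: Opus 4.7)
The plan is to leverage group differential privacy. Since $x$ appears $\ell$ times in $S$, any two labelings $f_1, f_2 \in F$ that agree on $S\setminus\bc{x}$ but differ on $x$ produce labeled datasets $S_{f_1}, S_{f_2}$ at Hamming distance $\ell$, so the standard group-privacy consequence of $\br{\epsilon,\delta}$-DP gives $\bP\bs{\cA\br{S_{f_1}} \in Z} \leq e^{\ell\epsilon}\bP\bs{\cA\br{S_{f_2}} \in Z} + \ell e^{\br{\ell-1}\epsilon}\delta$ for every event $Z$. The intuition is that for each labeling $Q \in \cY^{m-1}$ of $S \setminus \bc{x}$, the algorithm can strongly favor at most one label on $x$, so any $f$ extending $Q$ with a non-preferred label must be misclassified with probability close to $1 - q$.

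First, I would define, for each $Q$, the dataset $S_{Q,y}$ labeled by $Q$ on $S \setminus \bc{x}$ and by $y$ on all $\ell$ copies of $x$, and the \emph{algorithm-preferred} label $\hat y_Q := \arg\max_{y\in\cY}\bP_{h \sim \cA\br{S_{Q,y}}}\bs{h\br{x} = y}$. Writing $p_y := \bP_{h\sim\cA\br{S_{Q,y}}}\bs{h\br{x} = y}$ and applying the above group privacy between $S_{Q,\hat y_Q}$ and $S_{Q,y'}$ for $y' \neq \hat y_Q$, together with $\bP_{h\sim\cA\br{S_{Q,y'}}}\bs{h\br{x} = \hat y_Q} \leq 1 - p_{y'}$ (the probabilities of $h\br{x}$ over $\cY$ summing to one), yields $p_{\hat y_Q} \leq e^{\ell\epsilon}\br{1 - p_{y'}} + \ell e^{\br{\ell-1}\epsilon}\delta$. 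Using $p_{\hat y_Q} \geq p_{y'}$ by the argmax choice and rearranging gives $p_{y'} \leq \frac{1 + \ell e^{-\epsilon}\delta}{1 + e^{-\ell\epsilon}}$, which is monotonically increasing in $\ell$ and hence at most $q$ whenever $\ell \leq s_0$.

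Next, I would set $\widehat{F}\br{x} := \bc{f \in F : f\br{x} \neq \hat y_{Q\br{f}}}$, where $Q\br{f}$ denotes the restriction of $f$ to $S \setminus \bc{x}$. For every such $f$, the previous display gives $\bP_{\hsample{f}}\bs{h\br{x} \neq f\br{x}} = 1 - p_{f\br{x}} \geq 1 - q$, which is exactly~\eqref{eq:lem-high-err-cond}. The size bound~\eqref{eq:size-of-wrong-set} then follows because within each equivalence class $F_{\vert S\setminus\bc{x},Q}$ the excluded functions all satisfy $f\br{x} = \hat y_Q$, so their number is at most $\max_{y\in\cY}\abs{\bc{f \in F_{\vert S\setminus\bc{x},Q}: f\br{x} = y}}$; summing over $Q$ gives the stated lower bound on $\abs{\widehat{F}\br{x}}$.

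For the subpopulation-wise independent case, I would use that $\bP_{f\sim\cF}\bs{f\br{x} = \hat y_Q \mid Q\br{f} = Q} = \cF\br{x}\bs{\hat y_Q} \leq \norm{\cF}_\infty$, so averaging the per-$f$ bound over $Q\br{f}$ gives $\bP_{f\sim\cF,\hsample{f}}\bs{h\br{x} \neq f\br{x}} \geq \br{1-q}\br{1 - \norm{\cF}_\infty}$. Plugging in $s_0 = \floor{\min\br{\frac{\log 2}{\epsilon}, \log\br{\frac{1}{2\delta}}}}$ gives $e^{-s_0\epsilon} \geq 1/2$ and $s_0 \delta \leq s_0/\br{2 e^{s_0}}$, which places $1 - q$ within a $\delta$-dependent correction of $1/3$. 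The hardest part will be sharpening this to the strict inequality $1 - q > 1/3$ claimed in~\eqref{eq:lem-high-err-cond-const}: the floor makes $e^{-s_0\epsilon} > 1/2$ strict for generic $\epsilon$, and I would still need to verify that the correction $3 s_0 e^{-\epsilon} \delta$ is strictly dominated by $2 e^{-s_0\epsilon} - 1$, which calls for additional care at the boundary where $s_0 \epsilon$ approaches $\log 2$ or where $s_0$ is small.
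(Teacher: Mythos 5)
Your argument is correct and rests on the same two ingredients as the paper's proof --- group differential privacy between labelled datasets at Hamming distance $\ell$, and the fact that $\sum_{y\in\cY}\bP_{h}\bs{h(x)=y}\leq 1$ --- but it packages the counting step differently. The paper first proves a pairwise claim (\Cref{lem:two_fn_disagree}: of any two functions agreeing off $x$ and disagreeing at $x$, at least one is misclassified with probability $>1-q$) and then deduces the size bound by observing that the ``bad'' functions within each class $\Qlblset$ must form a vertex cover of a complete $\abs{\cY}$-partite graph, whose minimum (probabilistic) vertex cover is the complement of the largest part (\Cref{lem:graph_mult_prob}). Your argmax label $\hat y_Q$ collapses this into one step: at most the single preferred label per class can escape the bound $p_{y'}\leq\frac{1+\ell e^{-\epsilon}\delta}{1+e^{-\ell\epsilon}}$, so the excluded set is explicitly $\bc{f\in\Qlblset: f(x)=\hat y_Q}$, and the same bounds \eqref{eq:size-of-wrong-set} and \eqref{eq:lem-high-err-cond-const} follow by summing over $Q$ and, in the probabilistic case, by invoking subpopulation-wise independence exactly as the paper does. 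This is a cleaner route to an identical conclusion; the vertex-cover formalism buys nothing extra here because the graph is complete multipartite, so its minimum vertex cover is trivially everything outside the largest part.

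The reservation you voice at the end is legitimate, and the paper's own proof glosses over the same point. It simply asserts $q=\frac{2}{3}\geq\frac{1+s_0e^{-\epsilon}\delta}{1+e^{-s_0\epsilon}}$ for $s_0=\floor{\min\br{\frac{\log 2}{\epsilon},\log\br{\frac{1}{2\delta}}}}$; this is equivalent to $3s_0e^{-\epsilon}\delta\leq 2e^{-s_0\epsilon}-1$, and when $\frac{\log 2}{\epsilon}$ is an integer (so the floor gives no slack) the right-hand side is exactly $0$ while the left-hand side is positive for any $\delta>0$. The same boundary effect is why the strict inequality in \eqref{eq:lem-high-err-cond} is not literally delivered by either your argument or the paper's contradiction argument. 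Both defects are cosmetic --- they are repaired by defining $s_0$ strictly below the minimum or by absorbing the $\delta$ term into a marginally smaller constant --- and do not affect how the lemma is used downstream, but you were right to flag that the constant $\frac{1}{3}$ requires this extra care.
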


\begin{proof}[Proof of~\Cref{lem:large_error_infreq_sample_app_v2}]
    Let $S$ be a dataset of size $m$.
By~\Cref{lem:two_fn_disagree}, if \(q\geq \frac{1 +
s_0e^{-\epsilon}\delta}{1+e^{-s_0\epsilon}}\) and if \(f_1,f_2\)
satisfy Condition~\ref{cond:C1} then at least one of them satisfies
condition~\ref{cond:C2_2} with respect to all \(x\in\Sell\) for
\(\ell\leq s_0\) and all \(\br{\epsilon,\delta}\)-DP algorithm \(A\)
with parameter \(q\) i.e.%

\begin{equation*}
    \bP_{h\sim \cA(\Slabel)}[h(x)\neq f(x)]>1-q.
\end{equation*}

\noindent Next, we first measure the minimum size of the set of such
labelling functions \(f\) and then lower bound the probability with
which such a labelling function will be drawn when sampling from
\(\lblprior\). Let \(\zeta\br{Q,x}\) denote the total number of
functions in \(\Qlblset\) that satisfy condition~\ref{cond:C2_2}. We
show that \(\zeta\br{Q,x}\) is bigger than the size of the minimum
vertex cover~(see~\Cref{defn:vertex-cover}) of the following graph.\\

Construct a graph \(\cG_{S,x,Q}\) where every function
\(f_i\in\Qlblset\) represents a vertex.  Construct an edge between two
vertices \(f_i,f_j\) if they satisfy Condition~\ref{cond:C1} i.e.
\(f_i\br{x}\neq f_j\br{x}\). It is easy to see that this is a complete
\(\abs{\cY}\)-partite graph where each partition consists of functions
that predict a different label $y$ on \(x\), i.e. they are defined as
\(\bc{f\in\Qlblset:~f\br{x}=y}\).
A vertex cover of a graph~(defined in~\Cref{defn:vertex-cover}) is a
set of vertices such that for every edge at least one of its end
points are included in the set. As for every edge in \(\cG_{S,x,Q}\),
both its endpoints satisfy condition~\ref{cond:C1}, at least one of
them should satisfy condition~\ref{cond:C2_2}.  Therefore, the total
number of functions that satisfy condition~\ref{cond:C2_2} is at least
the size of the minimum vertex cover of \(\cG_{S,x,Q}\).
Using~\Cref{lem:graph_mult_prob} to calculate the minimum vertex cover
of \(\cG_{S,x,Q}\), we have that
\[\zeta\br{Q,x}\geq\abs{\Qlblset}-\max_{y\in\cY}\abs{\bc{f\in
\Qlblset: f\br{x}=y}}.\]

\noindent Summing this up for all labellings, the total number of labeling functions that  satisfy condition~\ref{cond:C2_2} is
\begin{equation}\label{eq:sum-all-labels}
    \begin{aligned}
        \abs{\widehat{F}(x)}=\sum_{Q\in\cY^{m-1}}\zeta\br{Q,x}&\geq\sum_{Q\in\cY^{m-1}} \abs{\Qlblset}-\max_{y\in\cY}\abs{\bc{f\in
\Qlblset: f\br{x}=y}}\\
&=\abs{\lblset} -\sum_{Q\in\cY^{m-1}}\max_{y\in\cY}\abs{\bc{f\in
\Qlblset: f\br{x}=y}}.
\end{aligned}
\end{equation}
This completes the proof of~\Cref{eq:size-of-wrong-set}. To
prove~\Cref{eq:lem-high-err-cond-const}, consider \(\lblprior\) to be
a subpopulation-wise independent label prior and recall that for all
\(Q\in\cY^{m-1}\), \(\cG_{S,x,Q}\) is a complete multi-partite graph.
Using the same argument as above, with~\Cref{lem:graph_mult_prob}, we
can lower bound the probability mass of \(\widehat{F}\br{x}\)
\begin{align*}
    \bP_{f\sim\lblprior}\bs{f\in\widehat{F}\br{x}}&\geq \sum_{Q\in{\cY}^{m-1}}\bP_f\bs{f\in\pmvc{\cG_{S,x,Q}}}\\
     &=\sum_{Q\in{\cY}^{m-1}}\bP_f\bs{f\in\pmvc{\cG_{S,x,Q}}\vert f\in\Qlblset}\bP_f\bs{f\in\Qlblset}\\
     &\geq\min_{Q\in{\cY}^{m-1}}\bP_f\bs{f\in\pmvc{\cG_{S,x,Q}}\vert f\in\Qlblset}\underbrace{\sum_{Q\in{\cY}^{m-1}}\bP_f\bs{f\in\Qlblset}}_{=1}
\end{align*}
where \(\sum_{Q\in{\cY}^{m-1}}\bP_f\bs{f\in\Qlblset}=1\) by the law of
total probability. Using the assumption that \(\lblprior\) is
subpopulation-wise independent and the definition of
\(\norm{\cF}_{\infty}\),
we obtain\[\min_{Q\in{\cY}^{m-1}}\bP_f\bs{f\in\pmvc{\cG_{S,x,Q}}\vert
f\in\Qlblset}\geq\br{1-\norm{\lblprior}_{\infty}}\] and thus
\begin{equation}\label{eq:f-is-bad}\bP_{f\sim\lblprior}\bs{f\in\widehat{F}\br{x}}\geq
\br{1-\norm{\lblprior}_{\infty}}.\end{equation}
Using~\Cref{lem:two_fn_disagree,eq:f-is-bad}, we can lower
bound 
\begin{equation}\label{eq:incorrec-pred-prob}
\begin{aligned}
    \bP_{f\sim\lblprior, h\sim \cA(\Slabel)}[h(x)\neq f(x)] &\geq \bP_{f\sim\lblprior, h\sim \cA(\Slabel)}\bs{h(x)\neq f(x)\vert f\in\widehat{F}(x)}\bP_f\bs{f\in\widehat{F}(x)} \\
    &\geq\br{1-q}\br{1-\norm{\lblprior}_{\infty}}
\end{aligned}
\end{equation}

Substituting the value of
\(s_0=\floor{\min\br{\frac{1}{\epsilon}\log{2},\log{\br{\frac{1}{2\delta}}}}}\),
we obtain \(q=\frac{2}{3}\geq \frac{1 +
s_0e^{-\epsilon}\delta}{1+e^{-s_0\epsilon}}\). Using \(q=\frac{2}{3}\)
in~\Cref{eq:incorrec-pred-prob} completes the proof
of~\cref{eq:lem-high-err-cond-const}. 
\end{proof}

\begin{restatable}{lem}{twofndisagree}\label{lem:two_fn_disagree} Let
    $S$ be a dataset of size $m$ and $q \in (0, 1)$. Further, let
    $s_0$ be the largest integer that fulfills $q\geq \frac{1 + s_0
    e^{-\epsilon}\delta}{1 + e^{-s_0\epsilon}}$. Then, for an
    $(\epsilon, \delta)$-differentially private algorithm $\cA$, for
    any $x \in \Sell$ where $\ell \in \mathbb{N}, \ell \leq s_0$, if
    two functions \(f_1,f_2\) agree on all points in
    \(S\setminus\bc{x}\) but disagree on \(x\), i.e. satisfies
    condition~\ref{cond:C1}, then at least one of the two functions
    \(f\in\bc{f_1,f_2}\) satisfies condition~\ref{cond:C2_2},
    \[\bP_{\hsample{f}}\bs{h(x)\neq f(x)}> 1-q.\]
    \end{restatable}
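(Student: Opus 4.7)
The plan is to obtain the claim by contradiction, combining the disjointness of the events $\{h(x)=f_1(x)\}$ and $\{h(x)=f_2(x)\}$ (which holds because $f_1(x)\neq f_2(x)$) with the group-privacy guarantee for $\cA$ across the two labelings $S_{f_1}$ and $S_{f_2}$. Suppose for contradiction that neither function satisfies condition \eqref{cond:C2_2}, so that
\[
  \bP_{h\sim \cA(S_{f_i})}\bigl[h(x)=f_i(x)\bigr] \ \geq\ q \qquad \text{for both } i\in\{1,2\}.
\]
The key observation is that, because $f_1$ and $f_2$ agree on $S\setminus\{x\}$ and disagree on $x$, the labelled datasets $S_{f_1}$ and $S_{f_2}$ differ in exactly $\ell$ entries, namely the $\ell$ copies of $x$ present in $S$.

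Next, I would invoke the standard group-privacy amplification of $(\epsilon,\delta)$-DP: an $(\epsilon,\delta)$-DP mechanism satisfies $(\ell\epsilon,\ell e^{(\ell-1)\epsilon}\delta)$-indistinguishability on inputs that differ in $\ell$ entries. Applied with the event $Z=\{h : h(x)=f_2(x)\}$, this gives
\[
  \bP_{h\sim\cA(S_{f_2})}[h(x)=f_2(x)]\ \leq\ e^{\ell\epsilon}\,\bP_{h\sim\cA(S_{f_1})}[h(x)=f_2(x)] + \ell e^{(\ell-1)\epsilon}\delta,
\]
so, after rearrangement,
\[
  \bP_{h\sim\cA(S_{f_1})}[h(x)=f_2(x)]\ \geq\ e^{-\ell\epsilon}\,\bP_{h\sim\cA(S_{f_2})}[h(x)=f_2(x)] - \ell e^{-\epsilon}\delta \ \geq\ e^{-\ell\epsilon}q - \ell e^{-\epsilon}\delta.
\]
Now I would exploit the disjointness of the two events under $\cA(S_{f_1})$: their probabilities sum to at most $1$, so adding this bound to the contradiction hypothesis $\bP_{h\sim\cA(S_{f_1})}[h(x)=f_1(x)] \geq q$ yields
\[
  q + e^{-\ell\epsilon}q - \ell e^{-\epsilon}\delta \ \leq\ 1
  \qquad\Longleftrightarrow\qquad
  q \ \leq\ \frac{1+\ell e^{-\epsilon}\delta}{1+e^{-\ell\epsilon}}.
\]

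To close the argument, I would verify that $g(\ell)\mathrel{:=}\frac{1+\ell e^{-\epsilon}\delta}{1+e^{-\ell\epsilon}}$ is strictly increasing in $\ell$ (both the numerator grows and the denominator shrinks), so that $\ell\leq s_0$ forces $q\leq g(\ell)\leq g(s_0)$. By the maximality definition of $s_0$, however, $g(s_0+1) > q$, and combining this with monotonicity and the fact that $\ell$ is a positive integer not exceeding $s_0$ gives the desired contradiction with condition~\eqref{cond:C2_2} failing for both $f_1$ and $f_2$. The main obstacle is mostly bookkeeping: one has to make sure the slack term coming from group privacy is exactly $\ell e^{-\epsilon}\delta$ after the rearrangement (so that it lines up with the $s_0 e^{-\epsilon}\delta$ in the definition of $s_0$), and one has to establish monotonicity of $g(\ell)$ carefully enough to derive a strict rather than weak inequality; everything else is essentially a one-line application of DP group privacy together with disjointness of the two target labels.
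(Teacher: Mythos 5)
Your proposal is correct and follows essentially the same route as the paper's proof: a contradiction argument that combines the group-privacy bound for datasets differing in the $\ell$ copies of $x$ with the fact that $\{h(x)=f_1(x)\}$ and $\{h(x)=f_2(x)\}$ are disjoint, yielding exactly the inequality $q\leq\frac{1+\ell e^{-\epsilon}\delta}{1+e^{-\ell\epsilon}}$ and then concluding via monotonicity of this bound in $\ell$. Your phrasing of the disjointness step (probabilities summing to at most $1$) is just the mirror image of the paper's containment $\{h(x)=f(x)\}\subseteq\{h(x)\neq\tilde f(x)\}$, and your explicit attention to the monotonicity of $g(\ell)$ actually makes the final step slightly more careful than the paper's.
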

    
    \begin{proof}[Proof of~\Cref{lem:two_fn_disagree}]
   
     We prove this via contradiction. Let's assume that two
    labeling functions \(f_1,f_2\) satisfying
    condition~\ref{cond:C1} also satisfies 
    
    \begin{equation}\label{assump:contra-low-err}
        \bP_{h\sim \cA(\Slabel)}[h(x)\neq f(x)]\leq 1-q
        \end{equation}
    for some \(x\in\Sell\). Then, we lower bound the probability that
    an \(\br{\epsilon,\delta}\)-DP algorithm trained on a dataset,
    labeled with \(\tilde{f}\), disagrees with \(f\) on \(x\in
    \Sell\).
    \begin{equation}\label{eq:infreq_lemma1_contra1}\begin{aligned}
    \bP_{\hsample{\tilde{f}}}\bs{h(x)= f(x)}&\geq \br{\bP_{\hsample{f}}\bs{h(x)= f(x)} - \ell e^{(\ell - 1)\epsilon}\delta}e^{-\ell\epsilon}\\
    &\geq \br{q - \ell e^{\br{\ell-1}\epsilon}\delta}e^{-\ell \epsilon}
    \end{aligned}
    \end{equation}
    The first inequality follows from the notion of group differential privacy and the second inequality follows from Assumption~\eqref{assump:contra-low-err}.
    Further, by definition of $f_1, f_2$ and Assumption~\ref{assump:contra-low-err},  we have
    \begin{equation}\label{eq:infreq_lemma1_contra2}
    \bP_{\hsample{\tilde{f}}}\bs{h(x)= f(x)}\leq \bP_{\hsample{\tilde{f}}}	\bs{h(x)\neq \tilde{f}(x)} \leq 1-q.
    \end{equation}
    Combining the two
    inequalities~(\ref{eq:infreq_lemma1_contra1},\ref{eq:infreq_lemma1_contra2}),
    it has to hold that  $q\leq \frac{1 + \ell e^{-\epsilon}\delta}{1 + e^{-\ell \epsilon}}$.
    However, this leads to a contradiction as \(q\geq \frac{1 + s_0
    e^{-\epsilon}\delta}{1 + e^{-s_0\epsilon}}\). This implies that
    for any $\ell\leq s_0$ such that $q\geq \frac{1 + \ell
    e^{-\epsilon}\delta}{1 + e^{-\ell \epsilon}}$,
    Assumption~\ref{assump:contra-low-err} is incorrect which
    concludes our proof.
    
    \end{proof}

\begin{lem}\label{lem:graph_mult_prob} Given  \(k\) sets of vertices
    \(\bc{\cV_i}_{i=1}^k\), consider a complete k-partite graph
    \(\cG=\br{\cV,\cE}\) where the \(k^{\it th}\) partition consists
    of \(\cV_i\). Then the size of the minimum vertex cover is
    \(\abs{\cV} - \max_i{\abs{\cV_i}}\).

    Further, if there exists a probability distribution \(\mu\) over
    the vertices, the probabilistically minimum vertex cover~(or
    \(\pmvc{\cG}\)) has a probability mass of \(1 -
    \max_i\mu\bs{\cV_i} \) where
    \(\mu\bs{\cV_i}=\bP_{V\sim\mu}\bs{V\in\cV_i}\).
    \end{lem}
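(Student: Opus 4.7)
The plan is to exploit the complementary relationship between vertex covers and independent sets (recalled in Definition~\ref{defn:vertex-cover}): a set $\mathcal{C}\subseteq\mathcal{V}$ is a vertex cover of $\mathcal{G}$ if and only if $\mathcal{V}\setminus\mathcal{C}$ is an independent set. Hence minimising $|\mathcal{C}|$ is equivalent to maximising $|\mathcal{V}\setminus\mathcal{C}|$ over independent sets, and minimising $\mu[\mathcal{C}]$ is equivalent to maximising $\mu[\mathcal{V}\setminus\mathcal{C}]$ over independent sets. So the whole lemma reduces to characterising the (probabilistically) maximum independent set of a complete $k$-partite graph.

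First I would prove the structural claim that every independent set $\mathcal{I}$ of $\mathcal{G}$ is contained in a single part $\mathcal{V}_i$. Suppose for contradiction that $\mathcal{I}$ contains two vertices $u\in\mathcal{V}_i$ and $v\in\mathcal{V}_j$ with $i\neq j$; then, since $\mathcal{G}$ is the \emph{complete} $k$-partite graph on the parts $\{\mathcal{V}_i\}$, the edge $(u,v)\in\mathcal{E}$, contradicting independence. Conversely, any subset of a single part $\mathcal{V}_i$ is trivially independent, because the $k$-partite structure places no edges within a part. Thus the independent sets of $\mathcal{G}$ are exactly the subsets of individual parts.

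From this characterisation the two statements follow immediately. For the cardinality version, the maximum independent set is the largest part $\mathcal{V}_{i^\star}$ with $i^\star \in \arg\max_i|\mathcal{V}_i|$, of size $\max_i|\mathcal{V}_i|$, so the minimum vertex cover has size $|\mathcal{V}|-\max_i|\mathcal{V}_i|$. For the probabilistic version, among subsets of a fixed part $\mathcal{V}_i$, the one maximising $\mu$-mass is $\mathcal{V}_i$ itself, so the probabilistically maximum independent set is $\mathcal{V}_{i^\star}$ for $i^\star\in\arg\max_i\mu[\mathcal{V}_i]$ with mass $\max_i\mu[\mathcal{V}_i]$, whence $\mathrm{PMVC}(\mathcal{G})=\mathcal{V}\setminus\mathcal{V}_{i^\star}$ has $\mu$-mass $1-\max_i\mu[\mathcal{V}_i]$.

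There is no real obstacle here: the only subtlety is to notice that one cannot form a larger independent set by taking a ``mix'' of vertices from several parts, and this is ruled out at a single stroke by completeness of the $k$-partite graph. The cardinality and probabilistic statements are then identical proofs with counting measure replaced by $\mu$, so a single unified argument via complementation suffices.
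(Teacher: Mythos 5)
Your proposal is correct and follows essentially the same route as the paper: pass to the complementary maximum independent set, observe that completeness of the $k$-partite graph forces every independent set to lie inside a single part, and conclude that the optimum (for cardinality or for $\mu$-mass) is the best single part, whence the (probabilistically) minimum vertex cover is its complement. The only cosmetic difference is that you characterise \emph{all} independent sets as subsets of single parts, whereas the paper enumerates the $k$ maximal independent sets; the conclusion is identical.
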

    
    \begin{proof}[Proof of~\Cref{lem:graph_mult_prob}] A set of vertices is
    known as an independent set if and only if its compliment is a vertex
    cover. Therefore, we use the identity that the size of the minimum
    vertex cover and the maximum independent set of a graph equals to the
    total number of vertices of the graph. Therefore, the problem of finding the minimum vertex cover is equivalent to finding the maximum independent set. 
    
    By the definition of independent set, no two vertices in the
    independent set are adjacent. Without loss of generality, let
    \(v\in\cV_i\) for some \(i\) be in the independent set \(\cI\). Then,
    by definition of complete k-partite graph, all vertices belonging to
    \(\bigcup_{j\neq i}\cV_j\) are adjacent to \(v\) and  therefore cannot
    be in \(\cI\). However, all other vertices in \(\cV_i\) can be
    included in \(\cI\). This creates a {\em maximal independent set}.
    Thus, there are only \(k\) maximal independent sets and the set
    corresponding to \(i^*=\argmax_i{\abs{\cV_i}}\) is the maximum
    independent set.  Therefore, the size of the minimum vertex cover
    is \(k - \max_i{\abs{\cV_i}}\).
    
    By a similar reasoning, the set corresponding to
    \(i^*=\argmax_i{\mu\bs{\cV_i}}\) is the probabilistically maximum
    independent set.  Therefore, the probability mass of \(\pmvc{\cG}\) is \(1 -
    \max_i{\mu\bs{\cV_i}}\).
    \end{proof}

\subsection{Proof of Lemma 2}
We restate a detailed version of~\Cref{lem:noisy-majority} and provide
the proof below. 
{\color{black}
\begin{restatable}{lem}{noisyMajorityApp}
    \label{lem:noisy-majority-app} The algorithm \(A_\eta:S\rightarrow \cY^X\) is
    \(\br{\log{\br{\frac{\br{1-\eta}\br{\abs{\cY}-1}}{\eta}}},0}\)- differentially private.
    Further for any dataset \(\Slabel\) and \(s_0\in\bN\), \begin{itemize}
        \item if a subpopulation \(x\) appears more than \(s_0\) times in \(S\), \(\bP_{h\sim\cA_{\eta}\br{\Slabel}}\bs{h(x)\neq f(x)}\leq
    e^{-\nicefrac{s_0\br{1-2\eta}^2}{8\br{1-\eta}}}\) and \\
    \item if a subpopulation \(x\) appears less than \(s_0\) times in \(S\),  \(\bP_{h\sim\cA_{\eta}\br{\Slabel}}\bs{h(x)\neq f(x)}\geq \frac{1}{\sqrt{2s_0}}{\br{4\eta\br{1-\eta}}}^{\nicefrac{s_0}{2}}\).
    \end{itemize}
    \end{restatable}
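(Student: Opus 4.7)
The lemma decomposes into three independent pieces: the differential-privacy bound, an upper bound on the mistake probability when a subpopulation appears frequently, and a matching lower bound when it appears rarely. The privacy claim is a standard randomized-response argument. The flipping step reports each true label unchanged with probability $1-\eta$ and replaces it by any particular other label with probability $\eta/\br{\abs{\cY}-1}$, so for any fixed reported label the ratio of probabilities across two possible true labels is at most $\br{1-\eta}\br{\abs{\cY}-1}/\eta$. For neighbouring datasets differing in a single entry, only the corresponding flipped label has a different distribution and the remaining $m-1$ flipped labels are identically distributed across both datasets, so the likelihood ratio of the full vector of flipped labels is bounded by $\br{1-\eta}\br{\abs{\cY}-1}/\eta$. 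The per-key majority vote, together with the independent uniform labels assigned to subpopulations outside $S$, is data-independent post-processing and therefore inherits the same $\br{\epsilon,0}$-DP.

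\textbf{Accuracy on frequent subpopulations.} Fix $x \in \Sell$ with $\ell > s_0$ and write $y = f\br{x}$. The $\ell$ flipped labels for $x$ are i.i.d.\ and each equals $y$ with probability $1-\eta > 1/2$, so letting $Y \sim \mathrm{Bin}\br{\ell, 1-\eta}$ denote their count, the majority vote is incorrect iff $Y \leq \ell/2$. The multiplicative Chernoff lower-tail bound with $\mu = \ell\br{1-\eta}$ and $\delta = \br{1-2\eta}/\br{2\br{1-\eta}}$, chosen so that $\br{1-\delta}\mu = \ell/2$, yields
\[
\bP\bs{Y \leq \ell/2} \leq \exp\br{-\delta^2\mu/2} = \exp\br{-\ell\br{1-2\eta}^2/\br{8\br{1-\eta}}},
\]
and since $\ell > s_0$ the required upper bound follows.

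\textbf{Inaccuracy on infrequent subpopulations.} For $\ell \leq s_0$ (the case $\ell = 0$ is immediate since $h\br{x}$ is then uniform over $\cY$), I lower bound $\bP\bs{Y \leq \ell/2}$ by retaining only the central atom of $Y$. For $\ell$ even,
\[
\bP\bs{Y = \ell/2} = \binom{\ell}{\ell/2}\br{\eta\br{1-\eta}}^{\ell/2} \geq \frac{2^\ell}{\sqrt{2\ell}}\br{\eta\br{1-\eta}}^{\ell/2} = \frac{\br{4\eta\br{1-\eta}}^{\ell/2}}{\sqrt{2\ell}},
\]
where the inequality on the central binomial coefficient is a consequence of Stirling's approximation. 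Since $4\eta\br{1-\eta} \leq 1$ and $1/\sqrt{2\ell}$ are both non-increasing in $\ell$, the right-hand side is bounded below by $\br{4\eta\br{1-\eta}}^{s_0/2}/\sqrt{2s_0}$ whenever $\ell \leq s_0$. The odd-$\ell$ case is handled analogously by retaining $\bP\bs{Y = \br{\ell-1}/2}$ and invoking the corresponding bound on $\binom{\ell}{\br{\ell-1}/2}$. The main obstacle I expect is the odd case: aligning the constants so that the same clean expression $\br{4\eta\br{1-\eta}}^{s_0/2}/\sqrt{2s_0}$ controls both parities without accruing extra factors of $\eta$ or $1-\eta$ will require the asymmetric central-binomial bound and a small amount of bookkeeping.
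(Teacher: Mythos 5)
Your proposal follows the paper's proof almost step for step. The privacy part is the same randomised-response likelihood-ratio computation (the paper also isolates the single differing, singleton entry and obtains the ratio $\frac{\br{1-\eta}\br{\abs{\cY}-1}}{\eta}$), and the upper bound for frequent subpopulations is the same multiplicative Chernoff bound with the same deviation parameter, yielding $e^{-\ell\br{1-2\eta}^2/\br{8\br{1-\eta}}}$. For the lower bound the paper invokes a KL-based binomial anti-concentration inequality, $\bP\bs{X\leq k}\geq \frac{1}{\sqrt{2m}}e^{-mD(k/m\|p)}$, evaluated at $k=\ell/2$, and simplifies $e^{-\ell D\br{1/2\,\|\,1-\eta}}=\br{4\eta\br{1-\eta}}^{\ell/2}$; your central-atom-plus-Stirling estimate is the identical computation for even $\ell$, so there is no real methodological difference there either.

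The step you defer --- the odd-$\ell$ case --- is precisely where both your argument and the paper's break down, and the ``bookkeeping'' you anticipate cannot be completed to give the same clean expression. For $\ell=1$ the error probability is exactly $\eta$, while the per-frequency bound $\br{4\eta\br{1-\eta}}^{\ell/2}/\sqrt{2\ell}=\sqrt{2\eta\br{1-\eta}}$ exceeds $\eta$ for every $\eta<\nicefrac{1}{2}$; for general odd $\ell$ the relevant event is $\bc{Y\leq\br{\ell-1}/2}$ and the asymmetric central atom carries an extra factor of $\eta$ rather than $\sqrt{\eta\br{1-\eta}}$, which makes the per-$\ell$ claim false (e.g.\ $\ell=3$, $\eta=0.4$ gives $0.352$ against a claimed $0.384$). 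The paper's proof conceals the same problem by applying the cited tail inequality at the non-integer threshold $k=\ell/2$. This is not merely cosmetic: the lemma's final statement fails for $s_0=2$ and a subpopulation appearing once, whose error probability $\eta$ is strictly below the claimed $2\eta\br{1-\eta}$ for all $\eta\in\br{0,\nicefrac{1}{2}}$; and the $\ell=0$ case you call immediate also fails for binary labels with $s_0=1$, where the error probability is $\nicefrac{1}{2}$ but the claimed bound approaches $\nicefrac{1}{\sqrt{2}}$ as $\eta\to\nicefrac{1}{2}$. So your instinct about the obstacle is correct; an honest fix must either restrict the lower bound to even frequencies or weaken the constant, and the paper does neither.
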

}
\begin{proof}
We restate the algorithm for the sake of completeness. The algorithm
\(A_\eta\) accepts a dataset \(\Slabel\in\br{X\times\cY}^m\) and a
noise rate \(\eta\in\br{0,\frac{1}{2}}\) as input. Then it creates a
dictionary where the set \(X\) is the set of keys. In order to assign
values to every key, it first randomly flips the label of every
element in \(\Slabel\) with probability \(\eta\), then for every
unique key in \(\Slabel\), the algorithm computes the majority label
of that key in the flipped dataset and assigns that majority label to
the corresponding key. For elements in \(X\) not present in
\(\Slabel\), it assigns a random element from \(\cY\).

\noindent\textbf{Privacy}: We first prove that $A_{\eta}$ is $(\epsilon, 0)$-differentially
private. For any two neighboring datasets \(\Slabel\) and
\(S_{\tilde{f}}\) that differ in one element, the classifiers
obtained from the algorithm \(A_{\eta}\) on the two datasets differs
the most if the datasets differ on a singleton point $x'$ i.e. a point
which appears just once in the dataset.  
\begin{equation}
\begin{aligned}
\mathrm{For~all}~ x'\in S,f\in\lblset,~\mathrm{and}~\tilde{f}\in\clblprior&\\
\frac{\bP_{\hsample{f}}[h(x)= f(x), \forall x\in S]}{\bP_{\hsample{\tilde{f}}}[h(x)= f(x), \forall x\in S]} &= \frac{\prod_{x\in S}\bP_{\hsample{f}}[h(x)= f(x)]}{\prod_{x\in S}\bP_{\hsample{\tilde{f}}}[h(x)= f(x)]} \\
&= \frac{\bP_{\hsample{f}}[h(x')= f(x')]}{\bP_{\hsample{\tilde{f}}}[h(x')= f(x')]} \\
&= \frac{\bP_{\hsample{f}}\bs{x' \text{ is not flipped }}}{\bP_{\hsample{\tilde{f}}}\bs{x' \text{ is flipped and flipped to }f(x')}}\\
&= \frac{1-\eta}{\eta\frac{1}{\abs{\mathcal{Y}}-1}} = \frac{1-\eta}{\eta}(|\mathcal{Y}|-1)
\end{aligned}
\end{equation}
where \(\clblpriorx{x}\) is the subset of $\lblset$ consisting of all
labeling functions that agree with $f$ at all points in $S$ except
$x$. This shows that the algorithm \(A_\eta\) is
\(\br{\bigO{\log{\frac{1}{\eta}}},0}\)-DP. 

\noindent\textbf{Utility:} Next, we bound the probability that
$A_{\eta}$ misclassifies a point occurring $\ell$ times in $S$ in
order to establish the utility of the algorithm. For a point $x$
occurring $\ell$ times in $S$, we denote all replicates of this point
in the dataset as $\bs{X_2, ..., x_{\ell}}$. As each point is flipped
with probability $\eta$, let $k_i^x = \mathbbm{I}\{x_i \text{ is not
flipped}\}$, then $\sum_{i = 1}^{\ell}k_i^x$ is a binomial random
variable with $\ell$ trials and success probability $1-\eta$. The
algorithm $A_{\eta}$ would correctly classify $x$ if $\sum_{i =
1}^{\ell} k_i^x > \ell/2$. 

The upper bound of the error probability is shown in \citet{Sanyal2020Benign}, but we reproduce it here for completeness. 

\begin{equation}\label{eq:majority-vote-ub-result}
\begin{aligned}
\bP\bs{\sum_{i = 1}^{\ell} k_i^x \leq \frac{\ell}{2}} &= \bP\bs{\sum_{i = 1}^{\ell} k_i^x\leq \frac{\ell}{2\mu}\mu + \mu - \mu}\\
&= \bP\bs{\sum_{i = 1}^{\ell}k_i^x\leq\br{1-\br{1-\frac{\ell}{2\mu}}}\mu}\\
&\labelrel{\leq}{step:chernoff-1} e^{-\frac{(1-2\eta)^2}{8(1-\eta)^2}\mu}\\
&\labelrel{=}{step:replace-mean} e^{-\frac{(1-2\eta)^2}{8(1-\eta)}\ell}
\end{aligned}
\end{equation}

where step~\eqref{step:chernoff-1} follows from Chernoff's
concentration bound on the lower tail and
step~\eqref{step:replace-mean} due to \(\mu = (1-\eta)\ell\).

Then, we show a lower bound for the error probability using an
anti-concentration bound on the lower tail. Define $D(a||p)$ to be the
Kullback-Leibler divergence between two Bernoulli random variables
with parameter $a$ and $p$, i.e. $D(a\|p) = a \log \br{\frac{a}{p}} +
(1-a)\log \br{\frac{1-a}{1-p}}$. 

\begin{equation}\label{eq:majority-vote-lb-result}
\begin{aligned}
\bP\bs{\sum_{i = 1}^{\ell}k_i^x \leq \frac{\ell}{2}} &\labelrel{\geq}{step:binom-anticoncentration-1}\frac{1}{\sqrt{2\ell}}e^{-\ell D\br{\frac{\ell}{2\ell}\| 1-\eta}}\\
&= \frac{1}{\sqrt{2\ell}}e^{-\frac{\ell}{2} \br{\log\br{\nicefrac{1}{2\br{1-\eta}}} + \log\br{\nicefrac{1}{2\eta}}}}\\
&= \frac{1}{\sqrt{2l}}\bs{4\eta\br{1-\eta}}^{\nicefrac{\ell}{2}}
\end{aligned}
\end{equation}
where step~\eqref{step:binom-anticoncentration-1} follows from the binomial anti-concentration bound as in~\Cref{ineq:binom_anticoncentration_bound}. This completes the proof.

\begin{inequality}[From~\citet{alma991061165399705501}]
    \label{ineq:binom_anticoncentration_bound}
Let \(X\) be a binomial random variable with parameters \((m, p)\),
then for any \(k < mp\), we can lower bound the left tail of \(X\) as
\[\bP\bs{X\leq k} \geq \frac{1}{\sqrt{2m}}e^{-mD(\frac{k}{m}||p)}.\]
\end{inequality}

\end{proof}

\section{Experimental Details and Additional Experiments}
\label{app:exp-details}
\subsection{Synthetic data}
\label{app:more-p}
\paragraph{Constructing the synthetic data distribution} Given
\(m\in\bN,p\in\br{0,0.5}, C>0,\) and \(k\in\bN\), we first create two
boolean hypercubes of dimension \(d_{\mathrm{min}}\) and
\(d_{\mathrm{maj}}\) respectively, where
\(d_{\mathrm{min}}=\bigO{\ceil{\log{Cm}}}\) and
\(d_{\mathrm{maj}}=\bigO{\ceil{\log\br{k\br{1-p}}}}\) respectively.
For the minority subpopulation, the data is distributed uniformly
across a mixture of \(N\) Gaussian clusters positioned on a randomly
chosen set of \(d_\mathrm{min}\) vertices of this hyper-cube.
Similarly, for the majority class the data is distributed across a
uniform mixture of \(d_{\mathrm{maj}}\) Gaussian clusters situated on
randomly chosen vertices of the second hypercube. Each of the clusters
is randomly assigned a binary label. Finally, \(m\) samples are
sampled from the minority distribution, along with their labels, and
each data covariate is appended with a \(d_{\mathrm{maj}}\)-dimensional vector of
all \(10^{-4}\) to create a \(d_{\mathrm{min}}+{d_\mathrm{maj}}\)
dimensional dataset of size \(m\). Similarly, \(m\) samples are
sampled from the majority distribution and the covariates are appended
with a \(d_{\mathrm{min}}\) vector of all \(10^{-4}\) to create a
\(d_{\mathrm{min}}+{d_\mathrm{maj}}\) dimensional dataset of size
\(m\). Finally, \(m_1\) points are chosen randomly from the minority dataset and \(m-m_1\) points are chosen randomly from the majority dataset where \(m_1\) is sampled from a binomial distribution with parameters \(m\) and \(p\). 

\paragraph{DP Algorithm for synthetic data}

We use PyTorch Opacus~\citep{opacus} for all our experiments. We use
multiple values of \(\epsilon\) in the range \(\br{0.1,10}\). We clip
the maximum gradient norm to \(1\), and then train the algorithm for
\(10^3\) epochs. Note that the clipping is necessary to control the
lipschitzness of the gradient descent algorithm update step. The
DP-SGD algorithm uses the value of \(\delta\), the maximum gradient
norm, and the number of epochs mentioned above to compute the
noise-multiplier of the algorithm that is necessary to attain the
required privacy guarantee.

\paragraph{Additional weight of minority group}

In this section, we plot additional experiments with the same setup
as~\Cref{sec:synth}. In particular, we show results from the same
experiment as~\Cref{fig:incr-C-hurts-fairness} but with \(p=0.5\)
instead of \(p=0.2\). Our experiments show that the same trend is held in particular for how the accuracy discrepancy changes with \(c\) and with \(\epsilon\).
\begin{figure}[!htb]
  \centering
  \def\svgwidth{0.99\columnwidth}
  \input{./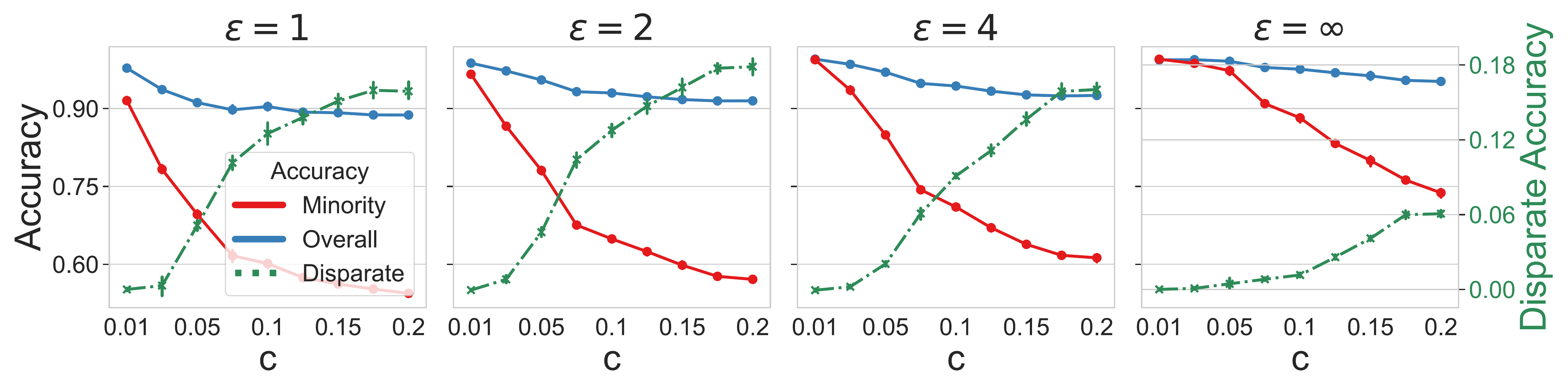_tex}%
  \caption{Here \(p=0.5\). Each figure plots the disparate
  accuracy~(higher is less fair) in
  {\color{OliveGreen} green dashed line}, the accuracy of the minority
  group with {\color{red} red}, and the overall accuracy with
  {\color{NavyBlue} blue} on the y-axis and the parameter \(c\) in the
  X-axis. The left most~\(\br{\epsilon=1}\) achieves the strictest
  level of privacy and the right most~\(\br{\epsilon=\infty}\) is
  vanilla training without any privacy constraints. }
  \label{fig:incr-C-hurts-fairness-p05}
\end{figure}

\subsection{Additional sizes of subpopulations in CelebA}
\label{app:more-p-celebA}

\begin{figure}[!h]\centering
    \begin{subfigure}[c]{0.28\linewidth}
        \centering
        \def\svgwidth{0.99\columnwidth}
        \input{./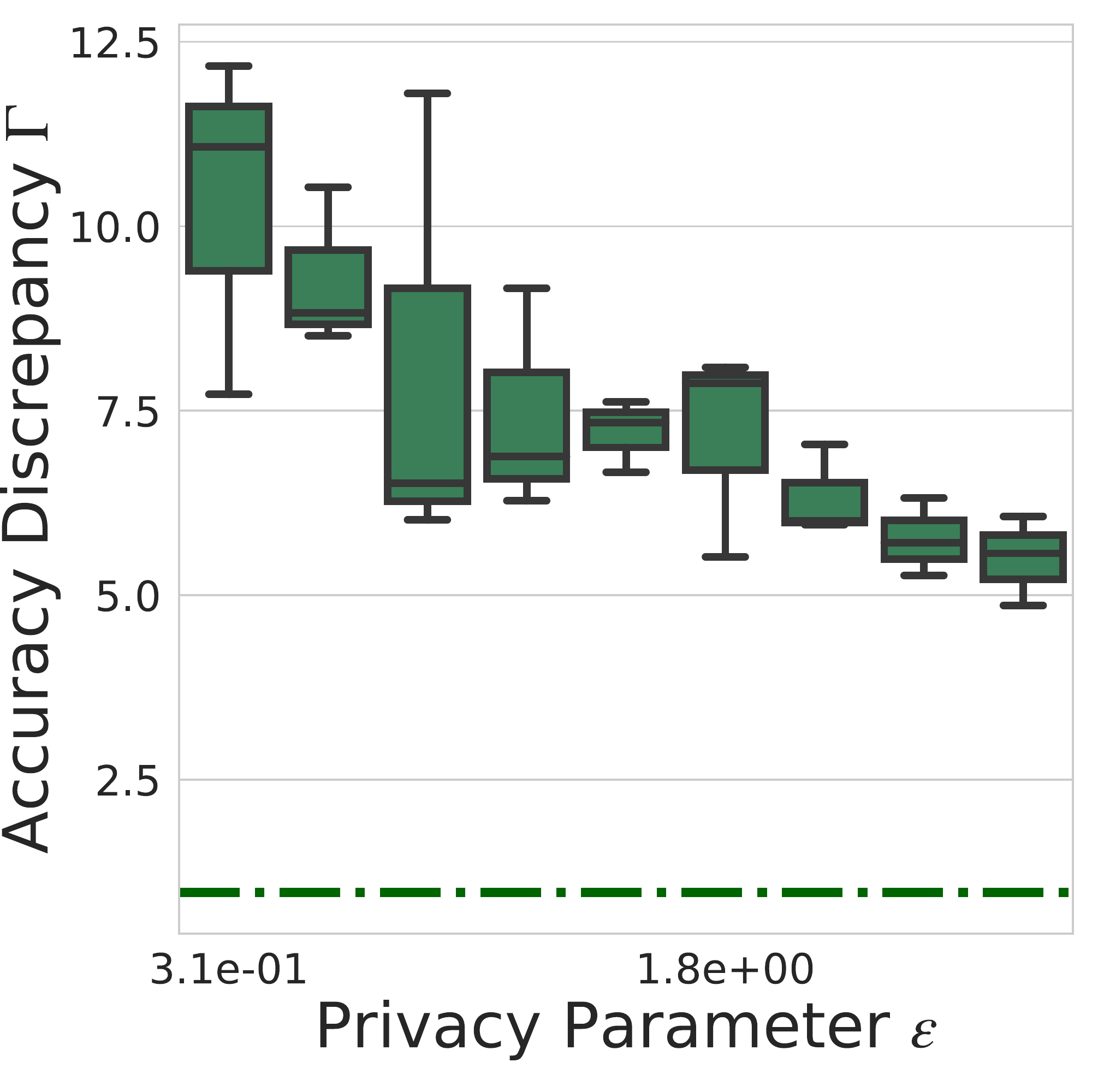_tex}%
    \subcaption*{CelebA-5}
        \end{subfigure}
        \begin{subfigure}[c]{0.28\linewidth}
          \centering
          \def\svgwidth{0.99\columnwidth}
          \input{./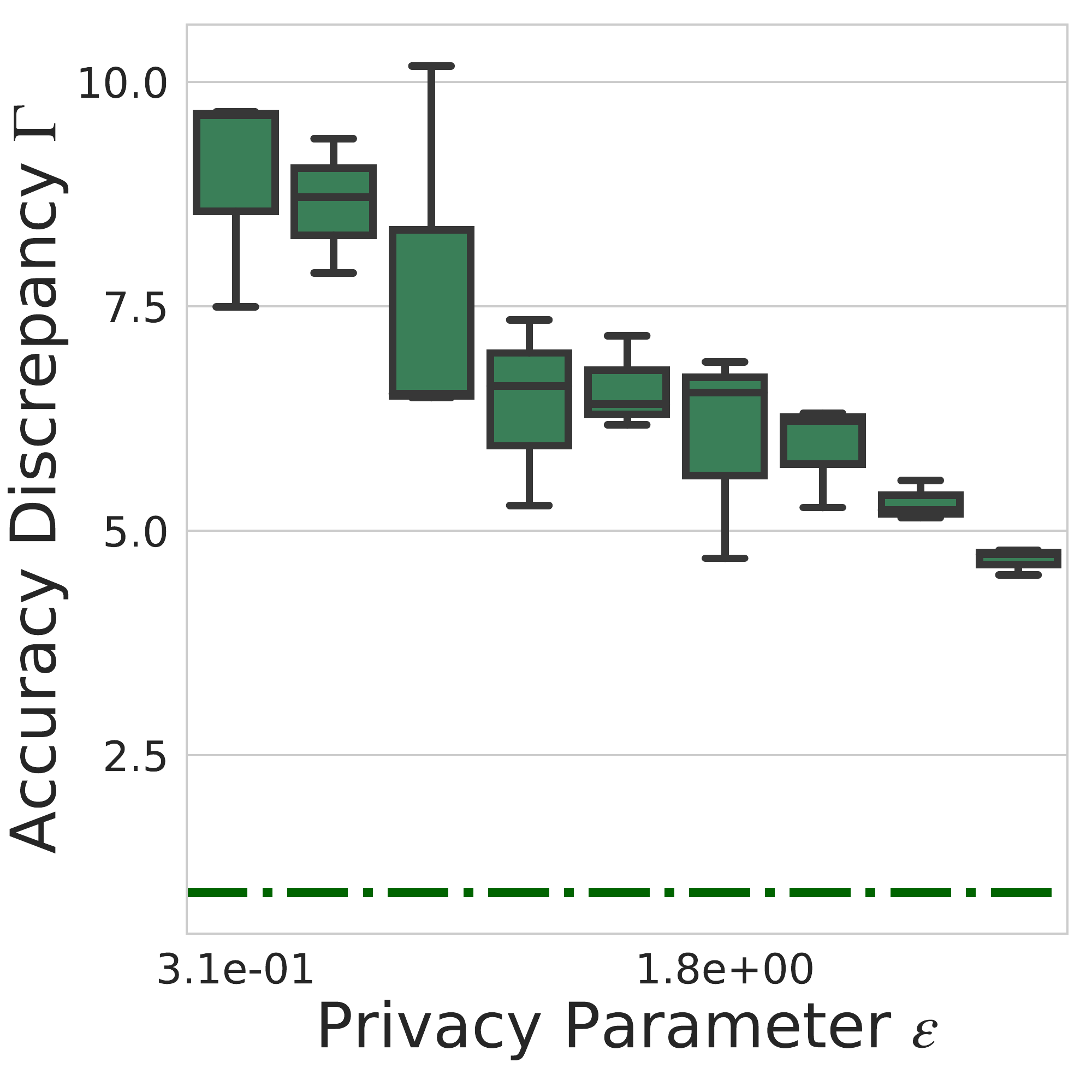_tex}%
    \subcaption*{CelebA-10}          
          \end{subfigure}
          \begin{subfigure}[c]{0.28\linewidth}
              \centering
              \def\svgwidth{0.99\columnwidth}
              \input{./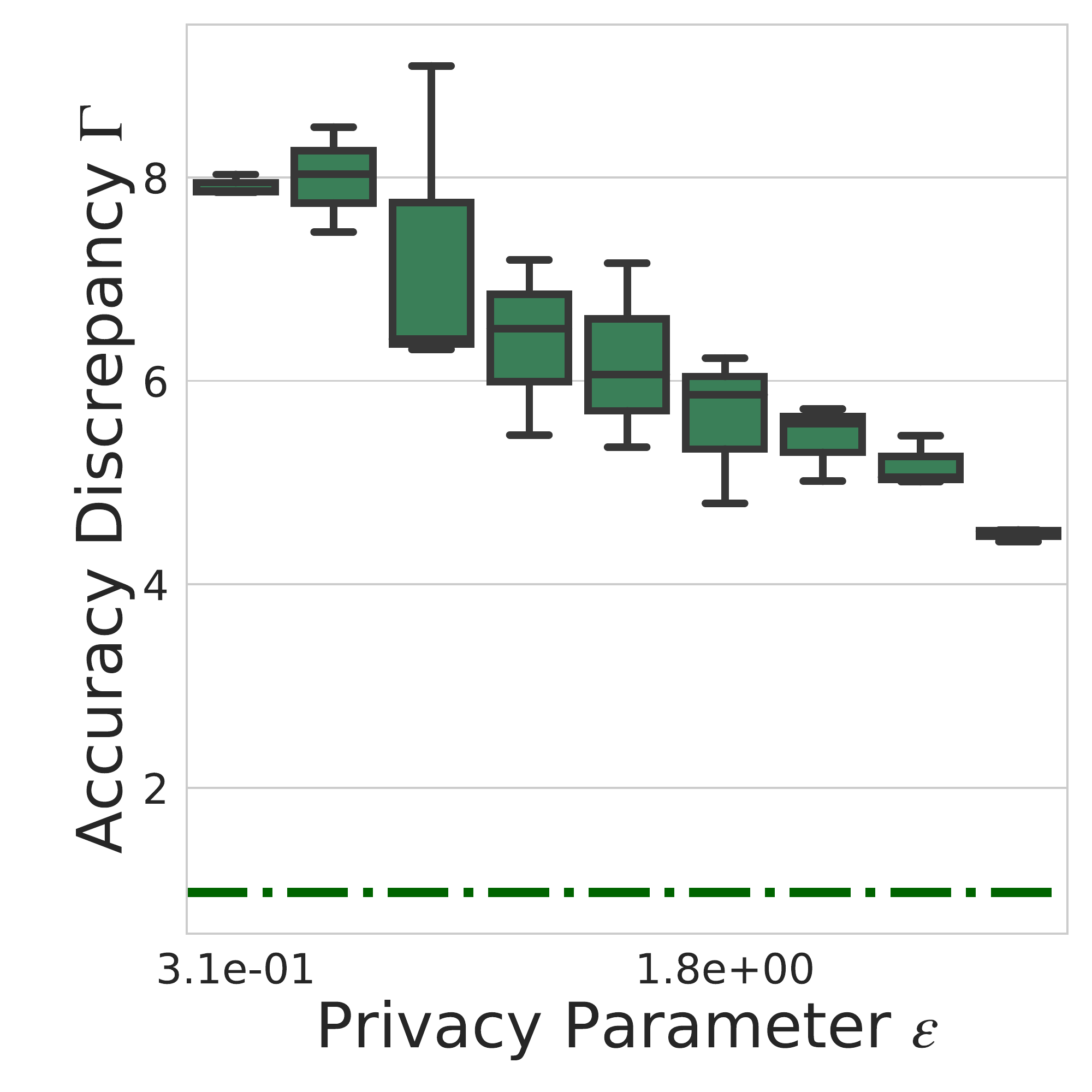_tex}%
                \subcaption*{CelebA-20}
              \end{subfigure}\\
              \begin{subfigure}[c]{0.28\linewidth}
                \centering
                \def\svgwidth{0.99\columnwidth}
                \input{./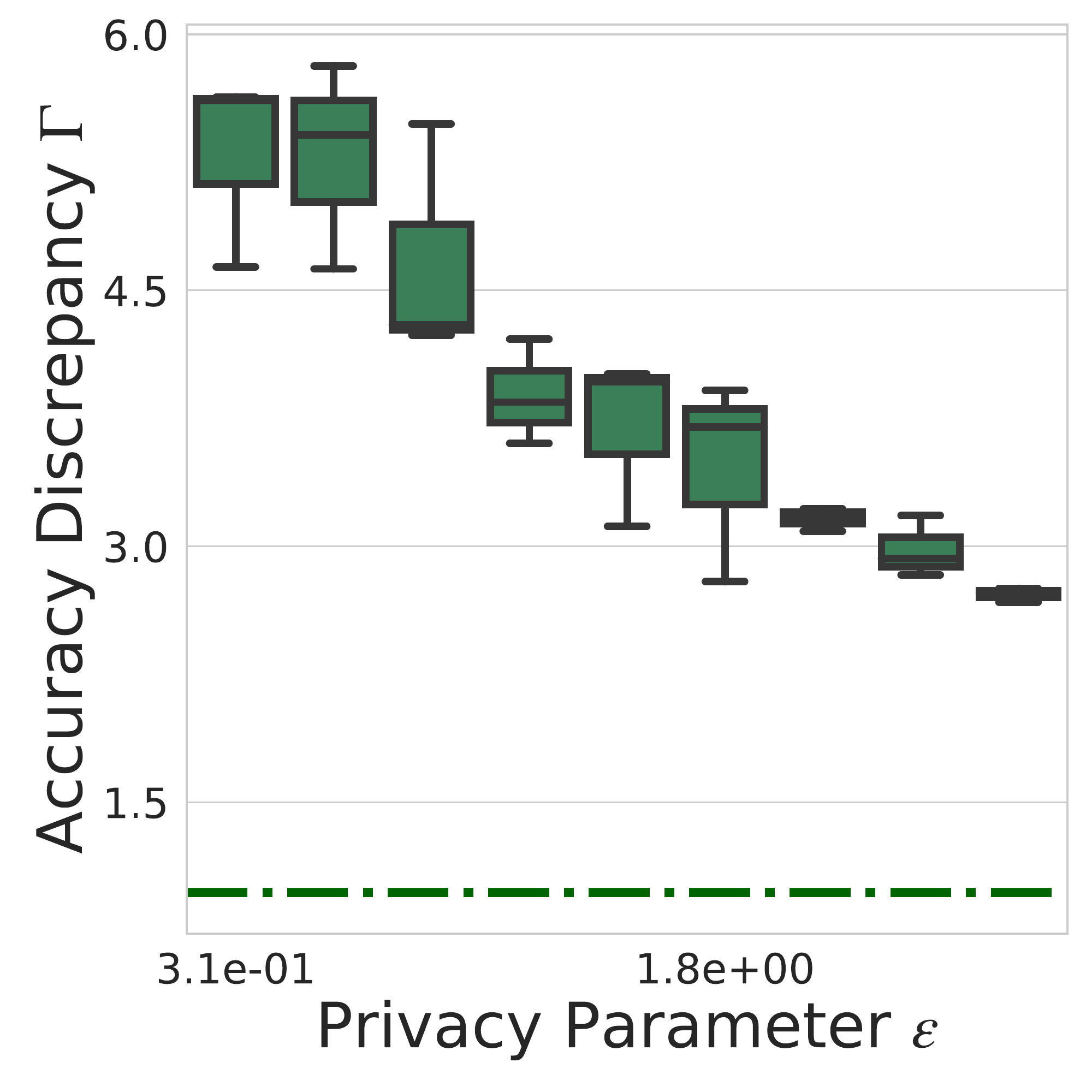_tex}%
                \subcaption*{CelebA-60}
                \end{subfigure}
                \begin{subfigure}[c]{0.28\linewidth}
                    \centering
                    \def\svgwidth{0.99\columnwidth}
                    \input{./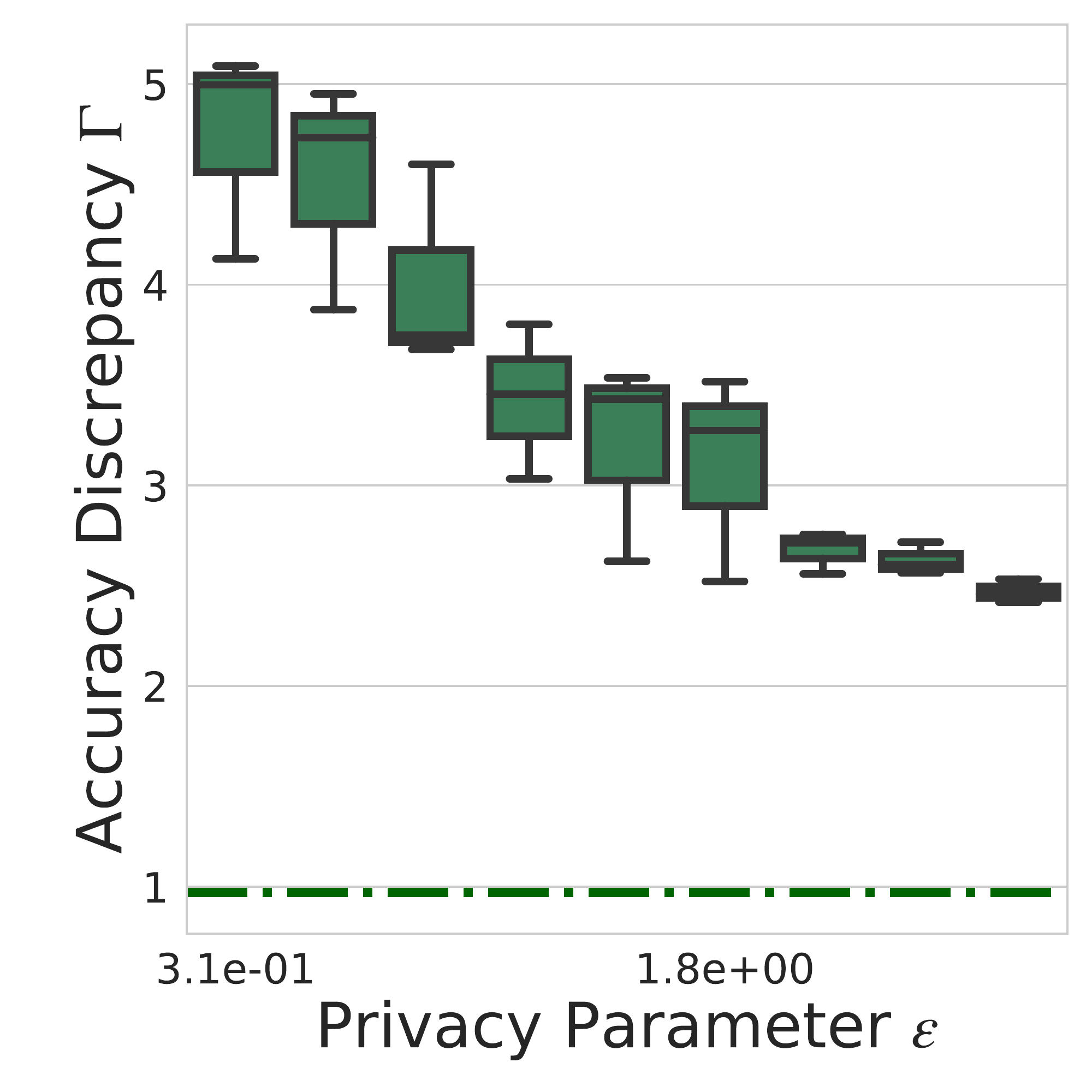_tex}%
                    \subcaption*{CelebA-80}
                    \end{subfigure}
                    \begin{subfigure}[c]{0.28\linewidth}
                      \centering
                      \def\svgwidth{0.99\columnwidth}
                      \input{./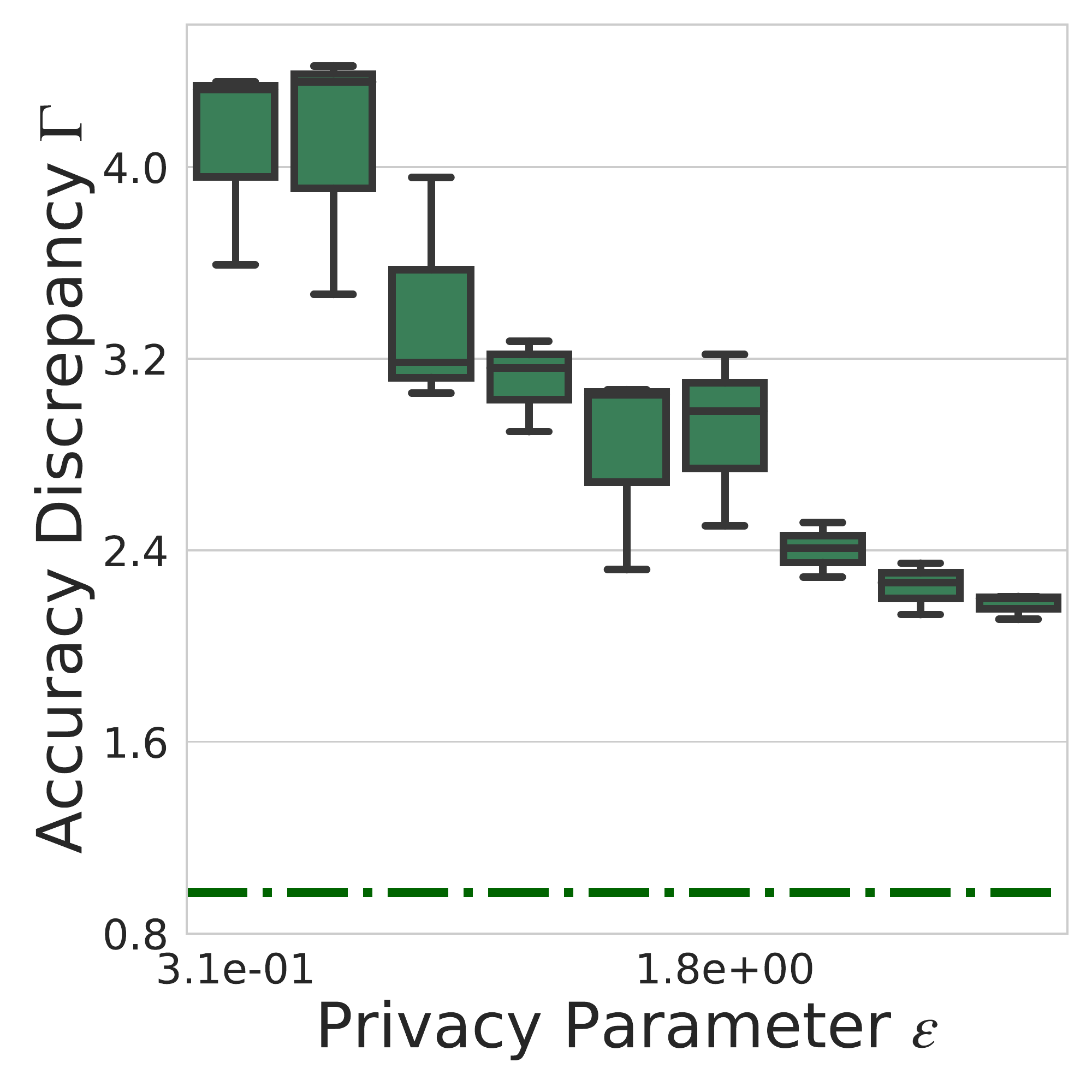_tex}%
                      \subcaption*{CelebA-100}
                      \end{subfigure}
            \caption{Accuracy Discrepancy against Privacy}
            \label{fig:celeb-fairness-acc-all-subpop}
  \end{figure}
  
  To construct the subpopulations in CelebA, we use the following attributes \begin{enumerate*}
    \item Pointy Nose, \item Wearing Earrings, \item Wavy Hair, \item Wearing Lipstick, \item Heavy Makeup, \item Attractive, \item Receding Hairline, \item Blurry, \item Bangs, \item Wearing Hat, \item ~Eyeglasses.\end{enumerate*}.
    This also corresponds to Set A used in~\Cref{fig:fair-subpop-size}.
    
    We use the following features for Set B in our experiments in~\Cref{fig:fair-subpop-size}.
    \begin{enumerate*}
      \item Arched Eyebrows
      \item Bags Under Eyes
      \item Blond Hair
      \item Double Chin
      \item High Cheekbones
      \item Pale Skin
      \item Rosy Cheeks
      \item Straight Hair
      \item Wearing Necklace
      \item Wearing Necktie
      \item Young
    \end{enumerate*}

  In~\Cref{fig:celeb-fairness-acc}, we reported results with the
  subpopulation size set to \(40\).In this section, we report results
  where the size belongs to \(\bc{5,10,20,60,80,100}\).
  In~\Cref{fig:celeb-fairness-acc-all-subpop}, we show how disparate
  accuracy changes with \(\epsilon\) for various subpopulation sizes.
  CelebA-x is the minority majority division where the minority
  subpopulations' sizes are less than \(x\).
  in~\Cref{fig:celeb-wg-acc-acc-all-subpop}, we show how the minority
  group accuracy and the overall accuracy changes with \(\epsilon\)
  for various subpopulation sizes. Overall the results, reflect the
  same trend as~\Cref{fig:celeb-fairness-acc} but the worst case
  accuracy discrepancy decreases for increasing subpopulation sizes.
  
  \begin{figure}[H]\centering
    \begin{subfigure}[c]{0.28\linewidth}
        \centering
        \def\svgwidth{0.99\columnwidth}
        \input{./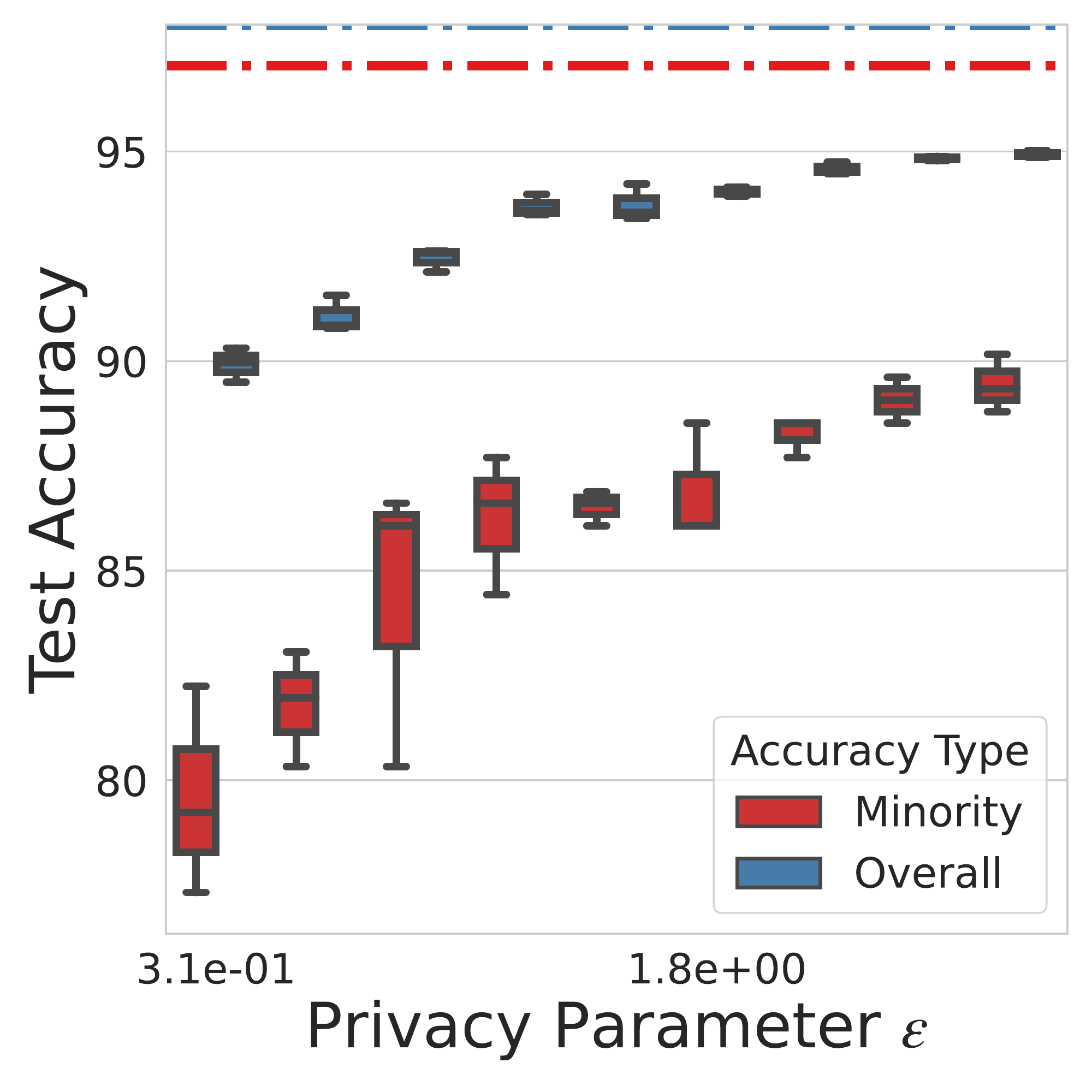_tex}%
    \subcaption*{CelebA-5}
        \end{subfigure}
        \begin{subfigure}[c]{0.28\linewidth}
          \centering
          \def\svgwidth{0.99\columnwidth}
          \input{./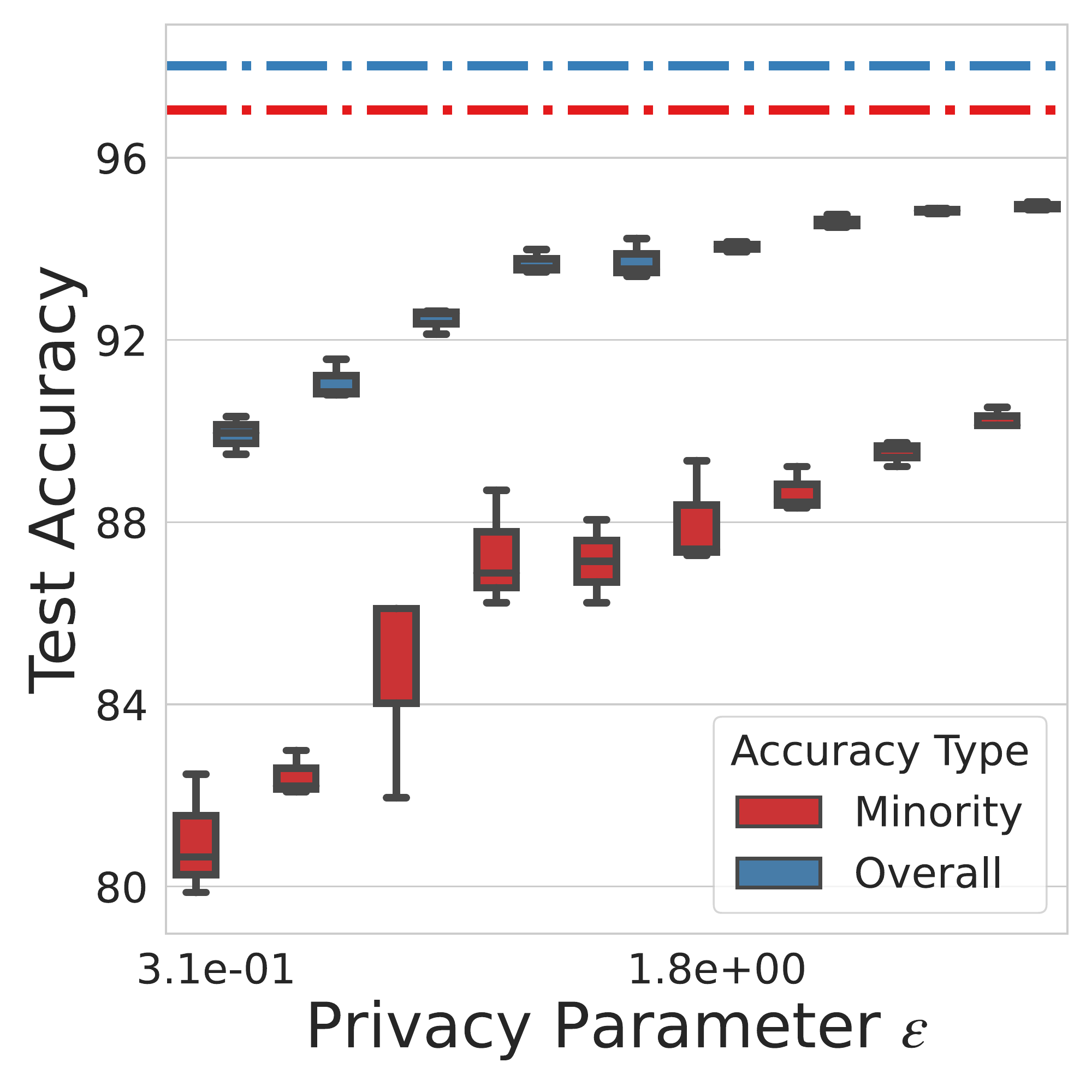_tex}%
    \subcaption*{CelebA-10}          
          \end{subfigure}
          \begin{subfigure}[c]{0.28\linewidth}
              \centering
              \def\svgwidth{0.99\columnwidth}
              \input{./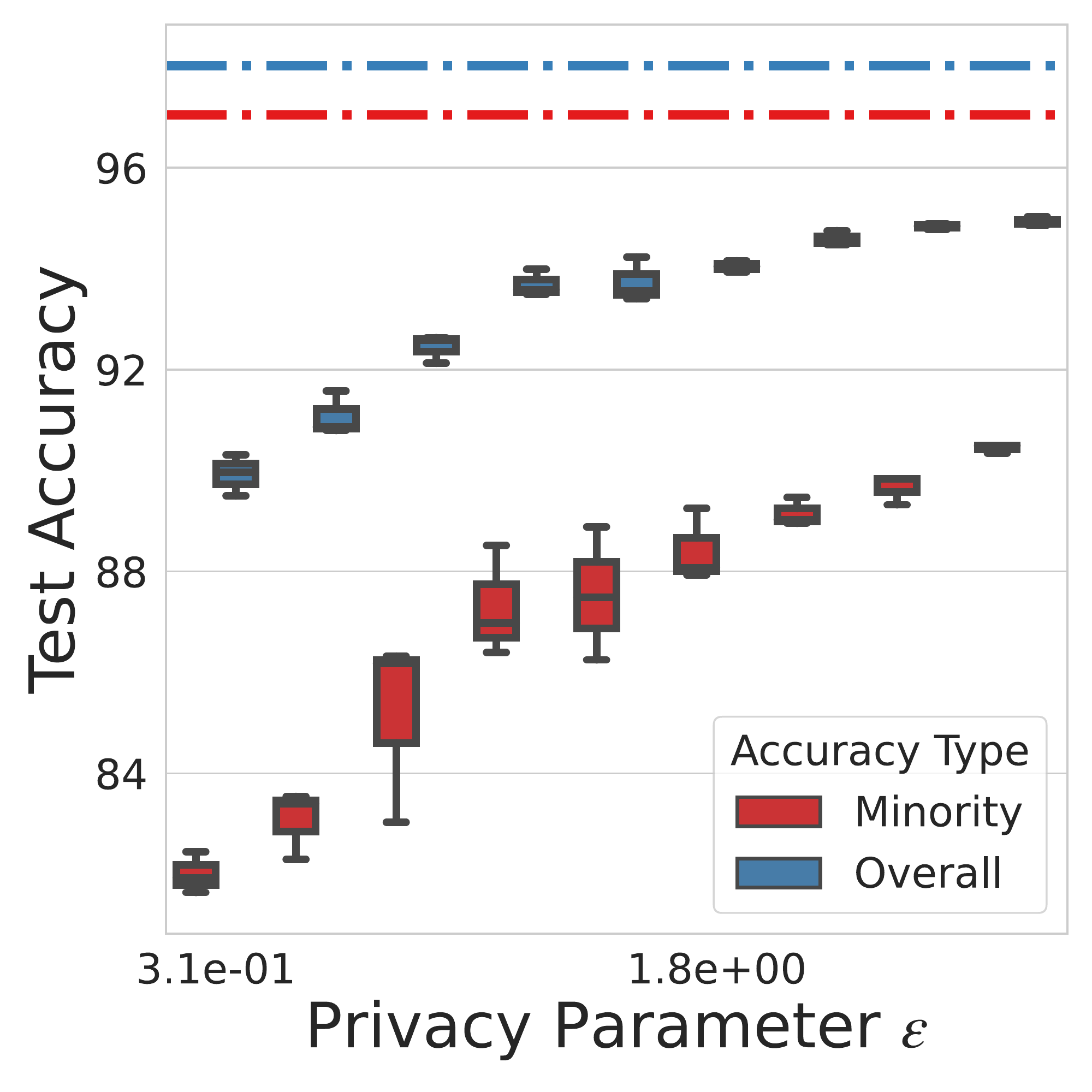_tex}%
                \subcaption*{CelebA-20}
              \end{subfigure}\\
              \begin{subfigure}[c]{0.28\linewidth}
                \centering
                \def\svgwidth{0.99\columnwidth}
                \input{./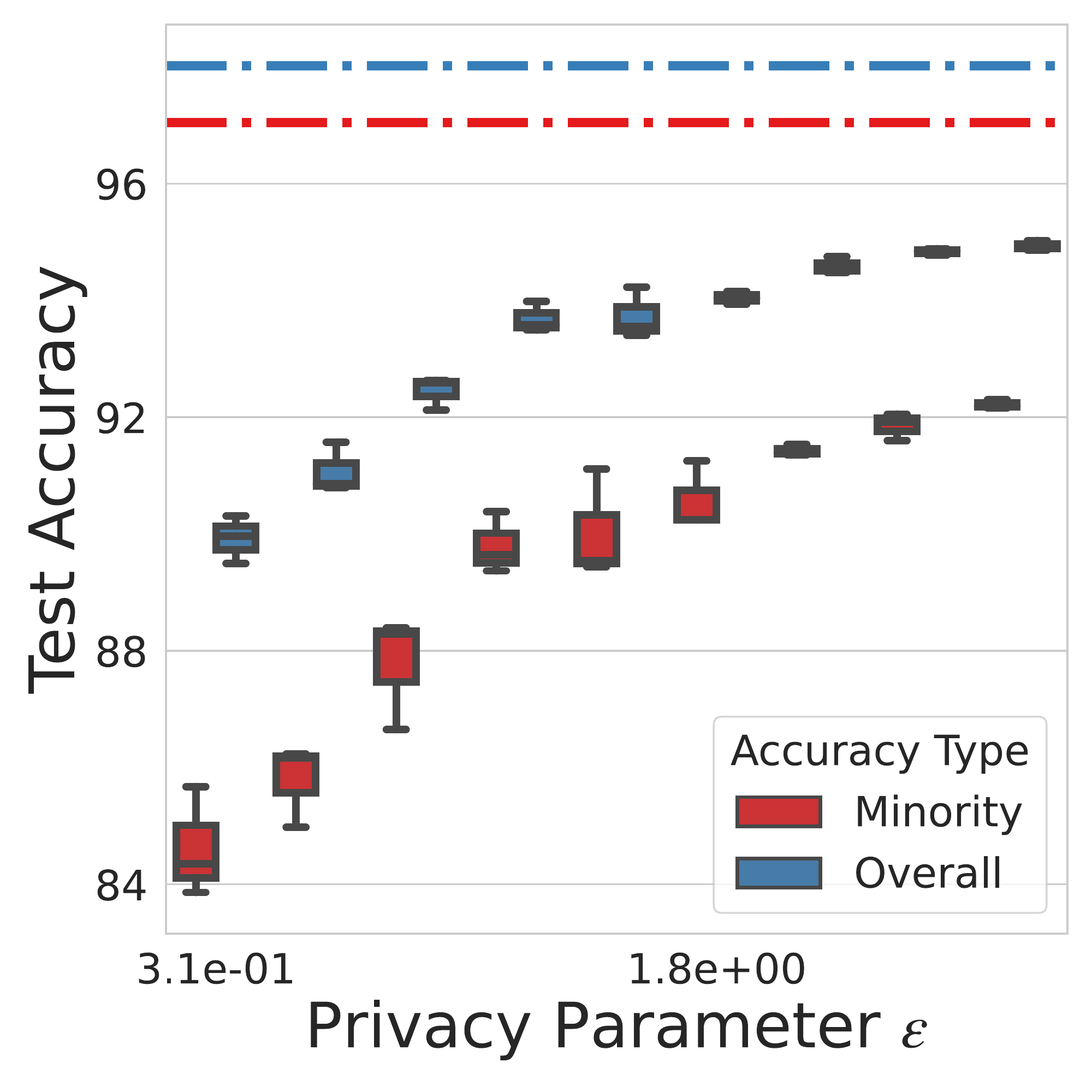_tex}%
                \subcaption*{CelebA-60}
                \end{subfigure}
                \begin{subfigure}[c]{0.28\linewidth}
                    \centering
                    \def\svgwidth{0.99\columnwidth}
                    \input{./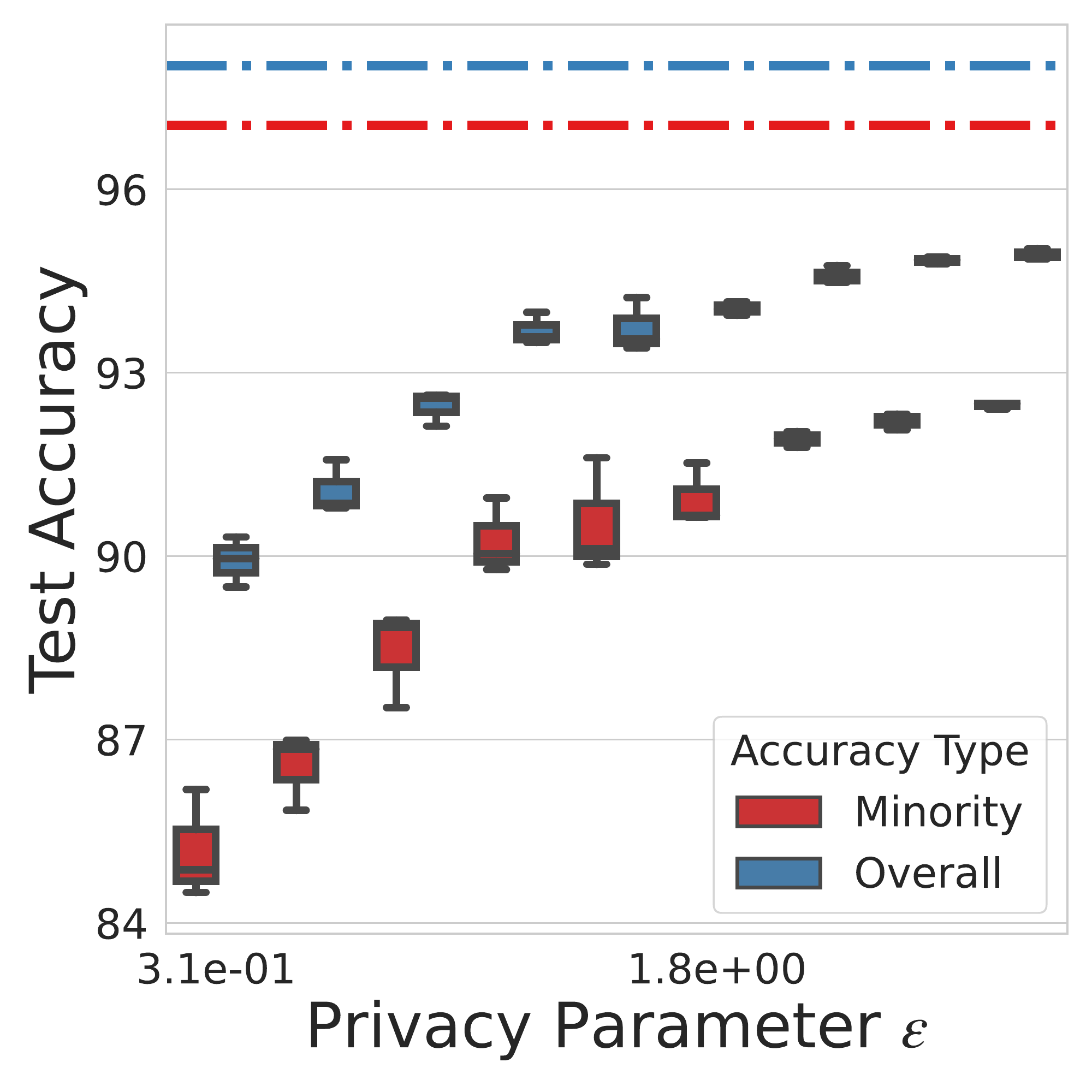_tex}%
                    \subcaption*{CelebA-80}
                    \end{subfigure}
                    \begin{subfigure}[c]{0.28\linewidth}
                      \centering
                      \def\svgwidth{0.99\columnwidth}
                      \input{./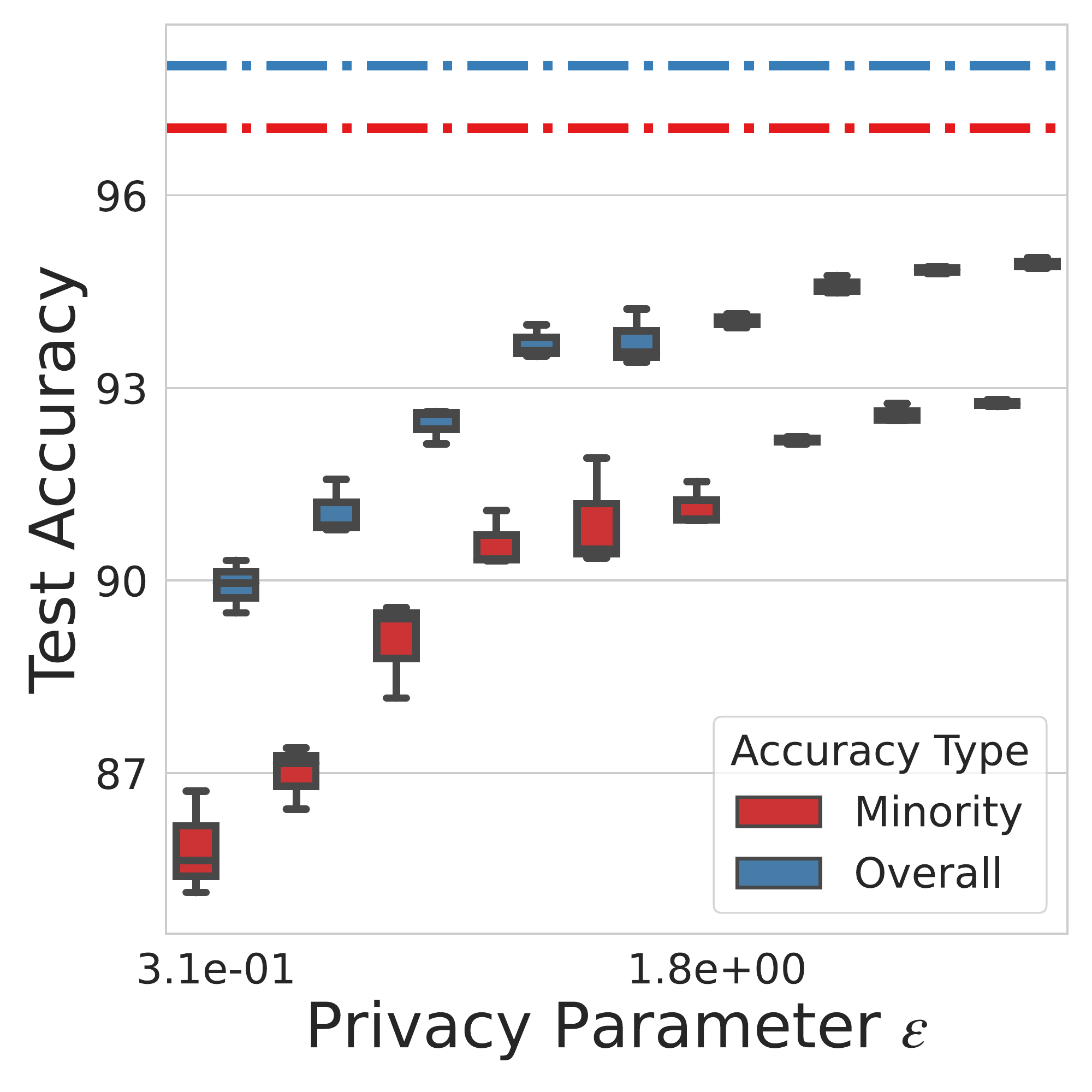_tex}%
                      \subcaption*{CelebA-100}
                      \end{subfigure}
            \caption{Minority group and overall accuracy against privacy.}
            \label{fig:celeb-wg-acc-acc-all-subpop}
  \end{figure}

\subsection{Additional details and experiments on \cifar}
  \label{app:more-p-CIFAR}
In~\Cref{fig:fair-acc-cifar10-large}, we plot the same results as~\Cref{fig:cifar10-fair-acc} but with the threshold \(\rho\) set to \(0.01\).
  \begin{figure}[H]\centering
  \begin{subfigure}[c]{0.33\linewidth}
  \centering
  \def\svgwidth{0.99\columnwidth}
  \input{./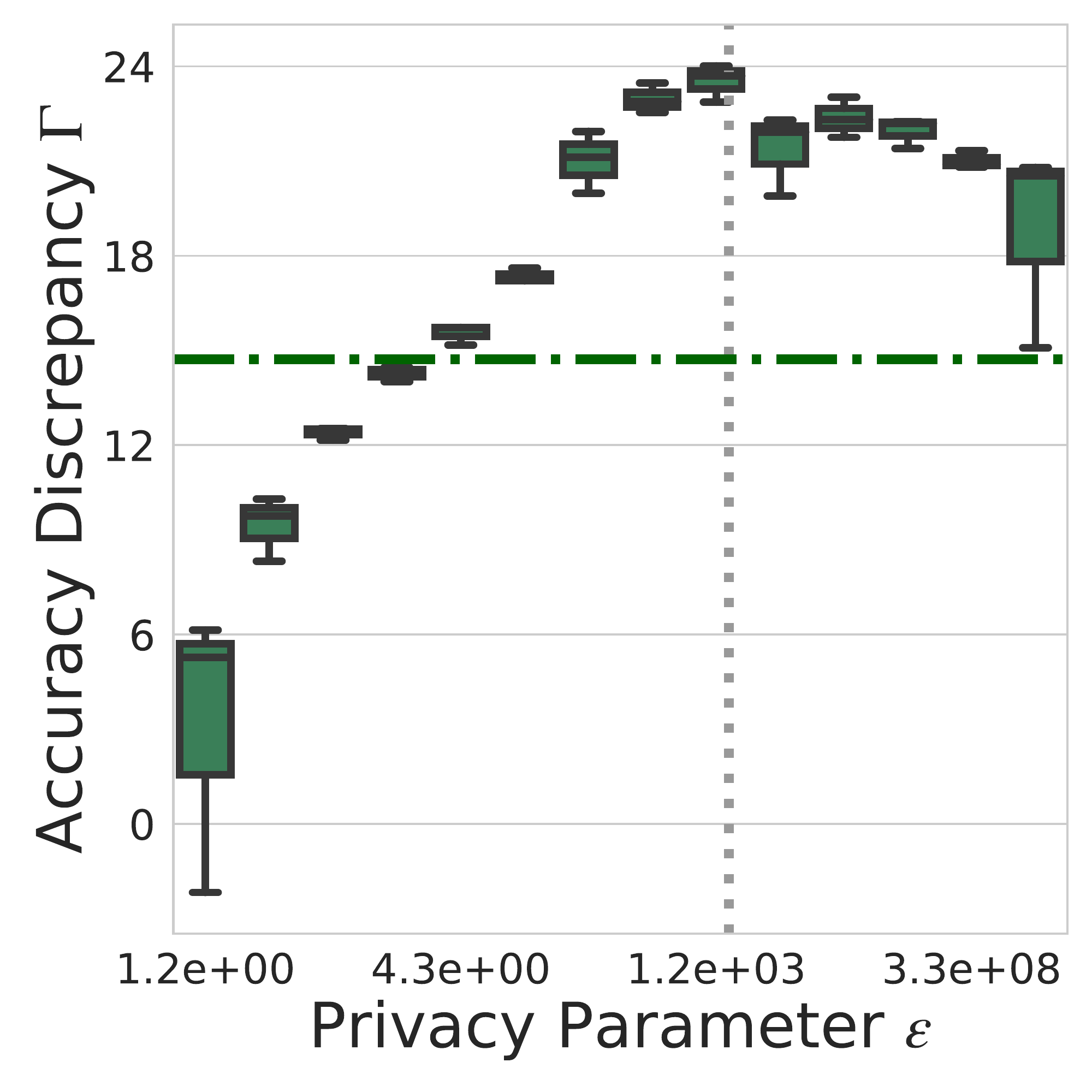_tex}%
  \end{subfigure}
  \begin{subfigure}[c]{0.33\linewidth}
      \centering
      \def\svgwidth{0.99\columnwidth}
      \input{./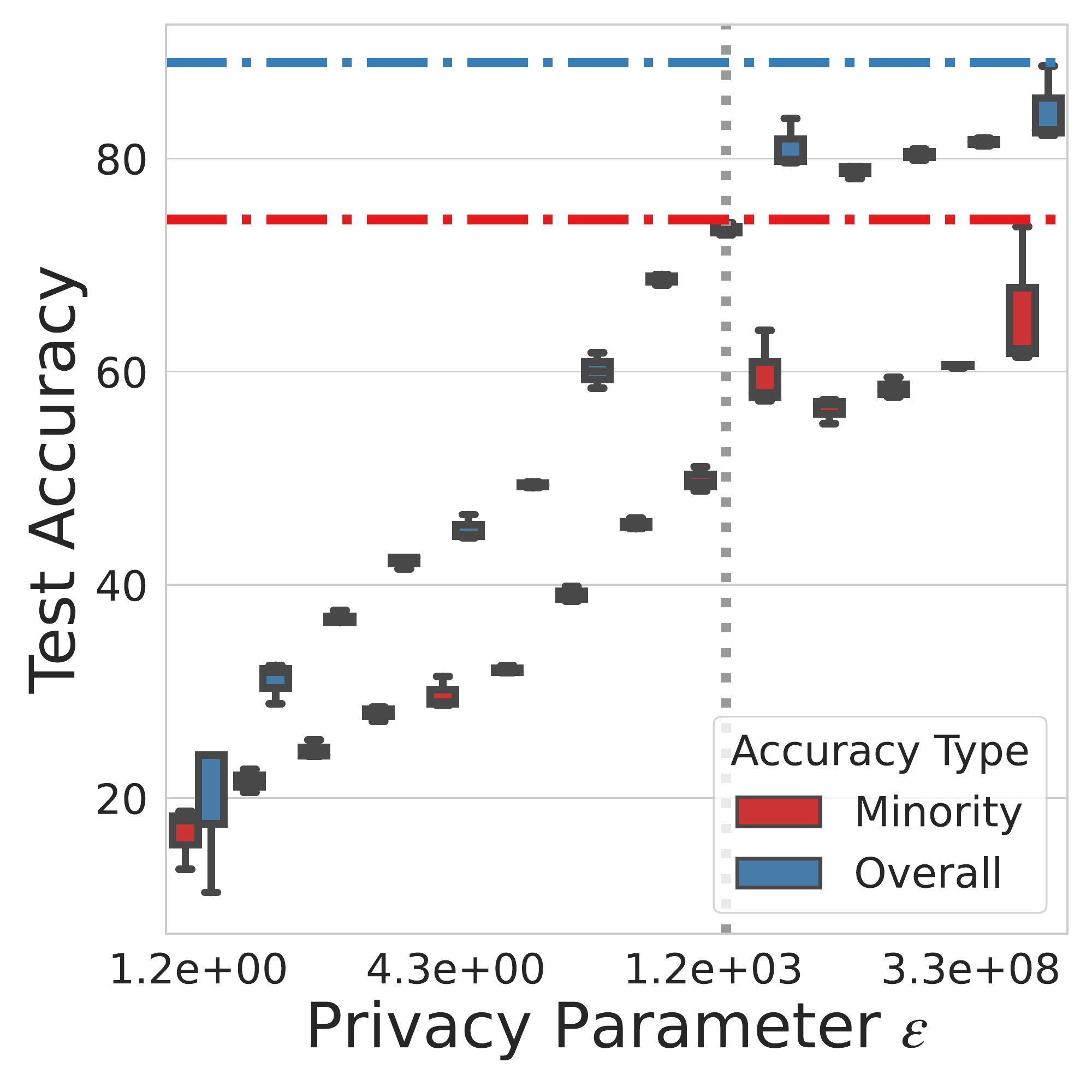_tex}%
      \end{subfigure}
    \caption{\cifar---{\bfseries Left} Disparate accuracy
    decreases~(low is fairer) with increasing \(\epsilon\)~(high is
    less privacy) i.e. fairness is worse for stricter privacy.
    {\bfseries Right} figure shows that both minority and overall
    accuracy increases with \(\epsilon\). Dashed lines show the
    corresponding metric of a vanilla model~(no privacy criterion).}
    \label{fig:fair-acc-cifar10-large}
\end{figure}

\end{document}